\newcommand{\wtilde}{\widetilde}
\newcommand{\Id}{\operatorname{Id}}
\newcommand{\RR}{\mathbb{R}}
\newcommand{\cD}{\mathcal{D}}
\renewcommand{\P}{\mathbb{P}}
\newcommand{\cT}{\mathsf{T}}
\newcommand{\cB}{\mathcal{B}}
\newcommand{\cN}{\mathcal{N}}
\newcommand{\colspan}{\text{colspan}}
\newcommand{\cL}{\mathcal{L}}
\newcommand{\diag}{\operatorname{diag}}
\newcommand{\Sym}{\text{Sym}}
\newcommand{\bI}{\mathbf{1}}
\newcommand{\PP}{\mathbb{P}}
\newcommand{\lrr}[1]{\left(#1\right)}
\newcommand{\gives}{\to}
\newcommand{\set}[1]{\{#1\}}
\newcommand{\oL}{\overline{\mathcal{L}}}
\newcommand{\wPi}{\wtilde{\Pi}}
\newcommand{\E}{\mathbb{E}}
\newcommand{\Tr}{\text{tr}}
\newcommand{\Var}{\text{Var}}
\newcommand{\norm}[1]{\left\|#1\right\|}
\newcommand{\Ee}[1]{\E\left[#1\right]}
\newcommand{\R}{\mathbb{R}}
\newcommand{\eps}{\varepsilon}
\newcommand{\tr}{\text{tr}}
\newcommand{\twiddle}[1]{\wtilde{#1}}
\newtheorem{theorem}{Theorem}[section]
\newtheorem{lemma}[theorem]{Lemma}
\newtheorem{prop}[theorem]{Proposition}
\newtheorem{corr}[theorem]{Corollary}
\newtheorem{remark}[theorem]{Remark}
\newtheorem{definition}[theorem]{Definition}
\numberwithin{equation}{section}
\numberwithin{figure}{section}
\title{How Data Augmentation affects Optimization for Linear Regression}
\author{%
  Boris Hanin$^*$ \\
Department of Operations Research\\ and Financial Engineering\\
Princeton University \\
\texttt{bhanin@princeton.edu}
\And Yi Sun\thanks{Equal contribution}\\
Department of Statistics\\
University of Chicago \\
\texttt{yisun@statistics.uchicago.edu}
}
\begin{document}

\maketitle

\begin{abstract}
Though data augmentation has rapidly emerged as a key tool for optimization
in modern machine learning, a clear picture of how augmentation schedules affect
optimization and interact with optimization hyperparameters such as learning
rate is nascent. In the spirit of classical convex optimization
and recent work on implicit bias, the
present work analyzes the effect of augmentation on optimization in
the simple convex setting of linear regression with MSE loss.

We find joint schedules for learning rate and data
augmentation scheme under which augmented gradient descent provably
converges and characterize the resulting minimum. Our results
apply to arbitrary augmentation schemes, revealing complex interactions between learning
rates and augmentations even in the convex setting. Our approach interprets
augmented (S)GD as a stochastic optimization
method for a time-varying sequence of proxy losses. This gives a
unified way to analyze learning rate, batch size, and augmentations
ranging from additive noise to random projections. From this perspective,
our results, which also give rates of convergence, can be viewed as
Monro-Robbins type conditions for augmented (S)GD. 
\end{abstract}

\section{Introduction} \label{sec:intro}

Data augmentation, a popular set of techniques in which data is augmented
(i.e. modified) at every optimization step, has become increasingly crucial
to training models using gradient-based optimization.  However, in modern overparametrized
settings where there are many different minimizers of the training loss, the specific
minimizer selected by training and the quality of the resulting model
can be highly sensitive to choices of augmentation hyperparameters.  As a result, practitioners
use methods ranging from simple grid search to Bayesian optimization and reinforcement
learning \citep{cubuk2019autoaugment, cubuk2020randaugment, ho2019population} 
to select and schedule augmentations by changing hyperparameters over the course of
optimization. Such approaches, while effective, often require
extensive compute and lack theoretical grounding.

These empirical practices stand in contrast to theoretical results from  the
implicit bias and stochastic optimization literature.  The extensive recent literature on
implicit bias \citep{gunasekar2020characterizing, soudry2018implicit, wu2021direction}
gives provable guarantees on which minimizer of the training loss is selected by GD and SGD
in simple settings, but considers cases without complex scheduling.  On the other
hand, classical theorems in stochastic optimization, building on the Monro-Robbins theorem in
\citep{robbins1951stochastic}, give provably optimal learning rate schedules for
strongly convex objectives.  However, neither line of work addresses
the myriad augmentation and hyperparameter choices crucial to gradient-based training
effective in practice.

The present work takes a step towards bridging this gap.  We consider two main questions
for a learning rate schedule and data augmentation policy:
\begin{enumerate}
  \item[1.] When and at what rate will optimization converge?
  \item[2.] Assuming optimization converges, what point does it converge to?
\end{enumerate}
To isolate the effect \textit{on optimization} of jointly scheduling learning rate
and data augmentation schemes, we consider these questions in the simple convex
model of linear regression with MSE loss:
\begin{equation}\label{E:linear-model}
\mathcal L(W; \cD) = \frac{1}{N} \norm{Y - WX}_F^2.
\end{equation}
In this setting, we analyze the effect of the data augmentation policy on the
optimization trajectory $W_t$
of augmented (stochastic) gradient descent\footnote{Both GD and SGD fall into our framework.
  To implement SGD, we take $\cD_t$ to be a subset of $\cD$.}, which follows the update equation 
\begin{equation}\label{E:aug-update}
W_{t+1} = W_t - \eta_t \nabla_W\cL(W; \cD_t)\big|_{W=W_t}.
\end{equation}
Here, the dataset $\cD=\lrr{X,Y}$ contains $N$ datapoints arranged into data matrices
$X \in \RR^{n \times N}$ and $Y \in \RR^{p \times N}$ whose columns consist
of inputs $x_i \in \RR^n$ and outputs $y_i \in \RR^p$. In this context, we take a flexible
definition \textit{data augmentation scheme} as any procedure that consists, at every
step of optimization, of replacing the dataset $\cD$ by a randomly augmented variant
which we denote by $\cD_t=\lrr{X_t,Y_t}$.  This framework is flexible enough to handle SGD and
commonly used augmentations such as additive noise \cite{grandvalet1997noise},
CutOut \cite{devries2017improved}, SpecAugment \cite{park2019specaugment}, Mixup \citep{zhang2017mixup},
and label-preserving transformations (e.g. color jitter,
geometric transformations \citep{simard2003best})).

We give a general answer to Questions 1 and 2 for arbitrary data augmentation schemes.
Our main result (Theorem \ref{thm:mr-main}) gives sufficient conditions for optimization
to converge in terms of the learning rate schedule and simple $2^\text{nd}$ and
$4^\text{th}$ order moment statistics of augmented data matrices.  When convergence
occurs, we explicitly characterize the resulting optimum in terms of these statistics.
We then specialize our results to (S)GD with modern augmentations such as additive
noise \citep{grandvalet1997noise} and random projections (e.g. CutOut \citep{devries2017improved}
and SpecAugment \cite{park2019specaugment}).
In these cases, we find learning rate and augmentation parameters which ensure
convergence with rates to the minimum norm optimum for overparametrized linear regression.
To sum up, our main contributions are:
\begin{itemize}
\item[1.] We analyze \textit{arbitrary} data augmentation schemes for linear regression
  with MSE loss, obtaining explicit sufficient conditions on the joint schedule of the data
  augmentation policy and the learning rate for (stochastic) gradient descent that guarantee
  convergence with rates in Theorems \ref{thm:mr-main} and \ref{thm:mr-quant-informal}.
  The resulting augmentation-dependent optimum encodes the ultimate effect of augmentation
  on optimization, and we characterize it in Theorem \ref{thm:mr-main}.  Our results
  generalize Monro-Robbins theorems \cite{robbins1951stochastic} to situations where the
  stochastic optimization objective may change at each step.

\item[2.]  We specialize our results to (stochastic) gradient descent with additive input
  noise (\S \ref{sec:add-noise}) or random projections of the input (\S \ref{sec:projections}),
  a proxy for the popular CutOut and SpecAugment augmentations \cite{devries2017improved, park2019specaugment}.
  In each case,
  we find that jointly scheduling learning rate and augmentation strength is critical for
  allowing convergence with rates to the minimum norm optimizer. We find specific
  schedule choices which guarantee this convergence with rates (Theorems \ref{thm:gauss-gd},
  \ref{thm:gauss-sgd}, and \ref{thm:proj-main}) and validate our results empirically
  (Figure \ref{fig:figure}).  This suggests explicitly adding learning rate schedules
  to the search space for learned augmentations as in \cite{cubuk2019autoaugment, cubuk2020randaugment},
  which we leave to future work. 
\end{itemize}

\section{Related Work}\label{S:lit-rev}

In addition to the extensive empirical work on data augmentation cited elsewhere
in this article, we briefly catalog other theoretical work on data augmentation
and learning rate schedules. The latter were first considered in the seminal work
\cite{robbins1951stochastic}. This spawned a vast literature on \textit{rates} of
convergence for GD, SGD, and their variants. We mention only the relatively recent
articles \cite{bach2013non, defossez2015averaged,bottou2018optimization, smith2018don, ma2018power}
and the references therein. The last of these, namely \cite{ma2018power}, finds optimal
choices of learning rate and batch size for SGD in the overparametrized linear setting.

A number of articles have also pointed out in various regimes that data augmentation
and more general transformations such as feature dropout correspond in part to
$\ell_2$-type regularization on model parameters, features, gradients, and Hessians.
The first article of this kind of which we are aware is
\cite{bishop1995training}, which treats the case of additive Gaussian noise
(see \S \ref{sec:add-noise}). More recent work in this direction includes
\cite{chapelle2001vicinal, wager2013dropout, lejeune2019implicit, liu2020penalty}. There are also several
articles investigating \textit{optimal} choices of $\ell_2$-regularization for linear
models (cf e.g. \cite{wu2018sgd, wu2020optimal, bartlett2020benign}). These articles
focus directly on the generalization effects of ridge-regularized minima but not on
the dynamics of optimization. We also point the reader to \cite{lewkowycz2020training},
which considers optimal choices for the weight decay coefficient empirically in neural
networks and analytically in simple models. 

We also refer the reader to a number of recent attempts to characterize the benefits
of data augmentation. In \cite{rajput2019does}, for example, the authors quantify
how much augmented data, produced via additive noise, is needed to learn positive margin
classifiers. \cite{chen2019invariance}, in contrast, focuses on the case of
data invariant under the action of a group. Using the group action to generate label-preserving
augmentations, the authors prove that the variance of any function depending only
on the trained model will decrease. This applies in particular to estimators for
the trainable parameters themselves. \cite{dao2019kernel} shows augmented
$k$-NN classification reduces to a kernel method for augmentations transforming
each datapoint to a finite orbit of possibilities.  It also gives a second order
expansion for the proxy loss of a kernel method under such augmentations
and interprets how each term affects generalization. Finally, the article
\cite{wu2020generalization} considers both label preserving and noising augmentations,
pointing out the conceptually distinct roles such augmentations play.

\section{Time-varying Monro-Robbins for linear models under augmentation} \label{sec:mr}

We seek to isolate the impact of data augmentation on optimization in
the simple setting of augmented (stochastic) gradient descent for linear
regression with the MSE loss \eqref{E:linear-model}. Since the augmented
dataset $\cD_t$ at time $t$ is a stochastic function of $\cD$, we may view
the update rule \eqref{E:aug-update} as a form of stochastic optimization for the
\textit{proxy loss at time $t$}
\begin{equation}\label{eq:proxy-opt}
\overline{\cL}_t(W) := \E_{\cD_t}\left[\cL(W; \cD_t)\right]
\end{equation}
which uses an unbiased estimate of the gradient of $\cL(W; \cD_t)$ from a
single draw of $\cD_t$.  The connection between data augmentation and this proxy loss
was introduced in \cite{bishop1995training, chapelle2001vicinal}, but we
now consider it in the context of stochastic optimization.
In particular, we
consider scheduling the augmentation, which allows the distribution of $\cD_t$
to change with $t$ and thus enables optimization to converge to points which
are not minimizers of the proxy loss $\overline{\cL}_t(W)$ at any fixed time.

Our main results, Theorems \ref{thm:mr-main} and \ref{thm:mr-quant-informal},
 provide sufficient conditions for jointly
scheduling learning rates and general augmentation schemes to guarantee
convergence of augmented gradient descent in the linear regression
model \eqref{E:linear-model}.  Before stating them, we first give examples
of augmentations falling into our framework, which we analyze
using our general results in \S \ref{sec:add-noise} and \S \ref{sec:projections}.
\begin{itemize}
\item \textbf{Additive Gaussian noise:} For SGD with batch
size $B_t$ and noise level $\sigma_t > 0$, this corresponds to 
$X_t = c_t(XA_t + \sigma_t \cdot G_t)$ and $Y_t = c_t YA_t$, 
where $G_t$ is a matrix of i.i.d. standard Gaussians, $A_t \in \R^{N \times B_t}$
has i.i.d. columns with a single non-zero entry equal to $1$ chosen uniformly
at random and $c_t = \sqrt{N/B_t}$ is a normalizing factor. The proxy loss is
\begin{equation}\label{E:gauss-proxy}
    \overline{\cL}_t(W) = \cL(W;\cD) + \sigma_t^2 \norm{W}_F^2, 
\end{equation}
which adds an $\ell_2$ penalty.  We analyze this case in \S \ref{sec:add-noise}. 
    
\item \textbf{Random projection:} This corresponds to $X_t = \Pi_t X$
and $Y_t = Y$, where $\Pi_t$ is an orthogonal projection onto a random subspace.
For $\gamma_t = \Tr(\Pi_t)/n$, the proxy loss is
\[
\overline{\cL}_t(W) = \frac{1}{N} \|Y - \gamma_t W X\|_F^2
+ \frac{1}{N}\gamma_t (1 - \gamma_t) \frac{1}{n} \|X\|_F^2 \cdot \|W\|_F^2
+ O(n^{-1}),
\]
adding a data-dependent $\ell_2$ penalty and applying Stein-type shrinkage
 on input data. We analyze this in \S \ref{sec:projections}. 
\end{itemize}

In addition to these augmentations, the augmentations below also fit
into our framework, and Theorems \ref{thm:mr-main} and
\ref{thm:mr-quant-informal} apply. However, in these cases, explicitly
characterizing the learned minimum beyond the general description
given in Theorems \ref{thm:mr-main} and \ref{thm:mr-quant-informal} is
more difficult, and we thus leave interpretion of these
specializations to future work.

\begin{itemize}
\item \textbf{Label-preserving transformations:} For a 2-D image viewed
as a vector $x \in \R^n$, geometric transforms (with pixel interpolation)
or other label-preserving transforms such as color jitter take the form
of linear transforms $\R^n \to \R^n$. We may implement such augmentations
in our framework by $X_t = A_t X$ and $Y_t = Y$ for some random transform
matrix $A_t$.

\item \textbf{Mixup:} To implement Mixup, we can take $X_t = X A_t$ and
$Y_t = Y A_t$, where $A_t \in \R^{N \times B_t}$ has i.i.d. columns containing
two random non-zero entries equal to $1 - c_t$ and $c_t$ with mixing
coefficient $c_t$ drawn independently from a $\text{Beta}(\alpha_t, \alpha_t)$ distribution
for a parameter $\alpha_t$.
\end{itemize}

\subsection{A general time-varying Monro-Robbins theorem}\label{S:mr}

\begin{figure*}[t!] \centering
\begin{subfigure}[t]{0.49\textwidth} \centering
  \includegraphics[width=\textwidth]{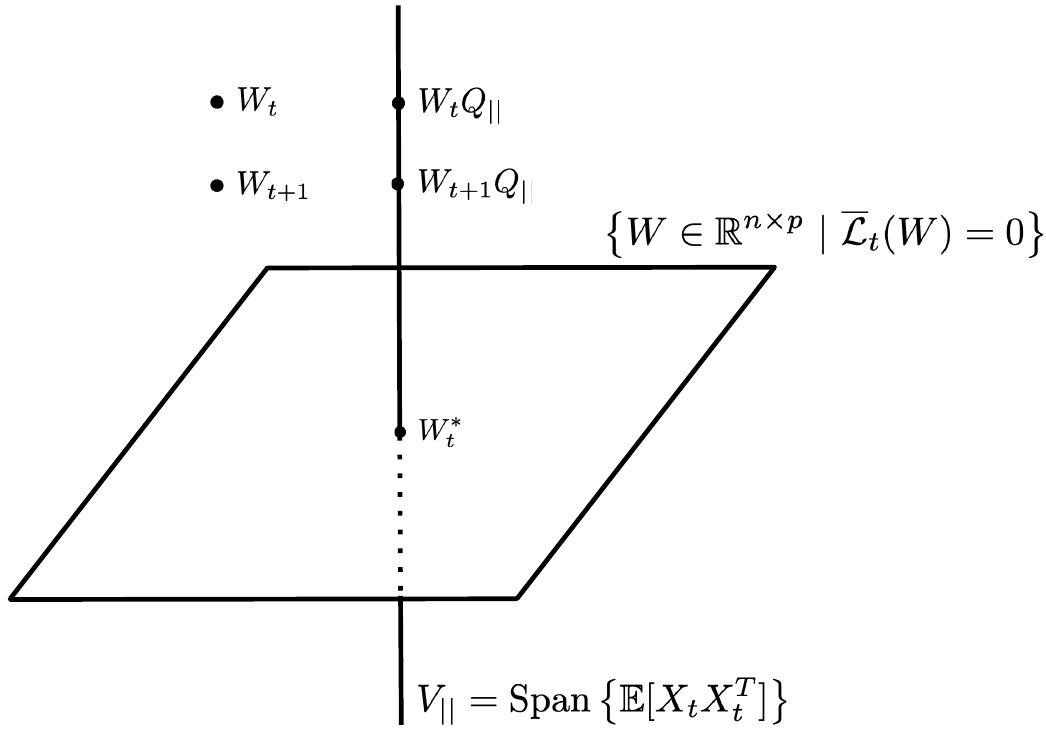}
  \caption{The time $t$ proxy loss $\overline{\mathcal L}_t$ is non-degenerate on $V_{\parallel}$ with  minimal norm minimizer $W_t^*$. The increment $W_{t+1}-W_{t}$ is in $V_{\parallel}$, so only  the projection $W_tQ_{\parallel}$ of $W_t$ onto $V_{\parallel}$ changes.}
  \label{fig:opt}
\end{subfigure}
\hfill
\begin{subfigure}[t]{0.49\textwidth} \centering
  \includegraphics[width=\textwidth]{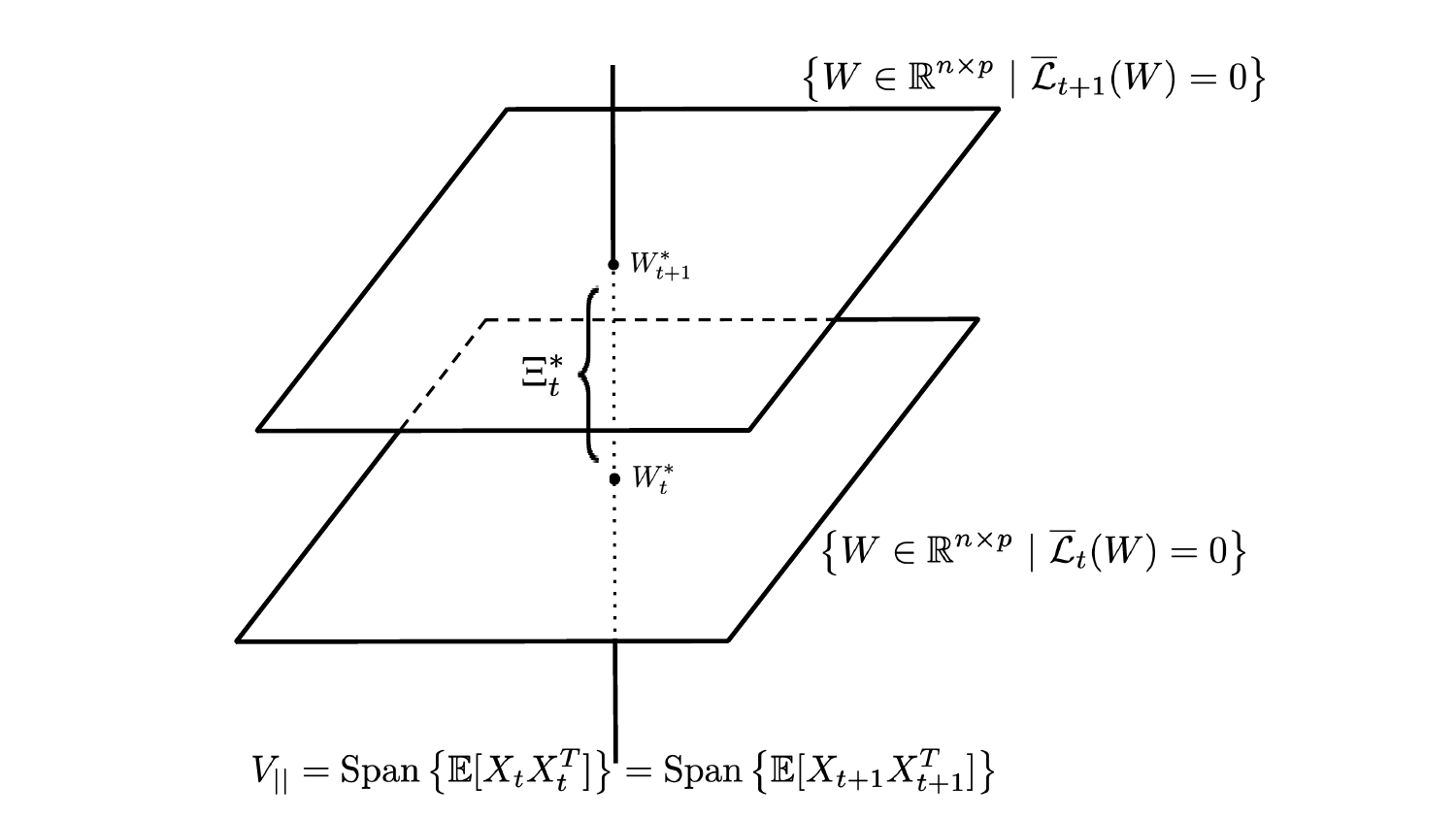}
  \caption{The proxy losses $\overline{\mathcal L}_t$ and $\overline{\mathcal L}_{t+1}$ at consecutive times will generally have different minimal norm minimizers $W_t^*,\, W_{t+1}^*$. The increment $\Xi_t^*=W_{t+1}^*-W_{t}^*$ measures this change.}
  \label{fig:opt2}
\end{subfigure}
\caption{Schematic diagrams of augmented optimization in the parameter space $\R^{n\times p}$.}
\end{figure*}

Given an augmentation scheme for the overparameterized linear model \eqref{E:linear-model}, the time $t$ gradient update at learning rate $\eta_t$ is 
\begin{equation} \label{eq:aug-update}
  W_{t + 1} := W_t + \frac{2\eta_t}{N} \cdot (Y_t - W_t X_t) X_t^\cT,
\end{equation}
where $\mathcal D_t = (X_t,Y_t)$ is the augmented dataset at time $t.$ The
minimum norm minimizer of the corresponding proxy loss $\overline{\cL}_t$
(see \eqref{eq:proxy-opt}) is 
\begin{equation} \label{eq:min-norm-opt}
W^*_t := \E[Y_tX_t^\cT] \E[X_t X_t^\cT]^+,
\end{equation}
where $\E[X_t X_t^\cT]^+$ denotes the Moore-Penrose pseudo-inverse (see Figure \ref{fig:opt}). In this section we state a rigorous result, Theorem \ref{thm:mr-main}, giving
sufficient conditions on the learning rate $\eta_t$ and distributions of the
augmented matrices $X_t, Y_t$ under which augmented gradient descent converges. To state it, note that \eqref{eq:aug-update} implies that each row of
$W_{t + 1} - W_t$ is contained in the column span of the Hessian $X_tX_t^{\cT}$
of the augmented loss and therefore almost surely belongs to the subspace
\begin{equation}\label{E:V-def}
V_{\parallel} := \text{column span of }\E[X_t X_t^\cT] \subseteq \RR^n,
\end{equation}
as illustrated in Figure \ref{fig:opt}. The reason is that, in the orthogonal
complement to $V_{\parallel}$, the augmented loss $\cL(W;\mathcal D_t)$ has zero
gradient with probability $1$. To ease notation, we assume that $V_{\parallel}$ is
independent of $t$. This assumption holds for additive Gaussian noise, random
projection, MixUp, SGD, and their combinations. It is not necessary in general,
however, and we refer the interested reader to Remark \ref{rem:v-par} in the Appendix
for how to treat the general case.

Let us denote by $Q_{\parallel}: \RR^n \to \RR^n$ the orthogonal projection onto
$V_{\parallel}$ (see Figure \ref{fig:opt}). As we already pointed out, at time $t$, gradient descent leaves
the matrix of projections $W_{t}(\Id-Q_{\parallel})$ of each row of $W_t$ onto the orthogonal complement of
$V_{\parallel}$ unchanged. In contrast, $\|W_tQ_{\parallel}-W_t^*\|_F$ decreases 
at a rate governed by the smallest positive eigenvalue $\lambda_{\text{min},V_{\parallel}}\lrr{\Ee{X_tX_t^\cT}}$
of the Hessian for the proxy loss $\overline{\mathcal L}_t$, which is obtained
by restricting its full Hessian $\Ee{X_tX_t^\cT}$ to $V_{\parallel}$. Moreover,
whether and at what rate $W_tQ_{\parallel}-W_t^*$ converges to $0$ depends on how quickly
the distance 
\begin{equation} \label{eq:xi-def}
\Xi_t^* := W_{t + 1}^* - W_t^*
\end{equation}
between proxy loss optima at successive steps tends to zero (see Figure \ref{fig:opt2}). 

\begin{theorem}[Special case of Theorem \ref{thm:mr}] \label{thm:mr-main}
Suppose that $V_\parallel$ is independent of $t$, that the learning rate satisfies
$\eta_t \to 0$, that the proxy optima satisfy
  \begin{equation} \label{eq:mr2-main}
  \sum_{t = 0}^\infty \|\Xi_t^*\|_F < \infty,
\end{equation}
ensuring the existence of a limit $W_\infty^* := \lim_{t \to \infty} W_t^*$, that
  \begin{equation} \label{eq:mr1-main}
    \sum_{t = 0}^\infty \eta_t \lambda_{\text{min}, V_\parallel}(\E[X_t X_t^\cT]) = \infty
  \end{equation}
and finally that
\begin{equation} \label{eq:mr3-main}
\sum_{t = 0}^\infty \eta_t^2 \E\Big[ \|X_t X_t^\cT - \E[X_t X_t^\cT]\|_F^2
 + \|Y_t X_t^\cT - \E[Y_t X_t^\cT]\|_F^2\Big] < \infty.
\end{equation}
Then, for any initialization $W_0$, we have that $W_t Q_\parallel$ converges in probability to $W_\infty^*$.  
\end{theorem}

If the same augmentation is applied
with different strength parameters at each step $t$ (e.g. the noise level $\sigma_t^2$ for additive Gaussian noise), we may specialize Theorem \ref{thm:mr-main}
to this augmentation scheme.  More precisely, translating conditions
(\ref{eq:mr2-main}), (\ref{eq:mr1-main}), (\ref{eq:mr3-main}) 
into conditions on the learning rate and augmentation strength gives
conditions on the schedule for $\eta_t$ and these strength parameters to ensure
convergence.  In \S \ref{sec:add-noise} and \S \ref{sec:projections},
we do this for additive Gaussian noise and random projections.


When the augmentation scheme is static in $t$, Theorem \ref{thm:mr-main}
reduces to a standard Monro-Robbins theorem \cite{robbins1951stochastic} for
the (static) proxy loss $\overline{\cL}_t(W)$. As in that setting, condition
(\ref{eq:mr1-main}) enforces that the learning trajectory travels far enough
to reach an optimum, and the summand in Condition (\ref{eq:mr3-main}) 
bounds the variance of the gradient of the augmented loss $\cL(W; \cD_t)$
to ensure the total variance of the stochastic
gradients is summable. Condition (\ref{eq:mr2-main}) is new and enforces that
the minimizers $W_t^*$ of the proxy losses $\overline{\cL}_t(W)$ change slowly
enough for augmented optimization procedure to keep pace.

\subsection{Convergence rates and scheduling for data augmentation} \label{sec:rates-sched}

Refining the proof of Theorem \ref{thm:mr-main} gives rates of convergence
for the projections $W_t Q_{\parallel}$ of the weights onto $V_{\parallel}$ to
the limiting optimum $W_\infty^*$. When the quantities in
Theorem \ref{thm:mr-main} have power law decay, we obtain the following result.

\begin{theorem}[Special case of Theorem \ref{thm:mr-quant}] \label{thm:mr-quant-informal}
Suppose $V_\parallel$ is independent of $t$ and the learning rate satisfies $\eta_t \to 0$.
Moreover assume that for some  $0 < \alpha < 1 < \beta_1, \beta_2$ and $\gamma > \alpha$ we have
\begin{equation} \label{eq:rate-inf1}
  \eta_t \lambda_{\text{min}, V_\parallel} (\E[X_t X_t^\cT]) = \Omega(t^{-\alpha}),\qquad \|\Xi_t^*\|_F = O(t^{-\beta_1})
\end{equation}
as well as
\begin{equation} \label{eq:rate-inf2}
  \eta_t^2 \E[\|X_t X_t^\cT - \E[X_t X_t^\cT]\|^2_2] = O(t^{-\gamma})
\end{equation}
and
\begin{equation} \label{eq:rate-inf3}
\eta_t^2 \E\Big[ \|\E[W_t] (X_t X_t^\cT - \E[X_t X_t^\cT])
- (Y_t X_t^\cT - \E[Y_t X_t^\cT]) \|_F^2 \Big] = O(t^{-\beta_2}).
\end{equation}
Then, for any initialization $W_0$, we have for any $\eps > 0$ the following convergence in probability:
\[
t^{\min\{\beta_1 - 1, \frac{\beta_2 - \alpha}{2}\} - \eps} \|W_tQ_{\parallel} - W_\infty^* \|_F \overset{p} \to 0.
\]    
\end{theorem}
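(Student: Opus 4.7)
The plan is to decompose
\[
W_tQ_\parallel - W_\infty^* \;=\; E_t + (W_t^* - W_\infty^*),\qquad E_t := W_tQ_\parallel - W_t^*,
\]
and control the two summands separately. The second is deterministic: since $W_\infty^* - W_t^* = \sum_{s\geq t}\Xi_s^*$, combining $\|\Xi_s^*\|_F = O(s^{-\beta_1})$ with $\beta_1 > 1$ yields $\|W_t^* - W_\infty^*\|_F = O(t^{-(\beta_1-1)})$ by tail–integral comparison. The heart of the proof is then a quantitative bound on $e_t^2 := \E[\|E_t\|_F^2]$, from which Markov's inequality gives convergence in probability of $\|E_t\|_F$ at half the mean-square rate, and the triangle inequality recovers the advertised rate for $\|W_tQ_\parallel - W_\infty^*\|_F$.

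First I would derive a one-step recursion for $E_t$. Writing $M_t := \E[X_tX_t^\cT]$ and using the defining identity $\E[Y_tX_t^\cT]Q_\parallel = W_t^* M_t$ that follows from \eqref{eq:min-norm-opt}, together with $M_tQ_\parallel = M_t$ (since $V_\parallel$ is the column span of $M_t$), the augmented update \eqref{eq:aug-update} yields
\[
E_{t+1} \;=\; E_t\!\lrr{\Id - \tfrac{2\eta_t}{N}M_t} - \Xi_t^* + N_t,
\]
where $N_t := \tfrac{2\eta_t}{N}\bigl[(Y_tX_t^\cT - \E[Y_tX_t^\cT]) - W_t(X_tX_t^\cT - M_t)\bigr]Q_\parallel$ is mean-zero conditional on the past $\mathcal{F}_{t-1}$. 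Since the rows of $E_t$ lie in $V_\parallel$ and $\eta_t \to 0$, for large $t$ the restriction of $\Id - \tfrac{2\eta_t}{N}M_t$ to $V_\parallel$ has operator norm at most $1 - a_t$ with $a_t := \tfrac{2\eta_t}{N}\lambda_{\min,V_\parallel}(M_t) = \Omega(t^{-\alpha})$ by \eqref{eq:rate-inf1}.

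Taking expected squared Frobenius norms kills the cross term with $N_t$ by its centering, and AM-GM with weight $a_t$ absorbs the cross-term against $\Xi_t^*$, giving
\[
e_{t+1}^2 \;\leq\; (1-a_t)\,e_t^2 + \frac{\|\Xi_t^*\|_F^2}{a_t} + \E[\|N_t\|_F^2].
\]
To estimate $\E[\|N_t\|_F^2]$ I would split $W_t = \E[W_t] + (W_t - \E[W_t])$. The $\E[W_t]$-piece is covered directly by \eqref{eq:rate-inf3} and contributes $O(t^{-\beta_2})$. For the fluctuation piece, independence of $W_t$ from the fresh time-$t$ augmentation factorizes the expectation, and using \eqref{eq:rate-inf2} together with the variance bound $\E[\|W_t - \E[W_t]\|_F^2] \leq e_t^2$ — valid because $W_t(\Id - Q_\parallel)$ is deterministic throughout optimization — yields a contribution of order $O(t^{-\gamma})\,e_t^2$. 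Since $\gamma > \alpha$, for large $t$ the recursion simplifies to
\[
e_{t+1}^2 \;\leq\; (1 - a_t/2)\,e_t^2 + O(t^{-\beta_2}) + O(t^{-(2\beta_1 - \alpha)}).
\]
A standard power-law Gronwall estimate — if $e_{t+1}^2 \leq (1 - ct^{-\alpha})e_t^2 + O(t^{-\beta})$ with $0 < \alpha < 1$ and $\beta > \alpha$, then $e_t^2 = O(t^{-(\beta - \alpha)})$ — applied to each noise source gives $e_t^2 = O(t^{-(\beta_2 - \alpha)}) + O(t^{-(2\beta_1 - 2\alpha)})$. The second summand is $o(t^{-2(\beta_1 - 1)})$ since $\alpha < 1$, hence dominated by the tracking lag $O(t^{-(\beta_1 - 1)})$ after taking square roots. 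Markov then delivers the stated rate $t^{-\min\{\beta_1 - 1,\, (\beta_2 - \alpha)/2\} + \eps}$.

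The main obstacle is the bootstrap controlling $\E[\|N_t\|_F^2]$: the naive bound involves $W_t$ itself, whereas hypothesis \eqref{eq:rate-inf3} only bounds the gradient variance when $W_t$ is replaced by $\E[W_t]$. One must therefore isolate the fluctuation $(W_t - \E[W_t])(X_tX_t^\cT - M_t)Q_\parallel$, show that its contribution is itself controlled by the very $e_t^2$ being estimated, and then fold it into the contraction coefficient — this is precisely what the hypothesis $\gamma > \alpha$ allows. Establishing the power-law Gronwall lemma and bookkeeping the $\eps$-loss from its inductive application (and from Markov) is routine but must be executed carefully to match the advertised exponent.
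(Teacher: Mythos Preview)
Your argument is correct and takes a genuinely different route from the paper. The paper separates $W_tQ_\parallel - W_\infty^*$ into the bias $\E[W_t]Q_\parallel - W_t^*$, the tracking lag $W_t^* - W_\infty^*$, and the fluctuation $(W_t - \E[W_t])Q_\parallel$, handling each via a matrix-valued recursion. The crucial step for the fluctuation is that $Z_t = \E[(W_t-\E[W_t])^\cT(W_t-\E[W_t])]$ satisfies a \emph{two-sided} decay $Z_{t+1} = \E[(\Id - A_t)Z_t(\Id - A_t)] + C_t$ with the \emph{random} contraction $A_t = \tfrac{2\eta_t}{N}X_tX_t^\cT$; to control $\E[\|\prod_r(\Id - A_r)Q_\parallel\|_2^2]$ the paper invokes a matrix-product concentration result of Huang et al.\ (Theorem~\ref{thm:mat-prod}), and this is precisely where hypotheses \eqref{eq:rate-inf1} and \eqref{eq:rate-inf2} are combined into the condition \eqref{eq:alpha2} of the appendix theorem.

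You instead keep $E_t = W_tQ_\parallel - W_t^*$ intact and derive a scalar recursion for $e_t^2 = \E[\|E_t\|_F^2]$ in which the contraction matrix is the \emph{deterministic} $\Id - \tfrac{2\eta_t}{N}M_t$, with all randomness pushed into the martingale increment $N_t$. The price is that $\E[\|N_t\|_F^2]$ now contains the extra term $(W_t-\E[W_t])(X_tX_t^\cT - M_t)$, which you bound by $O(t^{-\gamma})\,e_t^2$ using independence and \eqref{eq:rate-inf2}, and then absorb into the contraction because $\gamma > \alpha$. This bootstrap entirely replaces the matrix-product concentration: it is more elementary, and it makes transparent why the hypothesis $\gamma > \alpha$ is exactly the threshold needed. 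The paper's approach, on the other hand, yields separate rates for bias and variance and isolates the role of the product-norm condition \eqref{eq:alpha2}, which could in principle be verified by means other than \eqref{eq:rate-inf1}--\eqref{eq:rate-inf2}.
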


It may be surprising that $\E[W_t]$ appears in condition \eqref{eq:rate-inf3}. Note
that $\E[W_t]$ is the gradient descent trajectory for the time-varying sequence
of deterministic proxy losses $\overline{\cL}_t(W)$. To apply Theorem \ref{thm:mr-quant-informal},
one may first study this deterministic problem to show that $\E[W_t]$ converges
to $W_\infty^*$ at some rate and then use \eqref{eq:rate-inf3} to obtain rates
of convergence of the true stochastic trajectory $W_t$ to $W_\infty^*$.

In \S \ref{sec:add-noise} and \S \ref{sec:projections} below,
we specialize Theorems \ref{thm:mr-main} and \ref{thm:mr-quant-informal} to obtain
rates of convergence for specific augmentations. Optimizing the learning rate
and augmentation parameter schedules in Theorem \ref{thm:mr-quant-informal} yields
power law schedules with convergence rate guarantees in these settings.

\section{Special Case: Additive Gaussian Noise} \label{sec:add-noise}

We now specialize our main results Theorem \ref{thm:mr-main} and \ref{thm:mr-quant-informal}
to the commonly used additive noise augmentation \citep{grandvalet1997noise}.  Under gradient descent, this
corresponds to taking augmented data matrices
\[
X_t = X + \sigma_t G_t \qquad \text{and} \qquad Y_t = Y,
\]
where $G_t\in \R^{n\times N}$ are independent matrices with i.i.d. standard Gaussian entries,
and $\sigma_t > 0$ is a strength parameter.  Under SGD (with replacement), this corresponds to
augmented data matrices
\[
X_t = c_t(XA_t + \sigma_t \cdot G_t) \qquad \text{and} \qquad Y_t = c_t YA_t,
\]
where $A_t \in \R^{N \times B_t}$ has i.i.d. columns with a single non-zero entry equal to $1$
chosen uniformly at random and $c_t = \sqrt{N/B_t}$ is a normalizing factor.  In both cases,
the proxy loss is
\begin{equation} \label{eq:noise-proxy}
\cL_{\sigma_t}(W) := \frac{1}{N} \|Y - W X\|_F^2 + \sigma_t^2 \|W\|_F^2,
\end{equation}
which to our knowledge was first discovered in \cite{bishop1995training}.

Before stating our precise results, we first illustrate how jointly scheduling learning rate
and augmentation strength impacts GD for overparameterized linear regression, where
\begin{equation}\label{E:overparam}
  N ~=~\#\text{data points} ~<~\text{input dimension }~=~n.      
\end{equation}
The inequality \eqref{E:overparam} ensures $\cL(W;\cD)$ has infinitely many minima, of which
we consider the \emph{minimum norm} minimizer
\[
W_{\min} := YX^\cT (X X^\cT)^+
\]
most desirable. Notice that steps of vanilla gradient descent
\begin{equation} \label{eq:unaug-update}
  W_{t + 1} = W_t - \frac{2\eta_t}{N} \cdot (Y - W_t X) X^\cT
\end{equation}
change the rows of the weight matrix $W_t$ only in the column space $V_{\parallel} = \colspan(XX^\cT) \subseteq \RR^n$.
Because $V_{\parallel}\neq \RR^n$ by the overparameterization assumption \eqref{E:overparam},
minimizing $\cL(W;\cD)$ \textit{without augmentation} cannot change $W_{t, \perp}$, the matrix whose rows are 
the components of the rows of $W_t$ orthogonal to $V_\parallel$. This means that GD converges to
the minimal norm optimizer $W_{\min}$ only when each row of $W_0$ belongs to $V_\parallel$.  As this explicit
initialization may not be available for more general models, we seek to find augmentation
schedules which allow GD or SGD to converge to $W_{\min}$ without it, in the spirit
of recent studies on implicit bias of GD.

\begin{figure}[ht!]
  \centering
  \setlength\tabcolsep{10pt} 
  \begin{tabular}{cc}
    \includegraphics[width=2.5in]{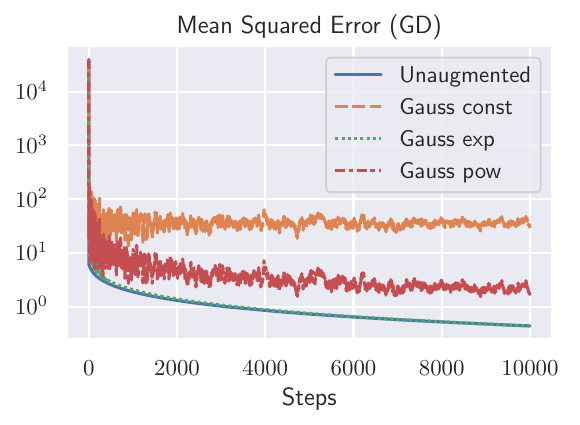} & \includegraphics[width=2.5in]{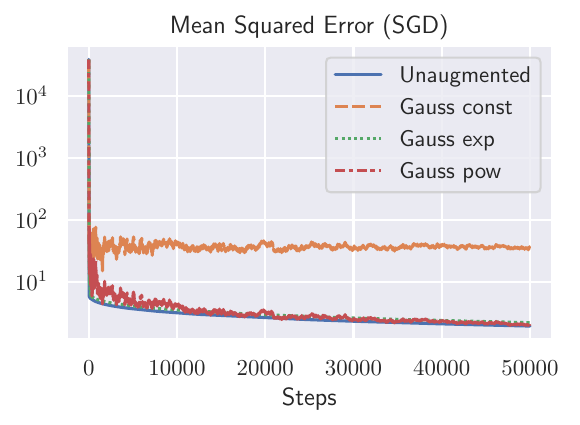} \\
    \includegraphics[width=2.5in]{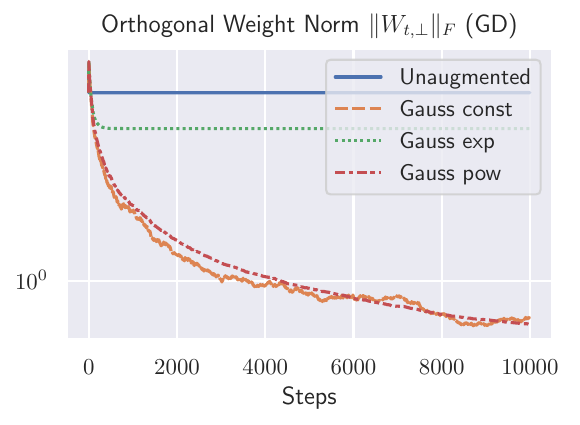} & \includegraphics[width=2.5in]{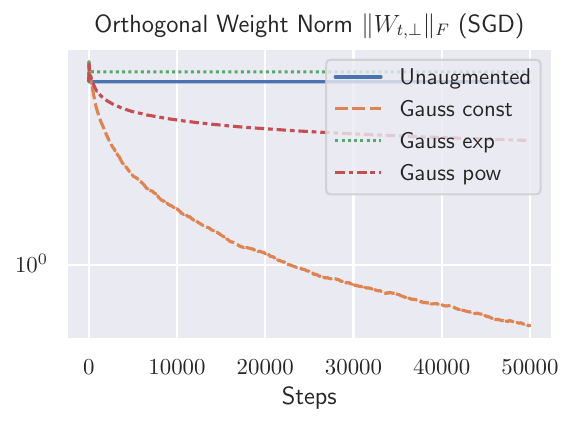}
  \end{tabular}
  \caption{MSE and $\|W_{t, \perp}\|_F$ for optimization trajectories of
    GD with additive Gaussian noise augmentation and SGD with
    additive Gaussian noise augmentation under different augmentation schedules. For both GD and SGD, jointly scheduling learning
    rate and noise variance to have polynomial decay is necessary
    for optimization to converge to the minimal norm solution $W_{\min}$.
    Gauss const, Gauss exp, and Gauss pow have Gaussian noise
    augmentation with $\sigma_t^2 = 2, 2e^{-0.02 t}, 2(1 + \frac{t}{50})^{- 0.33}$, respectively.
    All other details are given in \S \ref{sec:experiment}.
  }
  \label{fig:figure}
\end{figure}

\subsection{Joint schedules for augmented GD with additive noise to converge to $W_{\min}$}

We specialize Theorems \ref{thm:mr-main} and \ref{thm:mr-quant-informal} to additive Gaussian noise to
show that when the learning rate $\eta_t$ and noise strength $\sigma_t$ are jointly scheduled
to converge to $0$ at appropriate rates, augmented gradient descent can find the minimum norm optimizer $W_{\min}$.

\begin{theorem} \label{thm:gauss-gd} 
  Consider any joint schedule of the learning rate $\eta_t$ and noise variance
  $\sigma_t^2$ in which both $\eta_t$ and $\sigma_t^2$ tend to $0$ and $\sigma_t$ is
  non-decreasing. If the joint schedule satisfies
  \begin{equation} \label{eq:gauss-mr}
    \sum_{t = 0}^\infty \eta_t \sigma_t^2 = \infty \qquad \text{and} \qquad
    \sum_{t = 0}^\infty \eta_t^2 \sigma_t^2 < \infty,
  \end{equation}
  then the weights $W_t$ converge in probability
  to the minimal norm optimum $W_{\text{min}}$ regardless of the initialization.
  Moreover, the first condition in \eqref{eq:gauss-mr} is necessary for $\E[W_t]$ to converge to $W_{\min}$.

  If we further have $\eta_t = \Theta(t^{-x})$ and $\sigma_t^2 = \Theta(t^{-y})$ with $x, y > 0$, $x + y < 1$,
  and $2x + y > 1$ so that $\eta_t$ and $\sigma_t^2$ satisfy (\ref{eq:gauss-mr}), then for
  small $\eps > 0$, we have $t^{\min\{y, \frac{1}{2}x\} - \eps} \|W_t - W_{\text{min}}\|_F \overset{p} \to 0$.
\end{theorem}

The conditions of (\ref{eq:gauss-mr}) require that $\eta_t$ and $\sigma_t$ be
jointly scheduled correctly to ensure convergence to $W_{\min}$ and are akin to
the Monro-Robbins conditions \citep{robbins1951stochastic} for convergence
of stochastic gradient methods. 
We now give an heuristic
explanation for why the first condition from \eqref{eq:gauss-mr} is necessary.
In this setting, the \textit{average} trajectory of augmented gradient descent
\begin{equation}\label{E:gauss-avg-gd}
  \E[W_{t+1}] = \E[W_t] -\eta \nabla_{W}\cL_{\sigma_t}(W)\big|_{W = \E[W_t]}
\end{equation}
is given by gradient descent on the ridge regularized losses $\mathcal L_{\sigma_t}(W)$.
If $\sigma_t\equiv \sigma >0$ is constant, then $\E[W_t]$ will
converge to the unique minimizer $W_{\sigma}^*$ of the ridge regularized loss
$\cL_\sigma$. This point $W_\sigma^*$ has zero orthogonal component, but \emph{does not}
minimize the original loss $\cL$.

To instead minimize $\cL$, we must choose a schedule satisfying $\sigma_t\gives 0$.
For the expected optimization trajectory to converge to $W_{\min}$ for such a schedule,
the matrix $\E[W_{t, \perp}]$ of components of rows of $\Ee{W_{t}}$ orthogonal to $V_{\parallel}$ must converge to $0$. The
GD steps for this matrix yield
\begin{equation} \label{E:gauss-perp}
\E[W_{t + 1, \perp}] 
= (1 - \eta_t \sigma_t^2) \E[W_{t, \perp}]
= \prod_{s = 0}^t (1 - \eta_s \sigma_s^2) \E[W_{0, \perp}].
\end{equation}
Because $\eta_t \sigma_t^2$ approaches $0$, this implies the necessary condition
$\sum_{t=0}^\infty \eta_t\sigma_t^2 = \infty$ for $\E[W_{t,\perp}]\gives 0$.

This argument illustrates a key intuition behind the conditions
(\ref{eq:gauss-mr}).  The augmentation strength $\sigma_t$ must decay to $0$ to
allow convergence to a true minimizer of the training loss, but this convergence
must be carefully tuned to allow the implicit regularization of the noising
augmentation to kill the orthogonal component of $W_t$ in expectation. In a similar
manner, the second expression in \eqref{eq:gauss-mr} measures the total variance
of the gradients and ensures that only a finite amount of noise is injected
into the optimization.

Although Theorem \ref{thm:gauss-gd} is stated for additive Gaussian noise,
an analogous version holds for arbitrary additive noise with bounded moments.
Moreover, optimizing over $x, y$, the fastest rate of convergence guaranteed by
Theorem \ref{thm:gauss-gd} is obtained by setting $\eta_t = t^{-2/3 + \eps},\, \sigma_t^2 = t^{-1/3}$
and results in a $O(t^{-1/3+\eps})$ rate of convergence. It is not evident that this
is best possible, however.

\subsection{Joint schedules for augmented SGD with additive noise to converge to $W_{\min}$}
To conclude our discussion of additive noise augmentations, we present the following result
on convergence to $W_{\min}$ in the presence of both Gaussian noise and SGD (where datapoints
in each batch are selected with replacement).
\begin{theorem} \label{thm:gauss-sgd}
Suppose $\sigma_t^2\gives 0$ is decreasing, $\eta_t \to 0$, and we have
\begin{equation} \label{eq:sgd-noise-mr}
  \sum_{t = 0}^\infty \eta_t \sigma_t^2  = \infty \qquad \text{and} \qquad
  \sum_{t = 0}^\infty \eta_t^2  < \infty.
\end{equation}
Then the trajectory $W_t$ of SGD with additive noise converges in probability to $W_{\text{min}}$ for any initialization.  If
we further have $\eta_t = \Theta(t^{-x})$ and $\sigma_t^2 = \Theta(t^{-y})$ with $x > \frac{1}{2}$,
$y > 0$ and $x + y < 1$, then for any $\eps > 0$ we have that $t^{\min\{y, \frac{1}{2}x\} - \eps} \|W_t - W_{\text{min}}\|_F \overset{p} \to 0$.
\end{theorem}

Theorem \ref{thm:gauss-sgd} is the analog of Theorem \ref{thm:gauss-gd} for mini-batch SGD
and provides an example where our framework can handle the \emph{composition} of two augmentations,
namely additive noise and mini-batching.  The difference between conditions (\ref{eq:sgd-noise-mr})
for SGD and (\ref{eq:gauss-mr}) for GD accounts for the fact that the batch selection changes the
scale of the gradient variance at each step.
Finally, Theorem \ref{thm:gauss-sgd} reveals a qualitative difference between SGD with
and without additive noise. If $\eta_t$ has power law decay, the convergence of noiseless SGD
(Theorem \ref{thm:sgd}) is exponential in $t$, while Theorem \ref{thm:gauss-sgd} gives power law rates.

\subsection{Experimental validation} \label{sec:experiment}

To validate Theorems \ref{thm:gauss-gd} and \ref{thm:gauss-sgd}, we ran augmented GD and SGD
with additive Gaussian noise on $N = 100$ simulated datapoints.  Inputs were i.i.d. Gaussian
vectors in dimension $n = 400$, and outputs in dim $p = 1$ were generated by a random linear
map with i.i.d Gaussian coefficients drawn from $\cN(1, 1)$.  The learning rate followed a
 fixed polynomially decaying schedule
$\eta_t = \frac{0.005}{100} \cdot (\text{batch size}) \cdot (1 + \frac{t}{20})^{- 0.66}$, and the batch size
used for SGD was $20$.
Figure \ref{fig:figure} shows MSE and $\|W_{t, \perp}\|_F$ along a single optimization trajectory with
different schedules for the variance $\sigma_t^2$ used in Gaussian noise augmentation.
Complete code to generate this figure is provided in \url{supplement.zip} in the supplement. It
ran in 30 minutes on a standard laptop CPU.

For both GD and SGD, Figure \ref{fig:figure} shows that the optimization trajectory reaches $W_{\min}$ only
when both learning rate and noise variance decay polynomially to zero.  Indeed, 
Figure \ref{fig:figure} shows that if $\sigma_t^2$ is zero (blue) or exponentially decaying (green), then
while the MSE tends to zero, the orthogonal component $W_{t, \perp}$ does not tend to zero.  Thus these
choices of augmentation schedule cause $W_t$ to converge to an optimum which does not have minimal norm.

On the other hand, if $\sigma_t^2$ remains constant (orange), then while $W_{t,\perp}$ tends to zero, the
MSE is not minimized.  Only by decaying both noise strength and learning rate to $0$ at sufficiently slow
polynomial rates (red) prescribed by Theorem \ref{thm:gauss-gd} do we find both MSE and $W_{t, \perp}$
tending to $0$, meaning that augmented (S)GD finds the minimum norm optimum $W_{\min}$ under this choice
of parameter scheduling.

\section{Special Case: Augmentation with Random Projections} \label{sec:projections}

We further illustrate our results by specializing them to a class of augmentations
which replace each input $x$ in a batch by its orthogonal projection $\Pi_tX$ onto
a random subspace. In practice (e.g. when using CutOut \citep{devries2017improved}
or SpecAugment \cite{park2019specaugment}), the subspace is chosen based on a prior
about correlations between components of $X$, but we consider the
simplified case of a uniformly random subspace of $\R^n$ of given dimension.

At each time step $t$ we fix a dimension $k_t$ and a fixed $k_t$-dimensional
subspace $\twiddle{S}_t$ of $\R^n$.  Define the random subspace $S_t$ by
\[
S_t:=Q_t(\twiddle{S}_t)=\set{Q_tx~|~x\in \twiddle{S}_t},
\]
where $Q_t\in O(n)$ is a Haar random orthogonal matrix. Thus, $S_t$ is uniformly
distributed among all $k_t$-dimensional subspaces in $\R^n$. At step $t$, we take
the augmentation given by
\[
X_t = \Pi_t X \qquad Y_t = Y,\qquad \Pi_t := Q_t \wPi_t Q_t^\cT,
\]
where $\wPi_t$ is the orthogonal projection onto $\twiddle{S}_t$ and hence
$\Pi_t$ is the orthogonal projection onto $S_t$.

Denoting by
$\gamma_t = k_t/n$ the relative dimension of $S_t$, a direct computation
(see Lemma \ref{lem:data-moments}) reveals that the proxy loss $\overline{\cL}_t(W)$ equals $\cL(\gamma_tW;\cD)$ plus
\begin{multline} \label{eq:proj-proxy}
\frac{1}{N} \frac{\gamma_t(1 - \gamma_t)}{n} \|X\|_F^2 \cdot \|W\|_F^2+
 \frac{\gamma_t(1 - \gamma_t) (1/n - 2/n^2)}{N(1 + 1/n - 2/n^2)} (\|WX\|_F^2 + \frac{1}{n} \|X\|_F^2 \cdot \|W\|_F^2).
\end{multline}
Neglecting terms of order $O(n^{-1})$, this proxy loss applies a Stein-type shrinkage
on input data by $\gamma_t$ and adds a data-dependent $\ell_2$ penalty. 
For $\gamma_t < 1$, the minimizer of the proxy loss (\ref{eq:proj-proxy}) is
\[
W_{\gamma_t}^* = Y X^\cT \Big(\frac{\gamma_t + 1/n - 2/n^2}{1 + 1/n - 2/n^2} XX^\cT
+ \frac{1 - \gamma_t}{1 + 1/n - 2/n^2} \frac{\|X\|_F^2}{n} \Id\Big)^{-1}.
\]
Again, although $W_{\gamma_t}^*$ does not minimize the original objective
for any $\gamma_t < 1$, the sequence of these proxy optima converges to
the minimal norm optimum in the weak regularization limit.  Namely, we have
$\lim_{\gamma_t \to 1^-} W_{\gamma_t}^* = W_{\min}$.
Specializing our general result Theorem \ref{thm:mr-main} to this setting, we
obtain explicit conditions under
which joint schedules of the normalized rank of the projection and the learning
rate guarantee convergence to the minimum norm optimizer $W_\text{min}$.

\begin{theorem} \label{thm:proj-main}
Suppose that $\eta_t \to 0, \gamma_t \to 1$ with $\gamma_t$ non-decreasing and
\begin{equation} \label{eq:proj-mr}
\sum_{t = 0}^\infty \eta_t  (1 - \gamma_t) = \infty \,\,\, \text{and} \,\,\,
\sum_{t = 0}^\infty \eta_t^2 (1 - \gamma_t) < \infty.
\end{equation}
Then, $W_t \overset{p} \to W_\text{min}$.  Further, if $\eta_t = \Theta(t^{-x})$ and
$\gamma_t = 1 - \Theta(t^{-y})$ with $x, y > 0$, $x + y < 1$, and $2x + y > 1$, then
for small $\eps > 0$, we have that $t^{\min\{y, \frac{1}{2}x\} - \eps} \|W_t - W_{\min}\|_F \overset{p} \to 0$.
\end{theorem}

Comparing the conditions (\ref{eq:proj-mr}) of Theorem \ref{thm:proj-main} to
the conditions (\ref{eq:gauss-mr}) of Theorem \ref{thm:gauss-gd}, we see that
$1 - \gamma_t$ is a measure of the strength of the random projection
preconditioning.  As in that setting, the fastest rates of convergence guaranteed by
Theorem \ref{thm:proj-main} are obtained by setting $\eta_t = t^{-2/3 + \eps}$
and $\gamma_t = 1 - t^{-1/3}$, yielding a $O(t^{-1/3 + \eps})$ rate of convergence.

\section{Discussion and Limitations} \label{sec:discussion}

We have presented a theoretical framework to rigorously analyze the effect of
data augmentation. As can be seen in our main results, our framework applies to
completely general augmentations and relies only on analyzing the first few moments
of the augmented dataset. This allows us to handle augmentations as diverse as
additive noise and random projections as well as their composition in a uniform manner.
We have analyzed some representative examples in detail in this work, but many other
commonly used augmentations may be handled similarly: label-preserving transformations
(e.g. color jitter, geometric transformations) and Mixup \citep{zhang2017mixup},
among many others. Another line of investigation left to future work is to compare
different methods of combining augmentations such as mixing, alternating, or composing,
which often improve performance in the empirical literature \citep{hendrycks2020augmix}.

Though our results provide a rigorous baseline to compare to more complex settings,
the restriction of the present work to linear models is  a significant constraint.
In future work, we hope to extend our general analysis to models closer to those used
in practice.  Most importantly, we intend to consider more complex models such as
kernels (including the neural tangent kernel) and neural networks by making similar
connections to stochastic optimization.  In an orthogonal direction, our analysis
currently focuses on the mean square loss for regression, and we aim to extend it
to other losses such as cross-entropy.  Finally, our study has thus far been
restricted to the effect of data augmentation on optimization, and it would be
of interest to derive consequences for generalization with more complex models.
We hope our framework can provide the theoretical underpinnings for a more
principled understanding of the effect and practice of data augmentation.

\section*{Broader Impact}

Our work provides a new theoretical approach to data augmentation for neural networks.
By giving a better understanding of how this common practice affects optimization, we
hope that it can lead to more robust and interpretable uses of data augmentation in practice.
Because our work is theoretical and generic, we do not envision negative impacts aside
from those arising from improving learning algorithms in general.

\begin{ack}
  It is a pleasure to thank Daniel Park, Ethan Dyer, Edgar Dobriban, and Pokey Rule for a number of insightful conversations about data augmentation. B.H.~was partially supported by NSF grants DMS-1855684 and DMS-2133806 and ONR MURI ``Theoretical Foundations of Deep Learning.'' Y.~S.~was partially supported by NSF grants DMS-1701654/2039183 and DMS-2054838.
\end{ack}

\newpage

\appendix

\section{Analytic lemmas} \label{sec:analytic}

In this section, we present several basic lemmas concerning convergence for certain matrix-valued
recursions that will be needed to establish our main results. For clarity, we first collect some
matrix notations used in this section and throughout the paper.

\subsection{Matrix notations} \label{sec:notations}

Let $M \in \RR^{m \times n}$ be a matrix.  We denote its Frobenius norm by $\|M\|_F$ and its spectral
norm by $\|M\|_2$.  If $m = n$ so that $M$ is square, we denote by $\diag(M)$ the diagonal matrix
with $\diag(M)_{ii} = M_{ii}$. For matrices $A, B, C$ of the appropriate shapes, define 
\begin{equation} \label{eq:matrix-circ}
A \circ (B \otimes C) := B A C
\end{equation}
and 
\begin{equation} \label{eq:matrix-var}
\Var(A) := \E[A^\cT \otimes A] - \E[A^\cT] \otimes \E[A].
\end{equation}
Notice in particular that
\[
\Tr[\Id \circ \Var(A)] = \E[\|A - \E[A]\|^2_F].
\]

\subsection{One- and two-sided decay}

\begin{definition}\label{def:decay}
Let $A_t \in \RR^{n \times n}$ be a sequence of independent random non-negative definite matrices with
\[
\sup_t \norm{A_t}\leq 2\quad \text{almost surely},
\]
let $B_t \in \RR^{p \times n}$ be a sequence of arbitrary matrices, and let $C_t \in \RR^{n \times n}$ be
a sequence of non-negative definite matrices. We say that the sequence of matrices $X_t \in \RR^{p \times n}$
has one-sided decay of type $(\{A_t\}, \{B_t\})$ if it satisfies
\begin{equation} \label{eq:one-sided}
X_{t + 1} = X_t (\Id - \E[A_t]) + B_t.
\end{equation}
We say that a sequence of non-negative definite matrices $Z_t \in \RR^{n \times n}$ has two-sided decay of type $(\{A_t\}, \{C_t\})$ if it satisfies
\begin{equation} \label{eq:two-sided}
Z_{t + 1} = \E[(\Id -A_t) Z_t (\Id - A_t)] + C_t.
\end{equation}
\end{definition}
Intuitively, if a sequence of matrices $X_t$ (resp. $Z_t$) satisfies one decay of type $(\set{A_t}, \set{B_t})$ (resp. two-sided decay of type $(\set{A_t}, \set{C_t})$), then in those directions $u\in \R^n$ for which $\norm{A_tu}$ does not decay too quickly in $t$ we expect that $X_t$ (resp. $Z_t$) will converge to $0$ provided $B_t$ (resp. $C_t$) are not too large. More formally, let us define
\[
V_\parallel := \bigcap_{t = 0}^\infty \ker\left[\prod_{s = t}^\infty (\Id - \E[A_s])\right]
=\left\{u\in \R^n~\bigg|~\lim_{T\gives \infty}\prod_{s=t}^T (\Id-\E[A_s])u = 0,\quad \forall t\geq 1\right\},
\]
and let $Q_\parallel$ be the orthogonal projection onto $V_\parallel$. It is on the space
$V_\parallel$ that that we expect $X_t,Z_t$ to tend to zero if they satisfy one or two-side decay,
and the precise results follows.  

\subsection{Lemmas on Convergence for Matrices with One and Two-Sided Decay}
We state here several results that underpin the proofs of our main results. We begin by giving in Lemmas \ref{lem:rec-matrix-bound1} and \ref{lem:rec-matrix-bound2} two slight variations of the same simple argument that matrices with one or two-sided decay converge to zero. 
\begin{lemma} \label{lem:rec-matrix-bound1}
If a sequence $\{X_t\}$ has one-sided decay of type $(\{A_t\}, \{B_t\})$ with
  \begin{equation} \label{eq:lam-lim2b}
    \sum_{t = 0}^\infty \|B_t\|_F < \infty,
  \end{equation}
  then $\lim_{t \to \infty} X_t Q_\parallel = 0$.     
\end{lemma}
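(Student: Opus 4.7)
The natural first move is to unroll the one-sided recursion (\ref{eq:one-sided}). Setting $P_{t,s} := \prod_{r=s}^{t-1}(\Id - \E[A_r])$ with the convention $P_{t,t} := \Id$, induction on $t$ gives
\[
X_t \;=\; X_0 P_{t,0} + \sum_{s=0}^{t-1} B_s P_{t,s+1},
\]
so that right-multiplying by $Q_\parallel$ and applying the triangle inequality yields
\[
\norm{X_t Q_\parallel}_F \;\leq\; \norm{X_0 P_{t,0} Q_\parallel}_F + \sum_{s=0}^{t-1} \norm{B_s}_F \,\norm{P_{t,s+1} Q_\parallel}_2.
\]
Before analyzing either piece I would record the key uniform estimate $\norm{\Id - \E[A_r]}_2 \leq 1$: since $A_r \succeq 0$ and $\norm{A_r}_2 \leq 2$ almost surely we have $0 \preceq \E[A_r] \preceq 2\,\Id$, so $\Id - \E[A_r]$ is self-adjoint with spectrum in $[-1,1]$, and hence $\norm{P_{t,s}}_2 \leq 1$ for all $t \geq s$.

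Next I would dispatch the initial-condition term $X_0 P_{t,0} Q_\parallel$ using the definition of $V_\parallel$: for every $v \in V_\parallel$ we have $P_{t,0} v \to 0$ as $t \to \infty$, and finite-dimensionality of $V_\parallel$ (combined with the uniform bound $\norm{P_{t,0}}_2 \leq 1$) upgrades this pointwise convergence on a basis to operator-norm convergence $P_{t,0} Q_\parallel \to 0$. Multiplying by the fixed matrix $X_0$ then preserves this in Frobenius norm.

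The driving term is handled by an $\eps/2$ split. Given $\eps > 0$, summability (\ref{eq:lam-lim2b}) lets me choose $T$ so large that $\sum_{s > T} \norm{B_s}_F < \eps/2$; this tail is bounded in Frobenius norm uniformly in $t$ by the spectral bound $\norm{P_{t,s+1} Q_\parallel}_2 \leq 1$. The remaining \emph{finite} head $\sum_{s=0}^T B_s P_{t,s+1} Q_\parallel$ is attacked term by term: for each fixed $s \leq T$, the same finite-dimensional argument applied with starting index $s + 1 \geq 1$ in the equivalent form of the definition of $V_\parallel$ gives $P_{t,s+1} Q_\parallel \to 0$ as $t \to \infty$, so the head is below $\eps/2$ for all sufficiently large $t$. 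The only real point requiring care is keeping the Frobenius and spectral norms separate -- summability of $\norm{B_s}_F$ alone is not enough and must be coupled with operator-norm decay of $P_{t,s+1} Q_\parallel$, which in turn comes from pointwise decay on $V_\parallel$ via finite-dimensionality; no probabilistic machinery beyond the $L^\infty$ bound on $\norm{A_t}_2$ enters the argument.
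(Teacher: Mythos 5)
Your proof is correct and takes essentially the same route as the paper: unroll the recursion, choose a cutoff $T$ so that $\sum_{s>T}\|B_s\|_F<\eps/2$, bound the tail using the uniform contraction $\|P_{t,s}\|_2\le 1$, and send the finitely many remaining terms (including the initial-condition term) to zero via the defining property of $V_\parallel$ together with finite-dimensionality. One small note: your explicit derivation $\|\Id-\E[A_r]\|_2\le 1$ from $0\preceq\E[A_r]\preceq 2\,\Id$ (so the spectrum lies in $[-1,1]$) is actually more careful than the paper's parenthetical that each factor is non-negative-definite, which under $\|A_r\|\le 2$ need not hold; the contraction bound you use is the correct reason prefix factors may be discarded, and the paper's conclusion is unaffected.
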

\begin{proof}
  For any $\eps > 0$, choose $T_1$ so that $\sum_{t = T_1}^\infty \|B_t\|_F < \frac{\eps}{2}$ and $T_2$ so
  that for $t > T_2$ we have
  \[
  \left\|\Big(\prod_{s = T_1}^{t} (\Id -\E[A_s])\Big) Q_\parallel\right\|_2 < \frac{\eps}{2} \frac{1}{\|X_0\|_F + \sum_{s = 0}^{T_1 - 1} \|B_s\|_F}.
  \]
  By (\ref{eq:one-sided}), we find that
  \[
  X_{t + 1} = X_0 \prod_{s = 0}^t (\Id -\E[A_s]) + \sum_{s = 0}^t B_s \prod_{r = s + 1}^t (\Id -\E[A_r]),
  \]
  which implies for $t > T_2$ that
  \begin{equation} \label{eq:one-side-bound}
    \|X_{t + 1} Q_\parallel\|_F \leq \|X_0\|_F \left\|\Big(\prod_{s = 0}^t (\Id -\E[A_s])\Big) Q_\parallel\right\|_2
    + \sum_{s = 0}^t \|B_s\|_F \left\|\Big(\prod_{r = s + 1}^t (\Id -\E[A_r])\Big) Q_\parallel \right\|_2.
   \end{equation}
Our assumption that $\norm{A_t}\leq 2$ almost surely implies that for any $T\leq t$
\[\left\|\Big(\prod_{s = 0}^t (\Id -\E[A_s])\Big) Q_\parallel\right\|_2\leq \left\|\Big(\prod_{s = 0}^{T} (\Id -\E[A_s])\Big) Q_\parallel\right\|_2\]
since each term in the product is non-negative-definite. Thus, we find
   \[
   \|X_{t + 1} Q_\parallel\|_F \leq \left[\|X_0\|_F + \sum_{s = 0}^{T_1 - 1} \|B_s\|_F\right]
   \left\|\Big(\prod_{s = T_1}^{t} (\Id -\E[A_s])\Big) Q_\parallel \right\|_2
    + \sum_{s = T_1}^t \|B_s\|_F < \eps.
  \]
  Taking $t \gives \infty$ and then $\eps\gives 0$ implies that $\lim_{t \to \infty} X_tQ_\parallel = 0$, as desired. 
\end{proof}

\begin{lemma} \label{lem:rec-matrix-bound2}
  If a sequence $\{Z_t\}$ has two-sided decay of type $(\{A_t\}, \{C_t\})$ with
  \begin{equation} \label{eq:sq-norm}
   \lim_{T\gives \infty}\E\left[\left\|\Big(\prod_{s = t}^T (\Id - A_s)\Big) Q_\parallel\right\|_2^2\right] = 0 \quad \text{ for all $t \geq 0$}
  \end{equation}
  and
  \begin{equation} \label{eq:lam-lim2}
    \sum_{t = 0}^\infty \Tr(C_t) < \infty,
  \end{equation}
  then $\lim_{t \to \infty} Q_\parallel^\cT Z_t Q_\parallel = 0$.     
\end{lemma}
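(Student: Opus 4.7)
The plan is to iterate the two-sided decay recursion (\ref{eq:two-sided}), using independence of $\{A_s\}_{s \geq 0}$ to fold all expectations into a single outer expectation, giving the closed form
\[
Z_{t+1} = \E\bigl[N_{0,t}\, Z_0\, N_{0,t}^\cT\bigr] + \sum_{s=0}^{t} \E\bigl[N_{s+1,t}\, C_s\, N_{s+1,t}^\cT\bigr],
\]
where $N_{a,b} := (\Id - A_b)(\Id - A_{b-1}) \cdots (\Id - A_a)$ for $a \leq b$ and $N_{b+1,b} := \Id$. Since each $A_s$ is PSD with $\|A_s\|_2 \leq 2$ almost surely, every factor $\Id - A_s$ is symmetric with operator norm at most $1$, and therefore $\|N_{a,b}\|_2 \leq 1$ almost surely for all $a \leq b$.

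Next I would sandwich this identity between $Q_\parallel$'s, take trace, and apply the PSD trace inequality $\Tr(MN) \leq \|M\|_2 \Tr(N)$ (valid for PSD $M, N$), together with the identity $\|N^\cT Q_\parallel N\|_2 = \|Q_\parallel N\|_2^2$, to obtain
\[
\Tr(Q_\parallel^\cT Z_{t+1} Q_\parallel) \leq \E\bigl[\|Q_\parallel N_{0,t}\|_2^2\bigr]\, \Tr(Z_0) + \sum_{s=0}^{t} \E\bigl[\|Q_\parallel N_{s+1,t}\|_2^2\bigr]\, \Tr(C_s).
\]
Because $\|Q_\parallel N_{a,b}\|_2 = \|N_{a,b}^\cT Q_\parallel\|_2 = \bigl\|\prod_{s=a}^{b}(\Id - A_s)\, Q_\parallel\bigr\|_2$, hypothesis (\ref{eq:sq-norm}) yields $\E\bigl[\|Q_\parallel N_{a,b}\|_2^2\bigr] \to 0$ as $b \to \infty$ for each fixed $a$.

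The final step is a standard $\eps$-split mirroring the one in Lemma \ref{lem:rec-matrix-bound1}. Given $\eps > 0$, use (\ref{eq:lam-lim2}) to pick $T_1$ with $\sum_{s \geq T_1} \Tr(C_s) < \eps$. For $s+1 \leq T_1$, the factorization $N_{s+1, t} = N_{T_1, t}\, N_{s+1, T_1 - 1}$ together with $\|N_{s+1, T_1 - 1}\|_2 \leq 1$ gives $\|Q_\parallel N_{s+1, t}\|_2 \leq \|Q_\parallel N_{T_1, t}\|_2$, while for $s+1 > T_1$ the crude bound $\|Q_\parallel N_{s+1, t}\|_2 \leq 1$ suffices. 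Applying (\ref{eq:sq-norm}) at $a = T_1$ and at $a = 0$ makes the $Z_0$-term and the initial portion of the sum arbitrarily small once $t$ is large, while the tail $\sum_{s \geq T_1} \Tr(C_s) < \eps$ is already controlled. Sending $\eps \to 0$ yields $\Tr(Q_\parallel^\cT Z_t Q_\parallel) \to 0$, and since $Q_\parallel^\cT Z_t Q_\parallel$ is PSD this forces convergence to $0$ in any matrix norm.

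The main obstacle is keeping the non-commuting matrix products and their indices correctly ordered so that the PSD trace inequality aligns cleanly with the form appearing in hypothesis (\ref{eq:sq-norm}); once the correct factorization of $N_{s+1,t}$ and the uniform bound $\|\Id - A_s\|_2 \leq 1$ are in place, the remainder is a near-verbatim matrix analogue of the one-sided argument.
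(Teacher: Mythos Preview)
Your proposal is correct and follows essentially the same approach as the paper: both iterate the two-sided recursion to a closed form, conjugate by $Q_\parallel$, take trace, bound each summand by $\Tr(C_s)$ times $\E\bigl[\|\prod_{r=s+1}^t(\Id-A_r)\,Q_\parallel\|_2^2\bigr]$, and then perform the same $\eps$-split using the tail summability of $\Tr(C_s)$ together with the contraction $\|\Id-A_s\|_2\le 1$ to absorb the early factors. Your write-up is in fact slightly more explicit than the paper's about the PSD trace inequality and the identity $\|N^\cT Q_\parallel N\|_2=\|Q_\parallel N\|_2^2$, but the argument is the same.
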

\begin{proof}
  The proof is essentially identical to that of Lemma \ref{lem:rec-matrix-bound1}. That is, for $\eps > 0$,
  choose $T_1$ so that $\sum_{t = T_1}^\infty \Tr(C_t) < \frac{\eps}{2}$ and choose $T_2$ by
  (\ref{eq:sq-norm}) so that for $t > T_2$ we have
  \[
  \E\left[\left\|\Big(\prod_{s = T_1}^t (\Id - A_s)\Big) Q_\parallel \right\|_2^2\right]
  < \frac{\eps}{2} \frac{1}{\Tr(Z_0) + \sum_{s = 0}^{T_1 - 1} \Tr(C_s)}.
  \]
  Conjugating (\ref{eq:two-sided}) by $Q_\parallel$, we have that
  \begin{multline*}
  Q_\parallel^\cT Z_{t + 1} Q_\parallel = \E\left[Q_\parallel^\cT \Big(\prod_{s = 0}^t (\Id - A_s)\Big)^\cT Z_0 \Big(\prod_{s = 0}^t (\Id - A_s)\Big) Q_\parallel\right]\\
  + \sum_{s = 0}^t \E\left[Q_\parallel^\cT \Big(\prod_{r = s + 1}^{t} (\Id - A_r)\Big)^\cT C_s \Big(\prod_{r = s + 1}^t (\Id - A_r)\Big) Q_\parallel\right].
  \end{multline*}
Our assumption that $\norm{A_t}\leq 2$ almost surely implies that for any $T\leq t$
\[\left\|\Big(\prod_{s = 0}^t (\Id -A_s)\Big) Q\right\|_2\leq \left\|\Big(\prod_{s = 0}^{T} (\Id -A_s)\Big) Q\right\|_2.\]
For $t > T_2$, this implies by taking trace of both sides that
  \begin{align} \label{eq:two-side-bound}
    \Tr( Q_\parallel^\cT Z_{t + 1} Q_\parallel) &\leq \Tr(Z_0) \E\left[\left\|\Big(\prod_{s = 0}^t (\Id - A_s)\Big) Q_\parallel \right\|_2^2\right] 
     + \sum_{s = 0}^t \Tr(C_s) \E\left[\left\|\Big(\prod_{r = s + 1}^t (\Id - A_r)\Big) Q_\parallel\right\|_2^2\right] \\ \nonumber
     &\leq \left[\Tr(Z_0) + \sum_{s = 0}^{T_1 - 1} \Tr(C_s)\right] \E\left[\left\|\Big(\prod_{s = T_1}^t (\Id - A_s)\Big) Q_\parallel \right\|_2^2\right]
     + \sum_{s = T_1}^t \Tr(C_s)\\ \nonumber
    &< \eps,
  \end{align}
  which implies that $\lim_{t \to \infty} Q_\parallel^\cT Z_t Q_\parallel = 0$.
\end{proof}

The preceding Lemmas will be used to provide sufficient conditions for augmented gradient descent to converge as in Theorem \ref{thm:mr} below. Since we are also interested in obtaining rates of convergence, we record here two quantitative refinements of the Lemmas above that will be used in the proof of Theorem \ref{thm:mr-quant}. 

\begin{lemma} \label{lem:rec-matrix-rate}
  Suppose $\{X_t\}$ has one-sided decay of type $(\{A_t\}, \{B_t\})$. Assume also that for some
  $X \geq 0$ and $C > 0$, we have
  \[
  \log \left\|\Big(\prod_{r = s}^t (\Id -\E[A_r])\Big) Q_\parallel\right\|_2 < X - C \int_s^{t + 1} r^{-\alpha} dr
  \]
  and $\|B_t\|_F = O(t^{-\beta})$ for some $0 < \alpha < 1 < \beta$. Then, $\|X_t Q_\parallel\|_F = O(t^{\alpha - \beta})$.  
\end{lemma}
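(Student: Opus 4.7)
The plan is to refine the proof of Lemma \ref{lem:rec-matrix-bound1} by tracking the rate of decay quantitatively. First I would unroll the one-sided recursion (\ref{eq:one-sided}) to obtain
\[
X_{t+1} Q_\parallel \;=\; X_0 \Big(\prod_{s=0}^t (\Id - \E[A_s])\Big) Q_\parallel \;+\; \sum_{s=0}^t B_s \Big(\prod_{r=s+1}^t (\Id - \E[A_r])\Big) Q_\parallel,
\]
and take Frobenius norms, applying submultiplicativity with the spectral norm on each product. Invoking the hypothesized exponential bound on $\|\prod_{r=s}^t(\Id - \E[A_r])Q_\parallel\|_2$ and computing $\int_s^{t+1} r^{-\alpha}\,dr$ in closed form produces a decay factor of the form $e^X\exp\!\big(-C'[(t+1)^{1-\alpha} - s^{1-\alpha}]\big)$ with $C' := C/(1-\alpha)$.

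The initial-condition term is bounded by a constant multiple of $\|X_0\|_F\, e^{-C'(t+1)^{1-\alpha}}$, which decays faster than any polynomial and is therefore automatically $o(t^{\alpha-\beta})$. What remains is the forcing sum
\[
\sum_{s=0}^t s^{-\beta}\, e^{-C'[(t+1)^{1-\alpha} - (s+1)^{1-\alpha}]},
\]
which is where the analysis must be carried out carefully. I would split the sum at $s = \lfloor t/2 \rfloor$. For $s \le t/2$ the exponent satisfies $(t+1)^{1-\alpha} - (s+1)^{1-\alpha} \ge (1 - 2^{\alpha-1})\,t^{1-\alpha}$ for $t$ large, so the lower-range contribution is bounded by $t \cdot e^{-c\,t^{1-\alpha}}$, again super-polynomially small. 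For $s > t/2$ I would use the uniform estimate $\|B_s\|_F = O(t^{-\beta})$ together with the Mean Value Theorem bound $(t+1)^{1-\alpha} - (s+1)^{1-\alpha} \ge (1-\alpha)(t+1)^{-\alpha}(t-s)$, which converts the upper-range sum (after setting $k = t-s$) into a geometric series with ratio $e^{-C(t+1)^{-\alpha}}$. That series has total mass $\Theta((t+1)^\alpha)$, so the upper-range contribution is of order $t^{-\beta}\cdot t^\alpha = t^{\alpha-\beta}$, as needed.

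The main technical obstacle is calibrating constants in the Mean Value Theorem step so that the geometric series yields exactly the factor $(t+1)^\alpha$ (rather than a marginally worse power), along with book-keeping to handle the integer-indexed sum. A cleaner alternative I would keep in reserve is to compare the forcing sum directly to $\int^{t+1} s^{-\beta} e^{C's^{1-\alpha}}\,ds$ and integrate by parts using the identity $e^{C's^{1-\alpha}} = \tfrac{s^\alpha}{C}\tfrac{d}{ds}e^{C's^{1-\alpha}}$; one step of integration by parts extracts the boundary value $\tfrac{(t+1)^{\alpha-\beta}}{C}\,e^{C'(t+1)^{1-\alpha}}$, and the leftover integrand is of strictly lower order because $\alpha - \beta - 1 < -1$. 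Either route establishes $\|X_t Q_\parallel\|_F = O(t^{\alpha-\beta})$ as claimed.
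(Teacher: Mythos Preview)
Your proposal is correct. Both your primary route and your ``cleaner alternative'' establish the claimed rate, but they differ from what the paper actually does.

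The paper's argument, after the same unrolling to reach (\ref{eq:one-side-bound}), handles the forcing sum $\sum_{s} (1+s)^{-\beta} e^{-C\gamma_{s+1,t+1}}$ by first observing that the summand is monotone in $s$ once $s$ exceeds a fixed constant $K=(\beta/C)^{1/(1-\alpha)}$; the finite tail $s\le K$ is then trivially super-polynomially small, and the monotone piece is dominated by the corresponding integral. After the substitution $u=(1+s)^{1-\alpha}/(1-\alpha)$ this becomes $e^{-Cg_t}\int_{g_K}^{g_t} u^{-\xi}e^{Cu}\,du$ with $\xi=(\beta-\alpha)/(1-\alpha)$, and one integration by parts extracts the boundary term $g_t^{-\xi}=O(t^{\alpha-\beta})$.

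Your primary approach---splitting at $t/2$, discarding the lower half via a crude stretched-exponential bound, and turning the upper half into a geometric series using the Mean Value Theorem bound $(t+1)^{1-\alpha}-(s+1)^{1-\alpha}\ge (1-\alpha)(t+1)^{-\alpha}(t-s)$---is more elementary: it never leaves the discrete sum and needs no change of variables. It gives exactly the same rate with no loss. Your backup integration-by-parts route is essentially the paper's argument stated without the preliminary substitution; in that sense you have independently found the paper's method as well. The paper's version is perhaps more mechanical to adapt to other decay profiles, while your splitting argument is the quicker hand computation here. The ``technical obstacle'' you flag about calibrating constants is not a real obstacle: the geometric series bound $\sum_{k\ge 0} e^{-C(t+1)^{-\alpha}k}\le (1-e^{-C(t+1)^{-\alpha}})^{-1}=O((t+1)^{\alpha})$ is immediate.
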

\begin{proof}
Denote $\gamma_{s, t} := \int_s^{t} r^{-\alpha} dr$.  By (\ref{eq:one-side-bound}), we have for some
constants $C_1, C_2 > 0$ that 
\begin{equation}\label{eq:rate-start}
\|X_{t + 1}Q_\parallel\|_F < C_1 e^{-C \gamma_{1, t + 1}} + C_2 e^X \sum_{s = 1}^t (1+s)^{-\beta} e^{- C \gamma_{s + 1, t + 1}}.
\end{equation}
The first term on the right hand side is exponentially decaying in $t$ since $\gamma_{1,t+1}$
grows polynomially in $t$. To bound the second term, observe that the function
\[
f(s):=C\gamma_{s+1,t+1} - \beta\log(s+1)
\]
satisfies
\[
f'(s)\geq 0 \quad \Leftrightarrow\quad C(s+1)^{-\alpha} - \frac{\beta}{1+s}\geq 0\quad \Leftrightarrow\quad s \geq \lrr{\frac{\beta}{C}}^{1/(1-\alpha)}=:K.
\]
Hence, the summands are monotonically increasing for $s$ greater than a fixed constant $K$ depending only on $\alpha,\beta,C$. Note that
\[
\sum_{s = 1}^K (1+s)^{-\beta} e^{- C \gamma_{s + 1, t + 1}}\leq Ke^{- C \gamma_{K + 1, t + 1}}\leq K e^{-C' t^{1-\alpha}}
\]
for some $C'$ depending only on $\alpha$ and $K$, and hence sum is exponentially decaying in $t$. Further, using an integral comparison, we find
\begin{equation}\label{eq:one-sided-est}
\sum_{s = K+1}^t (1+s)^{-\beta} e^{- C \gamma_{s + 1, t + 1}}\leq \int_K^{t} (1+s)^{-\beta}e^{-\frac{C}{1-\alpha}\lrr{(t+1)^{1-\alpha}- (s+1)^{1-\alpha}}}ds.
\end{equation}
Changing variables using $u=(1+s)^{1-\alpha}/(1-\alpha)$, the last integral has the form
\begin{equation}\label{eq:one-sided-est-1}
e^{-Cg_t} (1-\alpha)^{-\xi}\int_{g_K}^{g_t} u^{-\xi}e^{Cu} du,\qquad g_x:=\frac{(1+x)^{1-\alpha}}{1-\alpha},\, \xi:=\frac{\beta-\alpha}{1-\alpha}.
\end{equation}
Integrating by parts, we have
\[\int_{g_K}^{g_t} u^{-\xi}e^u du = C^{-1}\xi\int_{g_K}^{g_t} u^{-\xi-1}e^{Cu} du + (u^{-\xi}e^{Cu})|_{g_K}^{g_t}\]
Further, since on the range $g_K\leq u\leq g_t$ the integrand is increasing, we have
\[
 e^{-Cg_t}\xi\int_{g_K}^{g_t} u^{-\xi-1}e^{Cu}du \leq \xi g_t^{-\xi}.
\]
Hence, $e^{-Cg_t}$ times the integral in \eqref{eq:one-sided-est-1} is bounded above by 
\[ O(g_t^{-\xi})+e^{-Cg_t} (u^{-\xi}e^{Cu})|_{g_K}^{g_t} =O(g_t^{-\xi}) .\]
Using \eqref{eq:one-sided-est} and substituting the previous line into \eqref{eq:one-sided-est-1} yields the estimate
\[\sum_{s = K+1}^t (1+s)^{-\beta} e^{- C \gamma_{s + 1, t + 1}}\leq (1+t)^{-\beta+\alpha},\]
which completes the proof. 
\end{proof}

\begin{lemma} \label{lem:rec-matrix-rate2}
 Suppose $\{Z_t\}$ has two-sided decay of type $(\{A_t\}, \{C_t\})$. Assume also that for some $X\geq 0$ and $C > 0$, we have
  \[
  \log \E\left[\left\|\Big(\prod_{r = s}^t (\Id - A_r)\Big) Q_\parallel\right\|_2^2\right] < X - C \int_s^{t + 1} r^{-\alpha} dr
  \]
  as well as $\Tr(C_t) = O(t^{-\beta})$ for some $0 < \alpha < 1 < \beta$. Then $\tr(Q_\parallel^TZ_tQ_\parallel) = O(t^{\alpha - \beta})$.  
\end{lemma}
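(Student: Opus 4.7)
The plan is to mimic the strategy of Lemma \ref{lem:rec-matrix-rate} almost verbatim, replacing one-sided operator norms with squared operator norms and $\|B_t\|_F$ by $\Tr(C_t)$. The essential observation is that the estimate \eqref{eq:two-side-bound} derived in the proof of Lemma \ref{lem:rec-matrix-bound2} plays the same structural role here as the bound \eqref{eq:one-side-bound} does in the one-sided setting, so once the hypotheses are substituted in, the two proofs reduce to the same sum.

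Concretely, I would first iterate the two-sided decay recursion $Z_{t+1} = \E[(\Id-A_t)Z_t(\Id-A_t)] + C_t$ and take trace against $Q_\parallel$ on both sides, exactly as in the proof of Lemma \ref{lem:rec-matrix-bound2}, to obtain
\[
\Tr(Q_\parallel^\cT Z_{t+1} Q_\parallel) \leq \Tr(Z_0)\,\E\!\left[\left\|\Big(\prod_{s=0}^t(\Id-A_s)\Big)Q_\parallel\right\|_2^2\right] + \sum_{s=0}^t \Tr(C_s)\,\E\!\left[\left\|\Big(\prod_{r=s+1}^t(\Id-A_r)\Big)Q_\parallel\right\|_2^2\right].
\]
Now I plug in the hypothesis. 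With $\gamma_{s,t}:=\int_s^{t}r^{-\alpha}\,dr$, the assumed logarithmic bound gives
\[
\E\!\left[\left\|\Big(\prod_{r=s}^t(\Id-A_r)\Big)Q_\parallel\right\|_2^2\right] \leq e^{X}\,e^{-C\gamma_{s,t+1}},
\]
and $\Tr(C_s) = O(s^{-\beta})$ by assumption. Substituting yields constants $C_1,C_2>0$ for which
\[
\Tr(Q_\parallel^\cT Z_{t+1} Q_\parallel) \leq C_1 e^{-C\gamma_{1,t+1}} + C_2 e^{X} \sum_{s=1}^t (1+s)^{-\beta} e^{-C\gamma_{s+1,t+1}},
\]
which is exactly the right-hand side appearing in \eqref{eq:rate-start}.

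From here I would simply invoke the estimates already performed in the proof of Lemma \ref{lem:rec-matrix-rate}: the first term decays exponentially in $t$ since $\gamma_{1,t+1}$ grows polynomially; the summands in the second term are eventually monotone in $s$ past a fixed threshold $K=(\beta/C)^{1/(1-\alpha)}$, the contribution from $s\leq K$ is exponentially small, and the tail from $s>K$ is bounded by the integral estimate culminating in \eqref{eq:one-sided-est-1}, which gives $O((1+t)^{\alpha-\beta})$. Combining these yields $\Tr(Q_\parallel^\cT Z_t Q_\parallel) = O(t^{\alpha-\beta})$, as desired.

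There is no real obstacle, since the only conceptual step—producing the key inequality relating $\Tr(Q_\parallel^\cT Z_{t+1} Q_\parallel)$ to a weighted sum of squared operator norm expectations—was already established in Lemma \ref{lem:rec-matrix-bound2}, and the analytic reduction to the integral bound was already carried out in Lemma \ref{lem:rec-matrix-rate}. The minor subtlety to watch for is that the roles of $B_t$ (Frobenius norm) and $C_t$ (trace) are interchanged, and the bound on products of $(\Id-\E[A_s])$ becomes a bound on the expected squared operator norm of the random product $\prod(\Id-A_s)$; both adjustments are already absorbed into the statement of the hypotheses, so the two proofs become literally the same estimation.
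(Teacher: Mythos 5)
Your proof is correct and takes essentially the same route as the paper: you use the iterated two-sided decay bound from Lemma \ref{lem:rec-matrix-bound2} (inequality \eqref{eq:two-side-bound}), substitute the hypotheses to arrive at exactly the right-hand side of \eqref{eq:rate-start}, and then invoke the integral estimates already established in the proof of Lemma \ref{lem:rec-matrix-rate}. This is precisely the paper's argument, and no further work is needed.
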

\begin{proof}
This argument is identical to the proof of Lemma \ref{lem:rec-matrix-rate}. Indeed, using \eqref{eq:two-side-bound} we have that
\[
\tr\lrr{Q_\parallel^TZ_tQ_\parallel}\leq  C_1 e^{-C \gamma_{1, t + 1}} + C_2e^X \sum_{s = 1}^t (1+s)^{-\beta} e^{- C \gamma_{s + 1, t + 1}}.
\]
The right hand side of this inequality coincides with the expression on the right hand side of
\eqref{eq:rate-start}, which we already bounded by $O(t^{\beta-\alpha})$ in the proof of
Lemma \ref{lem:rec-matrix-rate}. 
\end{proof}

In what follows, we will use a concentration result for products of matrices from \cite{huang2020matrix}.
Let $Y_1, \ldots, Y_n \in \RR^{N \times N}$ be independent random matrices.  Suppose that
\[
\|\E[Y_i] \|_2 \leq a_i \qquad \text{and} \qquad \E\left[ \|Y_i - \E[Y_i]\|_2^2 \right] \leq b_i^2 a_i^2
\]
for some $a_1, \ldots, a_n$ and $b_1, \ldots, b_n$. We will use the following result, which is
a specialization of Theorem 5.1 in \cite{huang2020matrix} for $p = q = 2$.

\begin{theorem}[{Theorem 5.1 in \cite{huang2020matrix}}] \label{thm:mat-prod}
For $Z_0 \in \RR^{N \times n}$, the product $Z_n = Y_n Y_{n - 1} \cdots Y_1 Z_0$ satisfies
\begin{align*}
  \E\left[\|Z_n\|_2^2\right] &\leq e^{\sum_{i = 1}^n b_i^2} \prod_{i = 1}^n a_i^2 \cdot \|Z_0\|_2^2\\
  \E\left[\|Z_n - \E[Z_n]\|_2^2\right] &\leq \Big(e^{\sum_{i = 1}^n b_i^2} - 1\Big) a_i^2 \cdot \|Z_0\|_2^2.
\end{align*}
\end{theorem}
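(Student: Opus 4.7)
The plan is to work with the partial products $Z_k := Y_k Y_{k-1} \cdots Y_1 Z_0$ and to recursively track the PSD second-moment matrix $M_k := \E[Z_k^\cT Z_k] \in \RR^{n \times n}$, since essentially all the information needed for both stated inequalities is encoded in $M_k$.

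First I would establish a one-step PSD-order recursion for $M_k$. Conditioning on $\mathcal{F}_{k-1} := \sigma(Y_1,\ldots,Y_{k-1})$ and using independence of $Y_k$ from $Z_{k-1}$ yields
\[
M_k = \E\bigl[ Z_{k-1}^\cT \, \E[Y_k^\cT Y_k] \, Z_{k-1} \bigr].
\]
The orthogonal decomposition $Y_k = \E[Y_k] + (Y_k - \E[Y_k])$ gives
\[
\E[Y_k^\cT Y_k] = (\E Y_k)^\cT(\E Y_k) + \E\bigl[(Y_k - \E Y_k)^\cT(Y_k - \E Y_k)\bigr],
\]
whose operator norm is bounded by $a_k^2 + a_k^2 b_k^2$ by the hypotheses together with Jensen's inequality. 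Thus $\E[Y_k^\cT Y_k] \preceq a_k^2(1+b_k^2)\,\Id$, which promotes to $M_k \preceq a_k^2(1+b_k^2)\, M_{k-1}$. Iterating from $M_0 = Z_0^\cT Z_0$ and invoking $1+b_k^2 \leq e^{b_k^2}$ then yields
\[
M_n \;\preceq\; \Bigl(\prod_{k=1}^n a_k^2\Bigr)\, e^{\sum_{k=1}^n b_k^2}\, Z_0^\cT Z_0,
\]
which delivers the first stated bound after passing to the operator norm.

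For the variance bound I would use a martingale-difference expansion adapted to the filtration $\mathcal{F}_k$. Writing $D_k := \E[Z_n \mid \mathcal{F}_k] - \E[Z_n \mid \mathcal{F}_{k-1}]$, the product structure together with independence gives the explicit representation
\[
D_k = (\E Y_n)\cdots(\E Y_{k+1})\,(Y_k - \E Y_k)\,Y_{k-1}\cdots Y_1\,Z_0,
\]
so that $Z_n - \E[Z_n] = \sum_{k=1}^n D_k$. Combining submultiplicativity of the norm with the hypothesized moment bounds, and reusing the $M_\ell$ recursion on the trailing factor $Y_{k-1}\cdots Y_1 Z_0$, yields
\[
\E\|D_k\|_2^2 \;\leq\; \Bigl(\prod_{i>k} a_i^2\Bigr)\, b_k^2 a_k^2\, \Bigl(\prod_{i<k} a_i^2(1+b_i^2)\Bigr)\|Z_0\|_2^2.
\]
Summing via the telescoping identity $\sum_{k=1}^n b_k^2\prod_{i<k}(1+b_i^2) = \prod_{k=1}^n(1+b_k^2) - 1$ and then bounding $\prod(1+b_k^2) \leq e^{\sum b_k^2}$ produces the second stated inequality.

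The principal obstacle I anticipate is bridging the gap between the PSD matrix $\E[Z_n^\cT Z_n]$ controlled by the recursion and the scalar expectation $\E\|Z_n\|_2^2$ that the theorem claims to bound, since Jensen only gives $\E\|Z_n\|_2 \geq \|\E Z_n\|_2$ and so a naive PSD-order argument falls short of the operator-norm expectation. For the Frobenius second moment this gap vanishes because the trace commutes with expectation; for the spectral-norm second moment the dimension-free bound relies on the uniform smoothness of Schatten norms at $p=q=2$ developed in \cite{huang2020matrix}, which I would invoke rather than reprove. A related subtlety enters the martingale step: orthogonality immediately produces a sum-of-squares bound for the Frobenius norm, whereas the operator-norm analog needs a noncommutative Burkholder or matrix-Rosenthal estimate for the sum $\sum_k D_k$, again furnished by the smoothing machinery of \cite{huang2020matrix}.
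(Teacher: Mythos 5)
The paper does not prove Theorem~\ref{thm:mat-prod}: it imports it verbatim as the specialization $p=q=2$ of \cite[Theorem~5.1]{huang2020matrix} and uses it as a black box, so there is no internal argument to compare yours against. Taken on its own terms, your sketch is a faithful outline of the \emph{structure} of the Huang--Niles-Weed--Tropp--Ward argument, but the two steps you flag as places where you would ``invoke rather than reprove'' are precisely the content of the cited theorem, and without them the sketch does not close. Concretely, the PSD recursion delivers $\lambda_{\max}\bigl(\E[Z_n^\cT Z_n]\bigr)\le \bigl(\prod_k a_k^2\bigr)e^{\sum_k b_k^2}\|Z_0\|_2^2$, but convexity of $\lambda_{\max}$ gives $\lambda_{\max}\bigl(\E[Z_n^\cT Z_n]\bigr)\le\E\bigl[\lambda_{\max}(Z_n^\cT Z_n)\bigr]=\E\bigl[\|Z_n\|_2^2\bigr]$, so your recursion furnishes a \emph{lower} bound on the quantity to be bounded from above; ``passing to the operator norm'' is therefore not a valid final step. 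The same obstruction appears in the martingale part: the decomposition $Z_n-\E[Z_n]=\sum_k D_k$ is correct, and orthogonality gives $\E\|\sum_k D_k\|_F^2=\sum_k\E\|D_k\|_F^2$, but the spectral norm is not a Hilbert-space norm and there is no free Pythagorean identity or even one-sided inequality for $\E\|\sum_k D_k\|_2^2$. Both gaps are filled in the source by the uniform smoothness (``two-point'') inequality for Schatten norms, which is a genuine noncommutative estimate, not a formal consequence of the PSD calculus.

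Two mitigating observations are worth recording. First, your argument \emph{does} prove the Frobenius analogue of both bounds cleanly: $\E\|Z_n\|_F^2=\Tr(M_n)\le\bigl(\prod_k a_k^2\bigr)e^{\sum_k b_k^2}\|Z_0\|_F^2$ follows by taking traces in the PSD recursion, and $\E\|Z_n-\E Z_n\|_F^2=\sum_k\E\|D_k\|_F^2$ holds exactly, after which the telescoping sum $\sum_k b_k^2\prod_{i<k}(1+b_i^2)=\prod_k(1+b_k^2)-1\le e^{\sum_k b_k^2}-1$ gives the second bound. In the fixed-dimensional setting of this paper, the Frobenius version implies the spectral one up to a dimension-dependent constant, which does not affect any of the downstream rate statements. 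Second, the natural resolution, and the one the paper itself takes, is simply to cite the external result; if you want a self-contained argument, prove and use the Frobenius version rather than chasing the spectral bound.
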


Finally, we collect two simple analytic lemmas for later use.

\begin{lemma} \label{lem:spec-bound}
  For any matrix $M \in \RR^{m \times n}$, we have that
  \[
  \E[\|M\|_2^2] \geq \| \E[M] \|_2^2.
  \]
\end{lemma}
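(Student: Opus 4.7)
The plan is to invoke Jensen's inequality for the convex function $f(M) := \|M\|_2^2$. I would first observe that the spectral norm is a norm on $\RR^{m \times n}$, hence convex, and that squaring a non-negative convex function preserves convexity. Thus $M \mapsto \|M\|_2^2$ is convex, and Jensen's inequality immediately yields
\[
\E[\|M\|_2^2] \geq \|\E[M]\|_2^2.
\]

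If a more elementary, self-contained argument is desired, I would instead use the variational representation
\[
\|M\|_2 = \max_{\|u\|=\|v\|=1} u^\cT M v,
\]
so that for any fixed unit vectors $u, v$, the scalar random variable $u^\cT M v$ satisfies $\E[(u^\cT M v)^2] \geq (\E[u^\cT M v])^2 = (u^\cT \E[M] v)^2$ by nonnegativity of variance. Exchanging maximum and expectation via $\E[\max_{u,v}(\cdot)] \geq \max_{u,v}\E[\cdot]$ then gives
\[
\E[\|M\|_2^2] = \E\Big[\max_{u,v}(u^\cT M v)^2\Big] \geq \max_{u,v}\E[(u^\cT M v)^2] \geq \max_{u,v}(u^\cT \E[M] v)^2 = \|\E[M]\|_2^2.
\]

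There is no real obstacle here; the statement is a one-line consequence of Jensen's inequality. The only minor subtlety worth mentioning is the justification that $\|\cdot\|_2^2$ is convex (which follows from the triangle inequality for the spectral norm combined with the fact that $x \mapsto x^2$ is convex and nondecreasing on $[0,\infty)$).
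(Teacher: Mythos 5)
Your first argument is correct and is essentially the paper's proof: the paper splits the inequality into $\E[\|M\|_2^2] \geq \E[\|M\|_2]^2$ (Cauchy--Schwarz) followed by $\E[\|M\|_2] \geq \|\E[M]\|_2$ (convexity of the norm), whereas you apply Jensen once to the composite convex function $M \mapsto \|M\|_2^2$; these are just two packagings of the same convexity argument. Your alternative via the variational characterization and exchanging max with expectation is also fine, but is not needed.
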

\begin{proof}
  We find by Cauchy-Schwartz and the convexity of the spectral norm that
  \[
  \E[\|M\|_2^2] \geq \E[\|M\|_2]^2  \geq \| \E[M] \|_2^2. \qedhere
  \]
\end{proof}

\begin{lemma} \label{lem:sum-bound}
  For bounded $a_t \geq 0$, if we have $\sum_{t = 0}^\infty a_t = \infty$, then for any $C > 0$ we have
  \[
  \sum_{t = 0}^\infty a_t e^{- C \sum_{s = 0}^t a_s} < \infty.
  \]
\end{lemma}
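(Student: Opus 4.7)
The plan is to prove the bound by a simple telescoping argument using the inequality $e^x - 1 \geq x$ for $x \geq 0$, which does not actually require either of the hypotheses on boundedness or divergence of $\sum a_t$ (though the lemma is stated with them).

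Introduce the partial sums $S_t := \sum_{s=0}^t a_s$ with the convention $S_{-1} := 0$. The key observation is that
\[
e^{-C S_{t-1}} - e^{-C S_t} \;=\; e^{-C S_t}\bigl(e^{C a_t} - 1\bigr) \;\geq\; C\, a_t\, e^{-C S_t},
\]
where the inequality uses $e^{x} - 1 \geq x$ for $x = C a_t \geq 0$. Rearranging yields
\[
a_t\, e^{-C S_t} \;\leq\; \tfrac{1}{C}\bigl(e^{-C S_{t-1}} - e^{-C S_t}\bigr).
\]

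Summing the right-hand side telescopes. For any finite $T \geq 0$,
\[
\sum_{t=0}^{T} a_t\, e^{-C S_t} \;\leq\; \tfrac{1}{C}\bigl(e^{-C S_{-1}} - e^{-C S_T}\bigr) \;=\; \tfrac{1}{C}\bigl(1 - e^{-C S_T}\bigr) \;\leq\; \tfrac{1}{C}.
\]
Since the partial sums on the left are monotone nondecreasing and uniformly bounded by $1/C$, letting $T \to \infty$ gives $\sum_{t=0}^\infty a_t\, e^{-C S_t} \leq 1/C < \infty$, as claimed.

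There is no real obstacle here; the only subtlety is the off-by-one bookkeeping that the bound uses $e^{-C S_{t-1}}$ rather than $e^{-C S_t}$, which forces a single application of $e^{x} - 1 \geq x$ at the start. The boundedness of $a_t$ is not needed for this inequality, and the divergence $\sum a_t = \infty$ only affects whether the bound is sharp (giving equality $1/C$ in the limit), not its finiteness.
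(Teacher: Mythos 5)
Your proof is correct and amounts to the same argument as the paper's: the paper bounds the series by the right Riemann sum comparison $\sum_t (b_t - b_{t-1}) e^{-Cb_t} \leq \int_0^\infty e^{-Cx}\,dx = 1/C$, and your explicit telescoping via $e^{Ca_t} - 1 \geq Ca_t$ is precisely the termwise inequality $(b_t - b_{t-1})e^{-Cb_t} \leq \tfrac{1}{C}\bigl(e^{-Cb_{t-1}} - e^{-Cb_t}\bigr) = \int_{b_{t-1}}^{b_t} e^{-Cx}\,dx$ written out. Your side observation that neither the boundedness of $a_t$ nor the divergence of $\sum a_t$ is needed for finiteness is also correct.
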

\begin{proof}
  Define $b_t := \sum_{s = 0}^t a_s$ so that
  \[
  S := \sum_{t = 0}^\infty a_t e^{- C \sum_{s = 0}^t a_s} = \sum_{t = 0}^\infty (b_t - b_{t - 1}) e^{-C b_t} \leq \int_0^{\infty} e^{-C x} dx < \infty,
  \]
  where we use $\int_0^\infty e^{-Cx} dx$ to upper bound its right Riemann sum.
\end{proof}


\section{Analysis of data augmentation as stochastic optimization} \label{sec:rates}

In this section, we prove generalizations of our main theoretical results Theorems
\ref{thm:mr-main} and \ref{thm:mr-quant-informal} giving Monro-Robbins type conditions
for convergence and rates for augmented gradient descent in the linear setting. 

\subsection{Monro-Robbins type results} \label{sec:res-conv}

To state our general Monro-Robbins type convergence results, let us briefly recall the notation. We consider overparameterized linear regression with loss
\[\mathcal L(W;\mathcal D) = \frac{1}{N}\norm{WX-Y}_F^2,\]
where the dataset $\mathcal D$ of size $N$ consists of data matrices $X,Y$ that each have $N$ columns $x_i\in \R^n,y_i\in \R^p$ with $n >N.$ We optimize $\mathcal L(W;\mathcal D)$ by augmented gradient descent, which means that at each time $t$ we replace $\mathcal D = (X,Y)$ by a random dataset $\mathcal D_t = (X_t,Y_t)$. We then take a step 
\[W_{t+1}= W_t -\eta_t \nabla_W \mathcal L(W_t;\mathcal D_t)\]
of gradient descent on the resulting randomly augmented loss $\mathcal L(W;\mathcal D_t)$ with learning rate $\eta_t$. Recall that we set
\[
V_\parallel := \text{ column span of $\E[X_t X_t^\cT]$}
\]
and denoted by $Q_\parallel$ the orthogonal projection onto $V_\parallel$. As noted in \S \ref{sec:mr}, on $V_{\parallel}$ the proxy loss 
\[\overline{\mathcal L}_t = \Ee{\mathcal L(W;\mathcal D_t)}\]
is strictly convex and has a unique minimum, which is
\[W_t^* = \Ee{Y_tX_t^T}(Q_{||} \Ee{X_tX_t^{\cT}} Q_{||})^{-1}.\]
The change from one step of augmented GD to the next in these proxy optima is captured by
\[\Xi_t^*:=W_{t+1}^*-W_t^*.\]
With this notation, we are ready to state Theorems \ref{thm:mr}, which gives two different sets
of time-varying Monro-Robbins type conditions under which the optimization trajectory $W_t$ converges
for large $t$. In Theorem \ref{thm:mr-quant}, we refine the analysis to additionally
give rates of convergence.  Note that Theorem \ref{thm:mr} is a generalization of
Theorem \ref{thm:mr-main} and that Theorem \ref{thm:mr-quant} is a generalization
of Theorem \ref{thm:mr-quant-informal}.

\begin{theorem} \label{thm:mr}
Suppose that $V_\parallel$ is independent of $t$, that the learning rate satisfies $\eta_t \to 0$, that the proxy optima satisfy
  \begin{equation} \label{eq:mr2-app}
  \sum_{t = 0}^\infty \|\Xi_t^*\|_F < \infty,
\end{equation}
ensuring the existence of a limit $W_\infty^* := \lim_{t \to \infty} W_t^*$ and that
  \begin{equation} \label{eq:mr1-app}
    \sum_{t = 0}^\infty \eta_t \lambda_{\text{min}, V_\parallel}(\E[X_t X_t^\cT]) = \infty.
  \end{equation}
Then if either
\begin{equation} \label{eq:mr3-app}
  \sum_{t = 0}^\infty \eta_t^2 \E\left[ \|X_t X_t^\cT - \E[X_t X_t^\cT]\|_F^2 + \|Y_t X_t^\cT - \E[Y_t X_t^\cT]\|_F^2\right] < \infty
\end{equation}
or 
\begin{equation} \label{eq:mr3-opt}
  \sum_{t = 0}^\infty \eta_t^2 \E\Big[ \|X_t X_t^\cT - \E[X_t X_t^\cT]\|_F^2
    + \Big\|\E[W_t] (X_t X_t^\cT - \E[X_t X_t^\cT]) - (Y_t X_t^\cT - \E[Y_t X_t^\cT]) \Big\|_F^2 \Big] < \infty
\end{equation}
hold, then for any initialization $W_0$, we have $W_t Q_\parallel \overset{p} \to W_\infty^*$.  
\end{theorem}



\begin{remark} \label{rem:v-par}
 In the general case, the column span $V_{||}$ of $\E[X_t X_t^\cT]$ may vary with $t$.  This means
 that some directions in $\RR^n$ may only have non-zero overlap with $\colspan(\E[X_t X_t^\cT])$
 for some positive but finite collection of values of $t$. In this case, only finitely many steps of the optimization would move $W_t$ in this direction, meaning that we must define a smaller space for convergence. The correct definition of this subspace turns out to be the following
  \begin{align} 
\label{eq:mr1}    V_\parallel &:= \bigcap_{t = 0}^\infty \ker\left[\prod_{s = t}^\infty \Big(\Id - \frac{2\eta_s}{N} \E[X_s X_s^\cT]\Big)\right]\\
\notag & =\bigcap_{t = 0}^\infty \left\{u\in \RR^n ~\bigg|~ \lim_{T\gives \infty}\prod_{s = t}^T\Big(\Id - \frac{2\eta_s}{N} \E[X_s X_s^\cT]\Big)u = 0\right\}.
  \end{align}
With this re-definition of $V_{||}$ and with $Q_\parallel$ still denoting the orthogonal projection to $V_\parallel$, Theorem \ref{thm:mr} holds verbatim and with the same proof. Note that if $\eta_t \to 0$, $V_{||} = \colspan(\E[X_t X_t^\cT])$ is fixed in $t$, and  (\ref{eq:mr1-app}) holds, this definition of $V_\parallel$ reduces to that defined in (\ref{E:V-def}).
\end{remark}
\begin{remark} 
The condition \eqref{eq:mr3-opt} can be written in a more conceptual way as
\[
\sum_{t = 0}^\infty\left[\|X_t X_t^\cT - \E[X_t X_t^\cT]\|_F^2 + \eta_t^2 \Tr\left[\Id \circ \Var\Big((\E[W_t] X_t  - Y_t)X_t^\cT\Big)\right]\right] < \infty,
\]
where we recognize that $(\E[W_t] X_t  - Y_t)X_t^\cT$ is precisely the stochastic gradient
estimate at time $t$ for the proxy loss $\overline{\mathcal L}_t$, evaluated at $\Ee{W_t}$,
which is the location at time $t$ for vanilla GD on $\overline{\cL}_t$ since taking
expectations in the GD update equation (\ref{eq:aug-update}) coincides with GD for
$\overline{\cL}_t$. Moreover, condition \eqref{eq:mr3-opt} actually implies condition
\eqref{eq:mr3-app} (see \eqref{eq:c-bound} below).
The reason we state Theorem \ref{thm:mr} with both conditions, however, is that \eqref{eq:mr3-opt}
makes explicit reference to the average $\Ee{W_t}$ of the augmented trajectory. Thus, when applying
Theorem \ref{thm:mr} with this weaker condition, one must separately estimate the behavior
of this quantity.
\end{remark}

Theorem \ref{thm:mr} gave conditions on joint learning rate and data augmentation
schedules under which augmented optimization is guaranteed to converge. Our next
result proves rates for this convergence.

\begin{theorem} \label{thm:mr-quant}
Suppose that $\eta_t \to 0$ and that for some $0 < \alpha < 1 < \beta_1, \beta_2$ and $C_1, C_2 > 0$, we have
  \begin{equation} \label{eq:alpha2}
    \log \E\left[\left\|\Big(\prod_{r = s}^t \Big(\Id - \frac{2\eta_r}{N} X_r X_r^\cT\Big)\Big) Q_\parallel\right\|_2^2\right]
    < C_1 - C_2 \int_s^{t + 1} r^{-\alpha} dr
  \end{equation}
  as well as
  \begin{equation} \label{eq:beta1}
    \|\Xi_t^*\|_F = O(t^{-\beta_1})
\end{equation}
and
\begin{equation} \label{eq:beta2}
  \eta_t^2 \Tr\left[\Id \circ \Var(\E[W_t] X_t X_t^\cT - Y_t X_t^\cT\Big)\right] = O(t^{-\beta_2}).
\end{equation}
Then, for any initialization $W_0$, we have for any $\eps > 0$ that 
\[
t^{\min\{\beta_1 - 1, \frac{\beta_2 - \alpha}{2}\} - \eps} \|W_t Q_\parallel - W_\infty^*\|_F \overset{p} \to 0.
\]
\end{theorem}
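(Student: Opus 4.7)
The plan is to bound $\|W_t Q_\parallel - W_\infty^*\|_F$ via the decomposition
\[
W_t Q_\parallel - W_\infty^* = (W_t - \E[W_t]) Q_\parallel + (\E[W_t] Q_\parallel - W_t^*) + (W_t^* - W_\infty^*),
\]
handling the three terms respectively by a two-sided decay argument (Lemma \ref{lem:rec-matrix-rate2}), a one-sided decay argument (Lemma \ref{lem:rec-matrix-rate}), and telescoping. The drift piece follows immediately from (\ref{eq:beta1}): $\|W_t^* - W_\infty^*\|_F \leq \sum_{s \geq t} \|\Xi_s^*\|_F = O(t^{1 - \beta_1})$. For the bias $M_t := \E[W_t] Q_\parallel - W_t^*$, I would take expectations in the augmented update and use the identity $W_t^* \E[X_t X_t^\cT] = \E[Y_t X_t^\cT] Q_\parallel$ (which follows from $W_t^* = \E[Y_t X_t^\cT] \E[X_t X_t^\cT]^+$ together with the assumption that $V_\parallel$ is the column span of $\E[X_t X_t^\cT]$) to obtain the one-sided recursion $M_{t + 1} = M_t (\Id - \frac{2\eta_t}{N} \E[X_t X_t^\cT]) - \Xi_t^*$. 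Using Lemma \ref{lem:spec-bound} and independence of the $X_r$ to convert the random-product bound (\ref{eq:alpha2}) into a bound on the deterministic product $\prod_{r = s}^t (\Id - \frac{2\eta_r}{N} \E[X_r X_r^\cT])$, Lemma \ref{lem:rec-matrix-rate} combined with (\ref{eq:beta1}) yields $\|M_t\|_F = O(t^{\alpha - \beta_1}) = o(t^{1 - \beta_1})$, so this contribution is dominated by the drift.

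The core work lies in the variance term. Setting $\tilde W_t := W_t - \E[W_t]$ and subtracting the deterministic mean recursion from the augmented update gives
\[
\tilde W_{t + 1} = \tilde W_t \Big(\Id - \frac{2\eta_t}{N} X_t X_t^\cT\Big) - \frac{2\eta_t}{N} \Big[\E[W_t] (X_t X_t^\cT - \E[X_t X_t^\cT]) - (Y_t X_t^\cT - \E[Y_t X_t^\cT])\Big].
\]
I would then show that $Z_t := \E[\tilde W_t^\cT \tilde W_t]$ satisfies two-sided decay of type $(\{\frac{2\eta_t}{N} X_t X_t^\cT\}, \{C_t\})$, where $\Tr(C_t)$ is precisely the second-moment quantity appearing in (\ref{eq:beta2}). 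The key cancellation, and the main obstacle in the proof, is that the two cross terms in $\E[\tilde W_{t + 1}^\cT \tilde W_{t + 1}]$ vanish because $\tilde W_t$ is independent of $(X_t, Y_t)$ with $\E[\tilde W_t] = 0$; this is also what forces one to center the noise by $\E[W_t]$ rather than by $W_t^*$, and motivates the exact form of (\ref{eq:beta2}). Hypothesis (\ref{eq:alpha2}) plugs directly into Lemma \ref{lem:rec-matrix-rate2}, giving $\E[\|\tilde W_t Q_\parallel\|_F^2] = \Tr(Q_\parallel^\cT Z_t Q_\parallel) = O(t^{\alpha - \beta_2})$, so Markov's inequality upgrades this to $\|\tilde W_t Q_\parallel\|_F = o_p(t^{-(\beta_2 - \alpha)/2 + \eps})$ for every $\eps > 0$.

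Combining the three pieces gives
\[
\|W_t Q_\parallel - W_\infty^*\|_F = O(t^{-(\beta_1 - 1)}) + o_p(t^{-(\beta_2 - \alpha)/2 + \eps}) = o_p\Big(t^{-\min\{\beta_1 - 1,\, (\beta_2 - \alpha)/2\} + \eps}\Big),
\]
which is exactly the advertised rate. Once the one-sided recursion for $M_t$ and the two-sided recursion for $Z_t$ are set up and the cross-term cancellation is verified, the remainder of the argument is a mechanical application of the quantitative decay lemmas from Section \ref{sec:analytic}.
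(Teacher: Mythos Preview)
Your proposal is correct and follows essentially the same approach as the paper's own proof: the same three-term decomposition, the same one-sided recursion for $\E[W_t]Q_\parallel - W_t^*$ handled via Lemma \ref{lem:rec-matrix-rate} (after converting \eqref{eq:alpha2} to a deterministic-product bound via Lemma \ref{lem:spec-bound}), the same two-sided recursion for $Z_t$ handled via Lemma \ref{lem:rec-matrix-rate2}, and a Markov/Chebyshev step to pass from the second-moment bound to convergence in probability. The cross-term cancellation you highlight is precisely the content of Proposition \ref{prop:mr-moments}, which the paper invokes rather than rederives.
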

\begin{remark} \label{rem:reduction}
  To reduce Theorem \ref{thm:mr-quant-informal} to Theorem \ref{thm:mr-quant}, we notice that
  (\ref{eq:rate-inf1}) and (\ref{eq:rate-inf2}) mean that Theorem \ref{thm:mat-prod} applies
  to $Y_t = \Id - 2 \eta_t \frac{X_t X_t^\cT}{N}$ with $a_t = 1 - \Omega(t^{-\alpha})$ and
  and $b_t^2 = O(t^{-\gamma})$, thus implying (\ref{eq:alpha2}).  
\end{remark}

The first step in proving both Theorem \ref{thm:mr} and Theorem \ref{thm:mr-quant} is to obtain
recursions for the mean and variance of the difference $W_t - W_t^*$ between the time $t$ proxy
optimum and the augmented optimization trajectory at time $t.$ We will then complete the proof
of Theorem \ref{thm:mr} in \S \ref{sec:thm-mr-pf} and the proof of Theorem \ref{thm:mr-quant}
in \S \ref{sec:thm-mr-quant-pf}.

\subsection{Recursion relations for parameter moments}

The following proposition shows that difference between the mean augmented dynamics $\E[W_t]$ and the time$-t$ optimum $W_t^*$ satisfies, in the sense of Definition \ref{def:decay}, one-sided decay of type $(\set{A_t}, \set{B_t})$ with
\[
A_t = \frac{2\eta_t}{N} X_t X_t^\cT,\qquad B_t =  - \Xi_t^*.
\]
It also shows that the variance of this difference, which is non-negative definite, satisfies two-sided decay of type $(\set{A_t},\set{C_t})$ with $A_t$ as before and 
\[
C_t = \frac{4\eta_t^2}{N^2} \left[\Id \circ \Var\Big(\E[W_t] X_t X_t^\cT - Y_t X_t^\cT\Big)\right].
\]
In terms of the notations of Appendix \ref{sec:notations}, we have the following recursions.

\begin{prop} \label{prop:mr-moments}
  The quantity $\E[W_t] - W_t^*$ satisfies
  \begin{equation} \label{eq:exp-rec}
  \E[W_{t + 1}] - W_{t + 1}^* = (\E[W_t] - W_t^*) \Big(\Id - \frac{2\eta_t}{N} \E[X_t X_t^\cT]\Big) - \Xi_t^*
  \end{equation}
  and $Z_t := \E[(W_t - \E[W_t])^\cT (W_t - \E[W_t])]$ satisfies
  \begin{equation} \label{eq:sec-rec}
  Z_{t + 1} = \E\left[(\Id - \frac{2\eta_t}{N} X_t X_t^\cT) Z_t (\Id - \frac{2\eta_t}{N} X_t X_t^\cT)\right]
  + \frac{4\eta_t^2}{N^2} \left[\Id \circ \Var\Big(\E[W_t] X_t X_t^\cT - Y_t X_t^\cT\Big)\right].
  \end{equation}
\end{prop}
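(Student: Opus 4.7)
The plan is to start from the augmented update
\[
W_{t+1} = W_t\Big(\Id - \tfrac{2\eta_t}{N} X_t X_t^\cT\Big) + \tfrac{2\eta_t}{N}\, Y_t X_t^\cT
\]
and track its mean and centered fluctuation separately. The structural observation I will invoke repeatedly is that $W_t$ is a function of $W_0$ and of the augmentations $(X_s,Y_s)$ for $s<t$ only, hence is independent of the fresh sample $(X_t,Y_t)$; every factorization of expectations below rests on this independence.

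For \eqref{eq:exp-rec}, I first take expectations in the update and use independence to obtain
\[
\E[W_{t+1}] = \E[W_t]\Big(\Id - \tfrac{2\eta_t}{N}\E[X_t X_t^\cT]\Big) + \tfrac{2\eta_t}{N}\,\E[Y_t X_t^\cT].
\]
To bring $W_t^*$ into the picture, I need the identity $\E[Y_t X_t^\cT] = W_t^*\,\E[X_t X_t^\cT]$. This follows from the pseudo-inverse definition of $W_t^*$ together with the range inclusion $\ker\E[X_t X_t^\cT] \subseteq \ker\E[Y_t X_t^\cT]$: any $v$ with $v^\cT\E[X_tX_t^\cT]v = 0$ satisfies $X_t^\cT v = 0$ almost surely, so $\E[Y_tX_t^\cT] v = 0$. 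Plugging in, subtracting $W_{t+1}^* = W_t^* + \Xi_t^*$ from both sides, and regrouping gives \eqref{eq:exp-rec}.

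For \eqref{eq:sec-rec}, I set $U_t := W_t - \E[W_t]$ and subtract the mean recursion from the raw update to arrive at
\[
U_{t+1} = U_t\Big(\Id - \tfrac{2\eta_t}{N} X_t X_t^\cT\Big) - \tfrac{2\eta_t}{N}\, M_t,
\]
where $M_t := \E[W_t](X_tX_t^\cT - \E[X_tX_t^\cT]) - (Y_tX_t^\cT - \E[Y_tX_t^\cT])$ is a centered matrix depending only on $(X_t,Y_t)$. Expanding $U_{t+1}^\cT U_{t+1}$ yields four terms, and the two cross terms vanish in expectation because $U_t$ is independent of $(X_t,Y_t)$ and both $U_t$ and $M_t$ are centered. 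The two surviving diagonal terms are precisely the two-sided decay piece $\E[(\Id - \tfrac{2\eta_t}{N}X_tX_t^\cT)\,Z_t\,(\Id - \tfrac{2\eta_t}{N}X_tX_t^\cT)]$ and the forcing $\tfrac{4\eta_t^2}{N^2}\,\E[M_t^\cT M_t]$. Unpacking the definitions $\Var(A) = \E[A^\cT\otimes A] - \E[A]^\cT\otimes\E[A]$ and $\Id\circ(B\otimes C) = BC$ from \S\ref{sec:notations}, the latter equals $\tfrac{4\eta_t^2}{N^2}\,\Id\circ\Var(\E[W_t]X_tX_t^\cT - Y_tX_t^\cT)$, matching \eqref{eq:sec-rec}.

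The only nontrivial point is the range-inclusion identity used to pull $W_t^*$ into the mean recursion; once this is in hand, the rest is a mechanical expansion made clean by the independence of $W_t$ from the time-$t$ augmentation and by the vanishing of cross-term expectations in the centered quantities $U_t$ and $M_t$.
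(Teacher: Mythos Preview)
Your proof is correct. For the mean recursion \eqref{eq:exp-rec} you do exactly what the paper does: take expectations in the update, use the range-inclusion observation $\ker\E[X_tX_t^\cT]\subseteq\ker\E[Y_tX_t^\cT]$ to obtain $W_t^*\,\E[X_tX_t^\cT]=\E[Y_tX_t^\cT]$, and subtract $W_{t+1}^*$.

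For the variance recursion \eqref{eq:sec-rec} your organization differs from the paper's and is somewhat cleaner. The paper expands $\E[W_{t+1}]^\cT\E[W_{t+1}]$ and $\E[W_{t+1}^\cT W_{t+1}]$ separately in full, then subtracts and invokes independence of $W_t$ from $(X_t,Y_t)$ to see all the cancellations at once. You instead first pass to the centered object $U_t=W_t-\E[W_t]$, derive the recursion $U_{t+1}=U_t(\Id-\tfrac{2\eta_t}{N}X_tX_t^\cT)-\tfrac{2\eta_t}{N}M_t$, and then expand $U_{t+1}^\cT U_{t+1}$. Because $U_t$ and $M_t$ are both mean zero and $U_t$ is independent of $(X_t,Y_t)$, the cross terms vanish immediately, leaving exactly the two-sided decay block and the forcing term. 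This makes the structure of \eqref{eq:sec-rec} visible from the start rather than emerging only after a long subtraction; the paper's direct expansion, on the other hand, requires no auxiliary decomposition and would adapt more mechanically if one wanted higher moments.
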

\begin{proof}
Notice that $\E[X_t X_t^\cT]u = 0$ if and only if $X_t^\cT u = 0$ almost surely, which implies that 
\[
W_t^* \E[X_t X_t^\cT] = \E[Y_t X_t^\cT] \E[X_t X_t^\cT]^+ \E[X_t X_t^\cT] = \E[Y_t X_t^\cT].
\]
Thus, the learning dynamics (\ref{eq:aug-update}) yield
\begin{align*}
  \E[W_{t + 1}] &= \E[W_t] - \frac{2\eta_t}{N} \Big(\E[W_t] \E[X_t X_t^\cT] - \E[Y_t X_t^\cT]\Big)\\
  &= \E[W_t] - \frac{2\eta_t}{N} (\E[W_t] - W_t^*) \E[X_t X_t^\cT].
\end{align*}
Subtracting $W_{t+1}^*$ from both sides yields \eqref{eq:exp-rec}. We now analyze the fluctuations. Writing $\Sym(A) := A + A^\cT$, we have
\begin{multline*}
  \E[W_{t + 1}]^\cT \E[W_{t + 1}] = \E[W_t]^\cT \E[W_t] + \frac{2\eta_t}{N} \Sym\Big(\E[W_t]^\cT \E[Y_t X_t^\cT]
  - \E[W_t]^\cT \E[W_t] \E[X_t X_t^\cT]\Big)\\
  + \frac{4\eta_t^2}{N^2} \Big(\E[X_t X_t^\cT] \E[W_t]^\cT \E[W_t] \E[X_t X_t^\cT] + \E[X_tY_t^\cT]\E[Y_tX_t^\cT]
  - \Sym(\E[X_t X_t^\cT] \E[W_t]^\cT \E[Y_t X_t^\cT])\Big).
\end{multline*}
Similarly, we have that
\begin{multline*}
  \E[W_{t + 1}^\cT W_{t + 1}] = \E[W_t^\cT W_t] + \frac{2\eta_t}{N} \Sym(\E[W_t^\cT Y_t X_t^\cT - W_t^\cT W_t X_t X_t^\cT])\\
  + \frac{4\eta_t^2}{N^2} \E[X_t X_t^\cT W_t^\cT W_t X_t X_t^\cT - \Sym(X_t X_t^\cT W_t^\cT Y_t X_t^\cT)
    + X_t Y_t^\cT Y_t X_t^\cT].
\end{multline*}
Noting that $X_t$ and $Y_t$ are independent of $W_t$ and subtracting yields the desired.
\end{proof}

\subsection{Proof of Theorem \ref{thm:mr}} \label{sec:thm-mr-pf}

First, by Proposition \ref{prop:mr-moments}, we see that $\E[W_t] - W_t^*$ has one-sided decay with
\[
A_t = 2\eta_t \frac{X_t X_t^\cT}{N} \qquad \text{and} \qquad B_t = -\Xi_t^*.
\]
Thus, by Lemma \ref{lem:rec-matrix-bound1} and (\ref{eq:mr2-app}), we find that
\begin{equation} \label{eq:conv-exp}
\lim_{t \to \infty} (\E[W_t] Q_\parallel - W_t^*) = 0,
\end{equation}
which gives convergence in expectation.

For the second moment, by Proposition \ref{prop:mr-moments}, we see that $Z_t$ has two-sided decay with
\[
A_t = 2\eta_t \frac{X_tX_t^\cT}{N} \qquad \text{and}
\qquad C_t = \frac{4\eta_t^2}{N^2} \left[\Id \circ \Var\Big(\E[W_t] X_t X_t^\cT - Y_t X_t^\cT\Big)\right].
\]
We now verify (\ref{eq:sq-norm}) and (\ref{eq:lam-lim2}) in order to apply Lemma \ref{lem:rec-matrix-bound2}.

For (\ref{eq:sq-norm}), for any $\eps > 0$, notice that
\[
\E[\|A_s - \E[A_s]\|_F^2] = \eta_s^2 \E[\| X_s X_s^\cT - \E[X_s X_s^\cT]\|_F^2]
\]
so by either (\ref{eq:mr3-app}) or (\ref{eq:mr3-opt}) we may choose $T_1 > t$ so that
$\sum_{s = T_1}^\infty \E[\|A_s - \E[A_s]\|_F^2] < \frac{\eps}{2}$. Now choose $T_2 > T_1$ so
that for $T > T_2$, we have
\[
\left\| \Big(\prod_{r = T_1}^T \E[\Id - A_r]\Big) Q_\parallel\right\|_2^2
< \frac{\eps}{2} \frac{1}{\|\prod_{s = t}^{T_1 - 1} \E[\Id - A_s]\|_F^2 + \sum_{s = t}^{T_1 - 1} \E[\|A_s - \E[A_s]\|_F^2]}.
\]
For $T > T_2$, we then have
\begin{align*}
  &\E\left[\left\|\Big(\prod_{s = t}^T (\Id - A_s)\Big) Q_\parallel\right\|_2^2\right]\\
  &\leq \left\|\Big(\prod_{s = t}^T \E[\Id - A_s]\Big) Q_\parallel\right\|^2
   + \sum_{s = t}^T \E\left[\left\|\prod_{r = t}^{s} (\Id - A_r) \prod_{r = s + 1}^T (\Id - \E[A_r]) Q_\parallel\right\|_F^2 - \left\|\prod_{r = t}^{s - 1} (\Id - A_r) \prod_{r = s}^T (\Id - \E[A_r]) Q_\parallel\right\|_F^2\right]\\
  &=\left\|\Big(\prod_{s = t}^T \E[\Id - A_s]\Big) Q_\parallel\right\|_F^2 + \sum_{s = t}^T \E\left[\left\|\prod_{r = t}^{s - 1} (\Id - A_r) (A_s - \E[A_s]) \prod_{r = s + 1}^T (\Id - \E[A_r]) Q_\parallel\right\|_F^2\right]\\
  &\leq \left\|\prod_{s = t}^{T_1 - 1} \E[\Id - A_s]\right\|^2 \left\|_F \Big(\prod_{r = T_1}^T \E[\Id - A_r]\Big) Q_\parallel\right\|_2^2 + \sum_{s = t}^T \E[\|A_s - \E[A_s]\|_F^2] \left\| \Big(\prod_{r = s + 1}^T \E[\Id - A_r]\Big) Q_\parallel\right\|_2^2\\
  &\leq \Big(\left\|\prod_{s = t}^{T_1 - 1} \E[\Id - A_s]\right\|_F^2 + \sum_{s = t}^{T_1 - 1} \E[\|A_s - \E[A_s]\|_F^2]\Big) \left\| \Big(\prod_{r = T_1}^T \E[\Id - A_r]\Big) Q_\parallel\right\|_2^2 + \sum_{s = T_1}^T \E[\|A_s - \E[A_s]\|_F^2]\\
  &< \eps,
\end{align*}
which implies (\ref{eq:sq-norm}).  Condition (\ref{eq:lam-lim2}) follows from either (\ref{eq:mr3-opt}) or (\ref{eq:mr3-app}) and the bounds
\begin{align} \label{eq:c-bound}
  \Tr(C_t) &\leq \frac{8\eta_t^2}{N^2} \left( \|\E[W_t] (X_t X_t^\cT - \E[X_t X_t^\cT])\|_F^2 + \|Y_t X_t^\cT - \E[Y_t X_t^\cT]\|_F^2\right)\\  \nonumber
  &\leq \frac{8\eta_t^2}{N^2} \left( \|\E[W_t]\|^2 \|X_t X_t^\cT - \E[X_t X_t^\cT]\|_F^2 + \|Y_t X_t^\cT - \E[Y_t X_t^\cT]\|_F^2\right),
\end{align}
where in the first inequality we use the fact that $\|M_1 - M_2\|_F^2 \leq 2(\|M_1\|_F^2 + \|M_2\|_F^2)$. 
Furthermore, iterating (\ref{eq:exp-rec}) yields $\|\E[W_t] - W_t^*\|_F \leq \|W_0 - W_0^*\|_F + \sum_{t = 0}^\infty \|\Xi_t^*\|_F$, which
combined with (\ref{eq:c-bound}) and either (\ref{eq:mr3-app}) or \eqref{eq:mr3-opt} therefore
implies (\ref{eq:lam-lim2}). We conclude by Lemma \ref{lem:rec-matrix-bound2} that
\begin{equation} \label{eq:conv-var}
\lim_{t \to \infty} Q_\parallel^\cT Z_t Q_\parallel = \lim_{t \to \infty} \E[Q_\parallel^\cT (W_t - \E[W_t])^\cT (W_t - \E[W_t]) Q_\parallel] = 0.
\end{equation}
Together, (\ref{eq:conv-exp}) and (\ref{eq:conv-var}) imply that $W_t Q_\parallel - W_t^* \overset{p} \to 0$.
The conclusion then follows from the fact that $\lim_{t \to 0} W_t^* = W_\infty^*$. This complete the proof of Theorem \ref{thm:mr}. \hfill $\square$

\subsection{Proof of Theorem \ref{thm:mr-quant}}\label{sec:thm-mr-quant-pf}

By Proposition \ref{prop:mr-moments}, $\E[W_t] - W_t^*$ has one-sided decay with 
\[
A_t = \frac{2\eta_t}{N} X_t X_t^\cT,\qquad B_t = - \Xi_t^*.
\]
By Lemma \ref{lem:spec-bound} and (\ref{eq:alpha2}), $\E[A_t]$ satisfies
\begin{align*}
  \log \left\|\prod_{r = s}^t \Big(\Id - 2 \eta_r \frac{1}{N} \E[X_r X_r^\cT]\Big) Q_\parallel \right\|_2 
  &\leq \frac{1}{2} \log \E\left[\left\|\Big(\prod_{r = s}^t \Big(\Id - 2 \eta_r \frac{X_rX_r^\cT}{N}\Big)\Big) Q_\parallel \right\|_2^2\right]\\
  &< \frac{C_1}{2} - \frac{C_2}{2} \int_s^{t + 1} r^{-\alpha} dr.
\end{align*}
Applying Lemma \ref{lem:rec-matrix-rate} using this bound and (\ref{eq:beta1}), we find that
\[
\|\E[W_t] Q_\parallel - W_t^*\|_F = O(t^{\alpha - \beta_1}).
\]
Moreover, because $\|\Xi_t^*\|_F = O(t^{-\beta_1})$, we also find that $\|W_t^* - W_\infty^*\|_F = O(t^{-\beta_1 + 1})$, and hence
\[
\|\E[W_t] Q_\parallel - W_\infty^*\|_F = O(t^{-\beta_1 + 1}).
\]
Further, by Proposition \ref{prop:mr-moments}, $\E[(W_t - \E[W_t])^\cT(W_t - \E[W_t])]$ has two-sided
decay with 
\[
A_t = \frac{2\eta_t}{N} X_t X_t^\cT,
\qquad C_t = \frac{4\eta_t^2}{N^2} \left[\Id \circ \Var\Big(\E[W_t] X_t X_t^\cT - Y_t X_t^\cT\Big)\right].
\]
Applying Lemma \ref{lem:rec-matrix-rate2} with (\ref{eq:alpha2}) and (\ref{eq:beta2}), we find that
\[
\E\left[\|(W_t - \E[W_t]) Q_\parallel\|_F^2\right] = O(t^{\alpha - \beta_2}).
\]
By Chebyshev's inequality, for any $x > 0$ we have
\[
\P\Big(\|W_t Q_\parallel - W_\infty^*\|_F \geq O(t^{-\beta_1 + 1}) + x \cdot O(t^{\frac{\alpha - \beta_2}{2}})\Big) \leq x^{-2}.
\]
For any $\eps > 0$, choosing $x = t^{\delta}$ for small $0 < \delta < \eps$ we find as desired that
\[
t^{\min\{\beta_1 - 1, \frac{\beta_2 - \alpha}{2}\} - \eps} \|W_t Q_\parallel - W_\infty^*\|_F \overset{p} \to 0,
\]
thus completing the proof of Theorem \ref{thm:mr-quant}.\hfill$\square$

\section{Intrinsic time}

Theorem \ref{thm:mr-quant-informal} measures rates in terms of optimization steps $t$,
but a different measurement of time called the \emph{intrinsic time} of the optimization
will be more suitable for measuring the behavior of optimization quantities.  This was
introduced for SGD in \cite{smith2018bayesian, smith2018don}, and we now generalize it
to our broader setting. For gradient descent on a loss $\cL$, the intrinsic
time is a quantity which increments by $\eta \lambda_{\text{min}}(H)$ for a optimization
step with learning rate $\eta$ at a point where $\cL$ has Hessian $H$. When specialized
to our setting, it is given by 
\begin{equation} \label{eq:intrinsic-def}
\tau(t) := \sum_{s = 0}^{t - 1} \frac{2\eta_s}{N} \lambda_{\text{min}, V_\parallel}(\E[X_sX_s^\cT]).
\end{equation}
Notice that intrinsic time of augmented optimization for the sequence of proxy losses
$\overline{\cL}_s$ appears in Theorems \ref{thm:mr-main} and \ref{thm:mr-quant-informal},
which require via conditions \eqref{eq:mr1-main} and \eqref{eq:rate-inf2} that the intrinsic
time tends to infinity as the number of optimization steps grows.

Intrinsic time will be a sensible variable in which to measure the behavior of quantities
such as the fluctuations of the optimization path $f(t) := \E[\|(W_t - \E[W_t])Q_{\parallel}\|_F^2]$.
In the proofs of Theorems \ref{thm:mr-main} and \ref{thm:mr-quant-informal}, we show
that the fluctuations satisfy an inequality of the form 
\begin{equation} \label{eq:int-rec}
f(t+1) \leq f(t)(1 - a(t))^2 + b(t)
\end{equation}
for $a(t) := 2\eta_t \frac{1}{N} \lambda_{\text{min}, V_\parallel}(\E[X_tX_t^\cT])$ and
$b(t) := \Var[\norm{\eta_t\nabla_W \cL (W_t)}_F]$ so that $\tau(t) = \sum_{s = 0}^{t - 1} a(s)$.
Iterating the recursion (\ref{eq:int-rec}) shows that
\begin{align*}
  f(t) &\leq f(0) \prod_{s = 0}^{t - 1} (1 - a(s))^2 + \sum_{s = 0}^{t - 1} b(s) \prod_{r = s + 1}^{t - 1} (1 - a(r))^2\\
  &\leq e^{-2 \tau(t)}f(0) + \sum_{s = 0}^{t - 1} \frac{b(s)}{a(s)} e^{2 \tau(s + 1) - 2 \tau(t)} (\tau(s + 1) - \tau(s)).
\end{align*}
Changing variables to $\tau := \tau(t)$ and defining $A(\tau)$, $B(\tau)$, and $F(\tau)$ by $A(\tau(t)) = a(t)$, $B(\tau(t)) = b(t)$, and $F(\tau(t)) = f(t)$, we find
by replacing a right Riemann sum by an integral that
\begin{equation} \label{eq:int-time-int}
F(\tau) \precsim e^{-2\tau} \left[F(0)
+ \int_0^{\tau} \frac{B(\sigma)}{A(\sigma)} e^{2\sigma} d\sigma\right].
\end{equation}
In order for the result of optimization to be independent of the starting point,
by (\ref{eq:int-time-int}) we must have $\tau \to \infty$ to remove the dependence
on $F(0)$; this provides one explanation for the appearance of $\tau$ in condition
(\ref{eq:mr1-main}).  Further, (\ref{eq:int-time-int}) implies that the fluctuations
at an intrinsic time are bounded by an integral against the function
$\frac{B(\sigma)}{A(\sigma)}$ which depends only on the ratio of $A(\sigma)$ and $B(\sigma)$.

In the case of minibatch SGD, our proof of Theorem \ref{thm:sgd} shows the intrinsic time is
$\tau(t) = \sum_{s = 0}^{t - 1} 2\eta_s \frac{1}{N} \lambda_{\text{min}, V_\parallel}(XX^\cT)$
and the ratio $\frac{b(t)}{a(t)}$ in (\ref{eq:int-time-int}) is by (\ref{eq:sgd-c-bound})
bounded uniformly by $\frac{b(t)}{a(t)} \leq C \cdot \frac{\eta_t}{B_t}$ for a constant $C > 0$.
Thus, keeping $\frac{b(t)}{a(t)}$ fixed as a function of $\tau$ suggests the ``linear scaling''
 $\eta_t \propto B_t$ used empirically in \cite{goyal2017accurate} and proposed via
an heuristic SDE limit in \cite{smith2018don}.

\section{Analysis of Noising Augmentations} \label{sec:noise-analysis}

In this section, we give a full analysis of the noising augmentations presented in
Section \ref{sec:add-noise}. Let us briefly recall the notation. As before, we
consider overparameterized linear regression with loss
\[
\cL(W;\mathcal D) = \frac{1}{N}\norm{WX-Y}_F^2,
\]
where the dataset $\mathcal D$ of size $N$ consists of data matrices $X,Y$ that
each have $N$ columns $x_i\in \R^n,y_i\in \R^p$ with $n >N.$ We optimize
$\mathcal L(W;\mathcal D)$ by augmented gradient descent or augmented stochastic gradient
descent with additive Gaussian noise.  This means that at each time $t$ we
replace $\mathcal D = (X,Y)$ by a random batch $\mathcal D_t = (X_t,Y)$ of
size $B_t$, where the columns $x_{i,t}$ of $X_t$ are
\[
x_{i,t} = x_i + \sigma_t G_{i, t},\qquad G_{i, t}\sim \mathcal N(0,1)\text{   i.i.d.}
\]
In the case of gradient descent, the batch consists of the entire dataset, and
the resulting data matrices are
\[
X_t = X + \sigma_t G_t \qquad \text{and} \qquad Y_t = Y.
\]
In the case of stochastic gradient descent, the batch consists of $B_t$ datapoints
chosen uniformly at random with replacement, and the resulting data matrices are
\[
X_t = c_t (X A_t + \sigma_t G_t) \qquad \text{and} \qquad Y_t = c_t Y A_t,
\]
where $c_t = \sqrt{N/B_t}$, $A_t \in \RR^{N \times B_t}$ has i.i.d. columns with
a single non-zero entry equal to $1$, and $G_t \in \RR^{n \times B_t}$ has i.i.d.
Gaussian entries.  In both cases, the proxy loss is
\[
\oL_t(W) := \frac{1}{N} \|Y - W X\|_F^2 + \sigma_t^2 \|W\|_F^2,
\]
which has ridge minimizer
\[
W_t^* = Y X^\cT (XX^\cT + \sigma_t^2 N \cdot \Id_{n \times n})^{-1},
\]
and the space $V_\parallel := \text{ column span of $\E[X_t X_t^\cT]$}$ is all of $\R^n$.
We now separately analyze the cases of GD and SGD in Theorems \ref{thm:gauss-gd}
and Theorem \ref{thm:gauss-sgd}, respectively.

\subsection{Proof of Theorem \ref{thm:gauss-gd} for GD} \label{sec:gauss-noise-analysis}

We begin by proving convergence without rates. For this, we seek to show that if $\sigma_t^2,\eta_t \to 0$
with $\sigma_t^2$ non-increasing and
  \begin{equation} \label{eq:gauss-mr-app}
    \sum_{t = 0}^\infty \eta_t \sigma_t^2 = \infty \qquad \text{and} \qquad
    \sum_{t = 0}^\infty \eta_t^2 \sigma_t^2 < \infty,
  \end{equation}
then, $W_t \overset{p} \to W_{\text{min}}$. 
We will do this by applying Theorem \ref{thm:mr-main}, so we check that our assumptions
imply the hypotheses of these theorems. For Theorem \ref{thm:mr-main}, we directly compute
\[
\E[Y_t X_t^\cT] = Y X^\cT \qquad \text{and} \qquad \E[X_t X_t^\cT] = X X^\cT + \sigma_t^2 N \cdot \Id_{n \times n}
\]
and 
\begin{align*}
  \E[X_t X_t^\cT X_t] &= XX^\cT X + \sigma_t^2 (N + n + 1) X \\
  \E[X_t X_t^\cT X_t X_t^\cT] &= X X^\cT X X^\cT + \sigma_t^2 \Big( (2N + n + 2) X X^\cT + \Tr(X X^\cT) \Id_{n \times n} \Big)
  + \sigma_t^4 N(N + n + 1) \Id_{n \times n}.
\end{align*}
We also find that 
\begin{align*}
  \|\Xi_t^*\|_F &= |\sigma_t^2 - \sigma_{t + 1}^2| N \left\|Y X^\cT \Big(X X^\cT + \sigma_t^2 N \cdot \Id_{n \times n}\Big)^{-1}
  \Big(X X^\cT + \sigma_{t + 1}^2 N \cdot \Id_{n \times n}\Big)^{-1}\right\|_F\\
  &\leq |\sigma_t^2 - \sigma_{t + 1}^2| N \|Y X^\cT [(X X^\cT)^+]^2 \|_F.
\end{align*}
Thus, because $\sigma_t^2$ is decreasing, we see that the hypothesis (\ref{eq:mr2-main}) of Theorem \ref{thm:mr-main} indeed holds. Further, we note that
\begin{multline*}
  \sum_{t = 0}^\infty \eta_t^2 \E\left[ \|X_t X_t^\cT - \E[X_t X_t^\cT]\|_F^2 + \|Y_t X_t^\cT - \E[Y_t X_t^\cT]\|_F^2\right]\\
  = \sum_{t = 0}^\infty \eta_t^2 \sigma_t^2 \Big(2(n + 1) \|X\|_F^2 + N \|Y\|_F^2 + \sigma_t^2 N n(n + 1)\Big)= O\lrr{\sum_{t = 0}^\infty \eta_t^2 \sigma_t^2},
\end{multline*}
which by \eqref{eq:gauss-mr-app} implies (\ref{eq:mr3-app}). Theorem \ref{thm:mr-main} and the fact that
$\lim_{t \to \infty} W_t^* = W_{\text{min}}$ therefore yield that $W_t \overset{p} \to W_{\text{min}}$.

We now prove convergence with rates, we aim to show that if $\eta_t = \Theta(t^{-x})$ and $\sigma_t^2 = \Theta(t^{-y})$
with $x, y > 0$, $x + y < 1$, and $2x + y > 1$, then for any $\eps > 0$, we have that
\[
t^{\min\{y, \frac{1}{2}x\} - \eps} \|W_t - W_{\text{min}}\|_F \overset{p} \to 0.
\]
We now check the hypotheses for and apply Theorem \ref{thm:mr-quant}. For (\ref{eq:alpha2}),
notice that $Y_r = \Id - 2\eta_r \frac{X_r X_r^\cT}{N}$ satisfies the hypotheses of Theorem \ref{thm:mat-prod}
with $a_r = 1 - 2\eta_r \sigma_r^2$ and $b_r^2 = \frac{\eta_r^2 \sigma_r^2}{a_r^2} \Big(2 (n + 1) \|X\|_F^2 + \sigma_r^2 N n (n + 1)\Big)$.
Thus, by Theorem \ref{thm:mat-prod} and the fact that $\eta_t = \Theta(t^{-x})$ and $\sigma_t^2 = \Theta(t^{-y})$,
we find for some $C_1, C_2 > 0$ that
\[
  \log \E\left[\left\|\prod_{r = s}^t (\Id - 2\eta_r \frac{X_r X_r^\cT}{N})\right\|^2_2\right]
  \leq \sum_{r = s}^t b_r^2 + 2 \sum_{r = s}^t \log(1 - 2 \eta_r \sigma_r^2)
  \leq C_1 - C_2 \int_s^{t + 1} r^{- x - y} dr.
\]
For (\ref{eq:beta1}), we find that
\[
\|\Xi_t^*\|_F \leq |\sigma_t^2 - \sigma_{t + 1}^2| N \|Y X^\cT [(X X^\cT)^+]^2 \|_F = O(t^{-y - 1}).
\]
Finally, for (\ref{eq:beta2}), we find that
\begin{align*}
\eta_t^2 &\Tr\left[\Id \circ \Var\Big(\E[W_t] X_t X_t^\cT - Y_t X_t^\cT\Big)\right]
= \eta_t^2 \E\Big[\|\E[W_t](X_t X_t^\cT - \E[X_t X_t^\cT]) - (Y_t X_t^\cT - \E[Y_t X_t^\cT])\|_F^2\Big]\\
&\leq 2 \eta_t^2 \E\Big[\|\E[W_t](X_t X_t^\cT - \E[X_t X_t^\cT])\|_F^2 + \|Y_t X_t^\cT - \E[Y_t X_t^\cT]\|_F^2\Big]\\
&\leq 2 \eta_t^2 \Big(\|\E[W_t]\|_F^2 \E[\|X_t X_t^\cT - \E[X_t X_t^\cT]\|_F^2] + \|Y_t X_t^\cT - \E[Y_t X_t^\cT]\|_F^2\Big)\\
&= O(t^{-2x - y}).
\end{align*}
Noting finally that $\|W_t^* - W_{\text{min}}\|_F = O(\sigma_t^2) = O(t^{-y})$, we apply Theorem \ref{thm:mr-quant}
with $\alpha = x + y$, $\beta_1 = y + 1$, and $\beta_2 = 2x + y$ to obtain the desired estimates.
This concludes the proof of Theorem \ref{thm:gauss-gd}. \hfill $\square$

\subsection{Proof of Theorem \ref{thm:gauss-sgd} for SGD} \label{sec:gauss-noise-sgd-analysis}

We now prove Theorem \ref{thm:gauss-sgd} for SGD.  As before, we will apply
Theorems \ref{thm:mr-main} and \ref{thm:mr-quant}.  To check the hypotheses of
these theorems, we will use the following expressions for moments of the augmented data matrices.

\begin{lemma}\label{lem:noise-moments}
We have
\begin{equation}\label{eq:second-moments-gauss-sgd}
\E[Y_t X_t^\cT] = Y X^\cT \qquad \text{and} \qquad \E[X_t X_t^\cT] = XX^\cT + \sigma_t^2 N \Id_{n \times n}.
\end{equation}
Moreover, 
\begin{align*}
  \E[Y_t X_t^\cT X_t Y_t^\cT] &= c_t^4 \E[Y A_t A_t^\cT X^\cT X A_t A_t^\cT Y^\cT + \sigma_t^2 Y A_t G_t^\cT G_t A_t^\cT Y^\cT]\\
  &= \frac{N}{B_t} Y \diag(X^\cT X) Y^\cT + \frac{B_t - 1}{B_t} Y X^\cT X Y^\cT + \sigma_t^2 N Y Y^\cT \\
  \E[Y_t X_t^\cT X_t X_t^\cT] &= c_t^4\E[ Y A_t A_t^\cT X^\cT X A_t A_t^\cT X^\cT + \sigma_t^2 Y A_t G_t^\cT G_t A_t^\cT X^\cT\\
    &\phantom{==} + \sigma_t^2 Y A_t G_t^\cT X A_t G_t^\cT + \sigma_t^2 Y A_t A_t^\cT X^\cT G_t G_t^\cT]\\
  &= \frac{N}{B_t} Y \diag(X^\cT X) X^\cT + \frac{B_t - 1}{B_t} Y X^\cT X X^\cT + \sigma_t^2 (N + \frac{n + 1}{B_t / N}) Y X^\cT\\
  \E[X_t X_t^\cT X_t X_t^\cT] &= c_t^4 \E[X A_t A_t^\cT X^\cT X A_t A_t^\cT X^\cT + \sigma_t^2 G_t G_t^\cT X A_t A_t^\cT X^\cT
    + \sigma_t^2 X A_t G_t^\cT G_t A_t^\cT X^\cT\\
    &\phantom{==} + \sigma_t^2 X A_t A_t^\cT X^\cT G_t G_t^\cT + \sigma_t^2 G_t A_t^\cT X^\cT G_t A_t^\cT X^\cT
    + \sigma_t^2 X A_t G_t^\cT X A_t G_t^\cT\\
    &\phantom{==} + \sigma_t^2 G_t A_t^\cT X^\cT X A_t G_t^\cT + \sigma_t^4 G_t G_t^\cT G_t G_t^\cT]\\
  &= \frac{N}{B_t} X \diag(X^\cT X) X^\cT + \frac{B_t - 1}{B_t} X X^\cT X X^\cT + \sigma_t^2 (2N + \frac{n + 2}{B_t /N}) X X^\cT\\
  &\phantom{==} + \sigma_t^2 \frac{N}{B_t} \Tr(XX^\cT) \Id_{n \times n} + \sigma_t^4 N(N + \frac{n + 1}{B_t / N}) \Id_{n \times n}.
\end{align*}
\end{lemma}
\begin{proof}
All these formulas are obtained by direct, if slightly tedious, computation. 
\end{proof}

With these expressions in hand, we now check the conditions of Theorem \ref{thm:mr-main}.
First, by the Sherman-Morrison-Woodbury matrix inversion formula, we have
\begin{align} \label{eq:gauss-sgd-xi-diff}
\|\Xi_t^*\|_F &= |\sigma_t^2N - \sigma_{t + 1}^2 N| \left\| YX^\cT(XX^\cT + \sigma_t^2 N \cdot \Id_{n \times n})^{-1}
(XX^\cT + \sigma_{t + 1}^2 N \cdot \Id_{n \times n})^{-1}\right\|_F\\ \nonumber
&\leq N |\sigma_t^2  - \sigma_{t + 1}^2| \left\|Y X^\cT [(XX^\cT)^+]^2\right\|_F.
\end{align}
Because $\sigma_t^2$ is non-increasing, this implies (\ref{eq:mr2-main}).  Next,
by Lemma \ref{lem:noise-moments} we have that
\[
\sum_{t = 0}^\infty \eta_t \lambda_{\text{min}, V_\parallel}(\E[X_t X_t^\cT]) \geq \sum_{t = 0}^\infty \eta_t \sigma_t^2 = \infty
\]
by the first given condition in (\ref{eq:sgd-noise-mr}), which verifies (\ref{eq:mr1-main}).
Finally, by Lemma \ref{lem:noise-moments}, we may compute
\begin{multline*}
\Ee{\norm{X_tX_t^\cT - \Ee{X_tX_t^\cT}}_F^2} \\ = \frac{1}{B_t} \Tr\Big(X (N \diag(X^\cT X) - X^\cT X) X^\cT\Big)
  + 2 \sigma_t^2\frac{n + 1}{B_t / N} \Tr(X X^\cT)  + \sigma_t^4 \frac{N n (n + 1)}{B_t / N}
\end{multline*}
and
\[
\E\left[\|Y_t X_t^\cT - \E[Y_t X_t^\cT]\|_F^2\right]
= \frac{1}{B_t} \Tr\Big(Y (N \diag(X^\cT X) - X^\cT X) Y^\cT\Big) + \sigma_t^2 N \Tr(Y Y^\cT).
\]
Together, these imply that for some constant $C > 0$, we have that
\[
\sum_{t = 0}^\infty \eta_t^2 \left[ \Ee{\norm{X_t X_t^\cT - \Ee{X_t X_t^\cT}}_F^2} + \Ee{\norm{Y_t X_t^\cT - \E[Y_t X_t^\cT]}_F^2}\right]
\leq \sum_{t = 0}^\infty C \eta_t^2 < \infty
\]
by the second given condition in (\ref{eq:sgd-noise-mr}), which verifies (\ref{eq:mr3-main}).
Thus the conditions of Theorem \ref{thm:mr-main} apply, which shows that $W_t \overset{p} \to W_\text{min}$,
as desired.

For rates of convergence, we check the conditions of Theorem \ref{thm:mr-quant}. For (\ref{eq:alpha2}),
we will apply Theorem \ref{thm:mat-prod} to bound $\log \E\norm{\prod_{r=s}^t \lrr{\Id - \frac{2\eta_r}{N}X_rX_r^\cT}}_2^2$.
By the second moment expression $\E[X_rX_r^\cT] = XX^\cT + \sigma_r^2 N \Id_{n \times n}$, we find that
\[
\norm{\Ee{\Id - \frac{2\eta_r}{N}X_rX_r^\cT}}_2 = 1 - 2\eta_r\sigma_r^2.
\]
Moreover, by Lemma \ref{lem:noise-moments}, for some constant $C > 0$ we have
\begin{align*}
  &  \Ee{\norm{\Id - \frac{2\eta_r}{N}X_rX_r^\cT - \Ee{\Id - \frac{2\eta_r}{N}X_rX_r^\cT}}_2^2}\\
  &=\frac{4\eta_r^2}{N^2}\Ee{\norm{X_rX_r^\cT - \Ee{X_rX_r^\cT}}_2^2}\\
  &\leq  \frac{4\eta_r^2}{N^2}\left[\frac{1}{B_t} \Tr\Big(X (N \diag(X^\cT X) - X^\cT X) X^\cT\Big)
      + 2 \sigma_t^2\frac{n + 1}{B_t / N} \Tr(X X^\cT)  + \sigma_t^4 \frac{N n (n + 1)}{B_t / N}\right]\\
  &  \leq C \eta_r^2.
\end{align*}
Applying Theorem \ref{thm:mat-prod} with $a_r = 1 - 2 \eta_r \sigma_r^2$ and $b_r^2 = \frac{C\eta_r^2}{a_r^2}$,
we find that
\begin{align*}
\log \E\norm{\prod_{r=s}^t \lrr{\Id - \frac{2\eta_r}{N}X_rX_r^\cT}}_2^2
&\leq \sum_{r = s}^t \frac{C\eta_r^2}{a_r^2} + 2 \log\lrr{\prod_{r=s}^t \lrr{1 - 2\eta_r\sigma_r^2}}\\
&\leq \sum_{r = s}^t \frac{C\eta_r^2}{a_r^2} - 4 \sum_{r=s}^t \eta_r\sigma_r^2.
\end{align*}
Because $\eta_r = \Theta(r^{-x})$ and $\sigma_r^2 = \Theta(r^{-y})$, we obtain (\ref{eq:alpha2})
with $\alpha = x + y$, $C_1 = \sum_{r = 0}^\infty \frac{C\eta_r^2}{a_r^2}$, and some $C_2 > 0$,
where $C_1$ is finite because $x > \frac{1}{2}$.  Next, (\ref{eq:beta1}) holds for $\beta_1 = - y - 1$
by (\ref{eq:gauss-sgd-xi-diff}).  Finally, it remains to bound
\[
\eta_t^2 \Tr\left[\Id \circ \Var(\E[W_t] X_t X_t^\cT - Y_t X_t^\cT\Big)\right]
\]
to verify (\ref{eq:beta2}).  Again using \ref{lem:noise-moments} and noting that $W_{\text{min}} X = Y$,
we find
\begin{align*}
  \eta_t^2 &\Tr\left[\Id \circ \Var(\E[W_t] X_t X_t^\cT - Y_t X_t^\cT)\right]\\
  &= \eta_t^2 \Tr\Big(\frac{1}{B_t} \E[W_t] X (N\diag(X^\cT X) - X^\cT X) X^\cT \E[W_t]^\cT\\
  &\phantom{==} + 2\sigma_t^2 \frac{n + 1}{B_t / N} \E[W_t] XX^\cT \E[W_t]^\cT + (\sigma_t^2\frac{N}{B_t} \Tr(XX^\cT) + \sigma_t^4 N \frac{n + 1}{B_t / N}) \E[W_t] \E[W_t]^\cT\Big)\\
  &\phantom{==} - 2 \eta_t^2 \Tr\Big(\frac{1}{B_t} Y(N \diag(X^\cT X) - X^\cT X) X^\cT \E[W_t]^\cT + \sigma_t^2 \frac{n + 1}{B_t / N} Y X^\cT \E[W_t]^\cT\Big)\\
  &\phantom{==} + \eta_t^2 \Tr\Big(\frac{1}{B_t} Y(N \diag(X^\cT X) - X^\cT X) Y^\cT + \sigma_t^2 N Y Y^\cT\Big)\\
  &\leq C \eta_t^2 ( \sigma_t^2 + \norm{\Delta_t}_F^2)
\end{align*}
for some $C > 0$ and $\Delta_t := \E[W_t] - W_\text{min}$.  Define also $\Delta_t' := \E[W_t] - W_t^*$
so that, exactly as in Proposition \ref{prop:mr-moments}, we have
\[
\Delta_{t+1}' = \Delta_t'\lrr{\Id - \frac{2\eta_t}{N}\Ee{X_tX_t^\cT}} + \frac{2}{N}\Xi_t^*,\qquad \Delta_{t}':=\Ee{W_t-W_t^*}.
\]
Since $\norm{\Xi_t^*}_F = O(t^{-y-1})$ and we already saw that
\[
\norm{\Id - \frac{2\eta_t}{N}\Ee{X_tX_t^\cT}}_2 = 1- 2\eta_t\sigma_t^2,
\]
we may use the single sided decay estimates of Lemma \ref{lem:rec-matrix-rate} to conclude that
$\norm{\Delta_t'}_F = O(t^{x-1})$.  This implies that 
\[
\|\Delta_t\|_F \leq \|\Delta_t'\|_F + \norm{W_t^*-W_{\min}}_F = O(t^{x-1}) + \Theta(t^{-y}) = \Theta(t^{-y})
\]
since we assumed that $x+y < 1$. Therefore, we obtain
\begin{align*}
  \eta_t^2 &\Tr\left[\Id \circ \Var(\E[W_t] X_t X_t^\cT - Y_t X_t^\cT)\right]\leq C \eta_t^2  (\sigma_t^2 + \Theta(t^{-2y}))  = \Theta(t^{-2x-y}),
\end{align*}
showing that condition \eqref{eq:beta2} holds with $\beta_2 = 2x+y$.  We have thus verified all
of the conditions of Theorem \ref{thm:mr-quant}, whose application completes the proof. \hfill $\square$

\section{Analysis of random projection augmentations} \label{sec:proj-analysis}

In this section, we give a full analysis of the random projection augmentations
presented in Section \ref{sec:projections}.  In this setting, we have a preconditioning
matrix $\Pi_t = Q_t \wPi_t Q_t^\cT \in \RR^{n \times n}$, where
$\wPi_t$ is a projection matrix and $Q_t$ are Haar random orthogonal matrices.
Define the normalized trace of the projection matrix by 
\begin{equation} \label{eq:norm-trace}
\gamma_{t} := \frac{\Tr(\Pi_t)}{n}.
\end{equation}
We consider the augmentation given by
\[
X_t = \Pi_t X \qquad \text{and} \qquad Y_t = Y.
\]
In this setting, we first record the values of the lower order moments of the augmented
data matrices, with proofs deferred to Section \ref{sec:moment-proof}

\begin{lemma} \label{lem:data-moments}
We have that
\begin{align} \label{eq:mom1}
  \E[X_t] &= \gamma_t X\\ \label{eq:mom2}
  \E[Y_t X_t^\cT] &= \gamma_t Y X^\cT\\ \label{eq:mom3}
  \E[X_t X_t^\cT] &= \frac{\gamma_t (\gamma_t + 1/n - 2/n^2)}{1 + 1/n - 2/n^2} X X^\cT + \frac{\gamma_t (1 - \gamma_t)}{n (1 + 1/n - 2/n^2)} \|X\|_F^2 \Id\\ \label{eq:mom4}
  \E[X_t X_t^\cT X_t] &= \frac{\gamma_t^2 + \gamma_t (1/n - 2/n^2)}{1 + 1/n - 2/n^2} XX^\cT X + \frac{\gamma_t (1 - \gamma_t)}{1 + 1/n - 2/n^2} \frac{\|X\|_F^2}{n} X\\ \label{eq:mom5}
  \E[\|X_t X_t^\cT\|_F^2] &= \frac{(\gamma_t^2 + \gamma_t (1/n - 2/n^2)) \|XX^\cT\|_F^2 + \gamma_t (1 - \gamma_t) \frac{1}{n} \|X\|_F^4}{1 + 1/n - 2/n^2}.
\end{align}
\end{lemma}
We now compute the proxy loss and its optima as follows.

\begin{lemma} \label{lem:proj-proxy-loss}
The proxy loss for randomly rotated projections with normalized trace $\gamma_t$ given by
(\ref{eq:norm-trace}) is
\[
\oL_t(W) = \frac{1}{N}\|Y - \gamma_t WX\|_F^2 + \frac{1}{N} \gamma_t (1 - \gamma_t) \|X\|_F^2 \|W\|_F^2
+ \frac{\gamma_t(1 - \gamma_t)}{N} \frac{1/n - 2/n^2}{1 + 1/n - 2/n^2} (\|WX\|_F^2 + \frac{1}{n} \|X\|_F^2 \|W\|_F^2).
\]
It has proxy optima
\[
W_t^* = YX^\cT \Big(\frac{\gamma_t + 1/n - 2/n^2}{1 + 1/n - 2/n^2} XX^\cT + \frac{1 - \gamma_t}{1 + 1/n - 2/n^2} \frac{\|X\|_F^2}{n}\Id\Big)^{-1}.
\]
\end{lemma}
\begin{proof}
Applying (\ref{eq:proxy-opt}), we find that
\[
\oL_t(W) = \frac{1}{N} \E[\|Y_t - W X_t\|_F^2] = \frac{1}{N} \E[\|Y_t\|_F^2] - \frac{2}{N}\E[\Tr(X_t^\cT W^\cT Y_t)] + \frac{1}{N} \E[\Tr(W^\cT W X_t X_t^\cT)].
\]
Applying Lemma \ref{lem:data-moments}, we conclude that
\begin{align*}
\oL_t(W) &= \frac{1}{N} \|Y\|_F^2 - \frac{2 \gamma_t}{N} \Tr(X W^\cT Y) + \frac{1}{N} \frac{\gamma_t (\gamma_t + 1/n - 2/n^2)}{1 + 1/n - 2/n^2} \|WX\|_F^2 + \frac{1}{N} \frac{\gamma_t (1 - \gamma_t)}{n (1 + 1/n - 2/n^2)} \|X\|_F^2 \|W\|_F^2\\
&= \frac{1}{N}\|Y - \gamma_t WX\|_F^2 + \frac{1}{N} \gamma_t (1 - \gamma_t) \|X\|_F^2 \|W\|_F^2
+ \frac{\gamma_t(1 - \gamma_t)}{N} \frac{1/n - 2/n^2}{1 + 1/n - 2/n^2} (\|WX\|_F^2 + \frac{1}{n} \|X\|_F^2 \|W\|_F^2).
\end{align*}
The formula for $W_t^*$ then results from Lemma \ref{lem:data-moments} and the general
formula (\ref{eq:min-norm-opt}).
\end{proof}

\subsection{Proof of Lemma \ref{lem:data-moments}} \label{sec:moment-proof}

We apply the Weingarten calculus \citep{collins2006integration} to compute integrals
of polynomial functions of the matrix entries of a Haar orthogonal matrix. Because
each matrix entry of the expectations in Lemma \ref{lem:data-moments} is such a
polynomial, this will allow us to compute the relevant expectations.  The main result
we use is the following on polynomials of degree at most $4$ and its corollary.

\begin{prop}[Corollary 3.4 of \cite{collins2006integration}] \label{prop:wein}
For an $n \times n$ orthogonal matrix $Q$ drawn from the Haar measure,
we have that
\begin{align} \label{eq:wein1}
\E[Q_{i_1 j_1} Q_{i_{2} j_{2}}] &= \frac{1}{n} \delta_{i_1 i_2} \delta_{j_1 j_2} \\ \nonumber
\E[Q_{i_1 j_1} Q_{i_{2} j_{2}} Q_{i_{3} j_{3}} Q_{i_{4} j_{4}}] &= \frac{-1}{n(n - 1)(n + 2)} \Big(\delta_{i_1 i_2} \delta_{i_3 i_4} + \delta_{i_1 i_3} \delta_{i_2 i_4} + \delta_{i_1 i_4} \delta_{i_2 i_3}\Big)
  \Big(\delta_{j_1 j_2} \delta_{j_3 j_4} + \delta_{j_1 j_3} \delta_{j_2 j_4} + \delta_{j_1 j_4} \delta_{j_2 j_3}\Big)\\ \label{eq:wein2}
  &\phantom{=} + \frac{1}{n(n - 1)} \Big(\delta_{i_1 i_2} \delta_{i_3 i_4} \delta_{j_1 j_2} \delta_{j_3 j_4}
  + \delta_{i_1 i_3} \delta_{i_2 i_4} \delta_{j_1 j_3} \delta_{j_2 j_4} + \delta_{i_1 i_4} \delta_{i_2 i_3} \delta_{j_1 j_4} \delta_{j_2 j_3} \Big),
\end{align}
where $\delta_{ab}$ denotes the Kronecker delta function $\delta_{ab} = \bI\{a = b\}$.
\end{prop}

\begin{corr} \label{corr:matrix-exp}
For matrices $A, B, C \in \RR^{n \times n}$ and an $n \times n$ Haar random orthogonal matrix $Q$,
we have
\begin{align} \label{eq:matrix-exp1}
\E[ Q A Q^\cT] &= \frac{1}{n} \Tr(A) \cdot \Id\\ \nonumber
\E[ Q A Q^\cT B Q C Q^\cT]_{ij} &= - \frac{1}{n(n - 1)(n + 2)} \Big(\Tr(A) \Tr(C) + \Tr(AC^\cT + AC)\Big)
  \Big(\delta_{ij} \Tr(B) + B_{ij} + B_{ji}\Big)\\ \label{eq:matrix-exp2}
  &\phantom{===}+ \frac{1}{n(n - 1)} \Big(\Tr(A) \Tr(C) B_{ij} + \Tr(AC^\cT) B_{ji} + \delta_{ij} \Tr(AC) \Tr(B)\Big).
\end{align}
\end{corr}
\begin{proof}
For (\ref{eq:matrix-exp1}), notice by (\ref{eq:wein1}) that 
\[
\E[ Q A Q^\cT ]_{ij} = \sum_{a, b = 1}^n \E[ Q_{ia} A_{ab} Q_{jb} ] = \sum_{a, b = 1}^n \delta_{ab}\delta_{ij} \frac{1}{n} A_{ab} = \delta_{ij} \frac{1}{n} \Tr(A).
\]
For (\ref{eq:matrix-exp2}), notice by (\ref{eq:wein2}) that
\begin{align*}
\E[Q A Q^\cT B Q C Q^\cT]_{ij} &= \sum_{a, b, c, d, e, f = 1}^n \E[Q_{ia} A_{ab} Q_{cb} B_{cd} Q_{de} C_{ef} Q_{jf}]\\
  &= -\frac{1}{n(n - 1)(n + 2)} \Big(\sum_{a, e = 1}^n A_{aa} C_{ee} + \sum_{a, b = 1}^n A_{ab} C_{ab} + \sum_{a, b = 1}^n A_{ab} C_{ba}\Big)
  \Big(\sum_{c = 1}^n \delta_{ij} B_{cc} + B_{ij} + B_{ji}\Big)\\
  &\phantom{==}+ \frac{1}{n(n - 1)} \Big(\sum_{a, e = 1}^n A_{aa} C_{ee} B_{ij} + \sum_{a, b = 1}^n A_{ab} C_{ab} B_{ji} + \delta_{ij} \sum_{a, b = 1}^n A_{ab} C_{ba} \sum_{c= 1}^n B_{cc}\Big),
\end{align*}
which when simplified gives the desired conclusion.
\end{proof}

We now compute each lower order moment; in each computation, let $Q$ denote a
random $n \times n$ orthogonal matrix drawn from the Haar measure. Claims (\ref{eq:mom1})
and (\ref{eq:mom2}) follow from Corollary \ref{corr:matrix-exp} and the facts that
\[
\E[X_t] = \E[Q \wPi_t Q^\cT] X \qquad \text{and} \qquad \E[Y_t X_t^\cT] = Y \E[ Q \wPi_t Q^\cT ] X.
\]
Claims (\ref{eq:mom3}) and (\ref{eq:mom4}) follow from Corollary \ref{corr:matrix-exp} and
the facts that
\[
\E[X_t X_t^\cT] = \E[Q \wPi_t Q^\cT XX^\cT Q \wPi_t Q^\cT] \qquad \text{and} \qquad
\E[X_t X_t^\cT X_t] = \E[ Q \wPi_t Q^\cT XX^\cT Q \wPi_t^2 Q^\cT] X.
\]
Finally, (\ref{eq:mom5}) follows from Corollary \ref{corr:matrix-exp} and the fact that
\[
\E[\|X_t X_t^\cT]_F^2] = \E[\Tr(Q \wPi_t Q^\cT XX^\cT Q \wPi_t^2 Q^\cT X X^\cT Q \wPi_t Q^\cT)]
= \E[\Tr(\wPi_t Q^\cT XX^\cT Q \wPi_t^2 Q^\cT X X^\cT Q \wPi_t)]].
\]
This completes the proof of Lemma \ref{lem:data-moments}.

\subsection{Proof of Theorem \ref{thm:proj-main}}

We first show convergence.  By Lemma \ref{lem:data-moments}, we find that
$V_\parallel = \RR^n$.  Furthermore, because $\gamma_t \to 1$, we have that
$W_\infty^* = \lim_{t \to \infty} W_t^* = W_{\min}$. It therefore suffices to
verify the conditions of Theorem \ref{thm:mr-main}. First, notice that
\begin{align*}
\|\Xi_t^*\|_F &= \frac{|\gamma_t - \gamma_{t + 1}|}{1 + 1/n - 2/n^2}  \left\| YX^\cT
\Big(\frac{\gamma_{t + 1} + 1/n - 2/n^2}{1 + 1/n - 2/n^2} XX^\cT + \frac{1 - \gamma_{t + 1}}{1 + 1/n - 2/n^2} \frac{\|X\|_F^2}{n} \Id\Big)^{-1}\right.\\
&\phantom{==========}\left. \Big(XX^\cT - \frac{\|X\|_F^2}{n} \Id\Big)
\Big(\frac{\gamma_t + 1/n - 2/n^2}{1 + 1/n - 2/n^2} XX^\cT + \frac{1 - \gamma_t}{1 + 1/n - 2/n^2} \frac{\|X\|_F^2}{n} \Id\Big)^{-1}\right\|_F\\
&\leq \frac{|\gamma_t^{-1} - \gamma_{t + 1}^{-1}|}{1 + 1/n - 2/n^2} \|YX^\cT\|_F \|[XX^\cT]^+\|^2_F \|XX^\cT - \frac{1}{n}\|X\|_F^2\Id \|_F.
\end{align*}
Because $\gamma_t$ are increasing and $\lim_{t \to \infty} \gamma_t = 1$, we find that
(\ref{eq:mr2-main}) holds.  Now, by Lemma \ref{lem:data-moments}, we have
\[
\lambda_{\min}(\E[X_t X_t^\cT]) \geq \frac{\gamma_t (1 - \gamma_t)}{1 + 1/n - 2/n^2} \frac{\|X\|_F^2}{n},
\]
so the first condition in (\ref{eq:proj-mr}) implies (\ref{eq:mr1-main}).
Finally, using Lemma \ref{lem:data-moments} we may compute
\begin{align} \nonumber
\E[\|X_t X_t^\cT &- \E[X_t X_t^\cT]\|_F^2] = \E[\|X_t X_t\|_F^2] - \|\E[X_t X_t^\cT]\|_F^2\\ \nonumber
&= \frac{\gamma_t (1 - \gamma_t) (\gamma_t(1 + \gamma_t) n^4 + (1 + 2 \gamma_t) n^3 - (1 + 4 \gamma_t) n^2 - 4 n + 4)}{(n - 1)^2 (n + 2)^2} \|XX^\cT\|_F^2\\ \nonumber
&\phantom{===}+ \frac{\gamma_t(1 - \gamma_t) ((1 - \gamma_t - \gamma_t^2) n^4 + (1 - 2 \gamma_t) n^3 + (4 \gamma_t - 2) n^2)}{(n - 1)^2 (n + 2)^2} \frac{1}{n} \|X\|_F^4\\ \label{eq:mom4-bound}
&\leq 2 \gamma_t (1 - \gamma_t) \Big(\|XX^\cT\|_F^2 + \frac{1}{n} \|X\|_F^4\Big)
\end{align}
and also
\begin{align} \nonumber
\E[\|Y_t X_t^\cT &- \E[Y_t X_t^\cT]\|_F^2] = \E[\|Y_t X_t^\cT\|_F^2] - \|\E[Y_t X_t^\cT]\|_F^2 \\ \nonumber
&= \frac{\gamma_t (1 - \gamma_t) (1/n - 2/n^2)}{1 + 1/n - 2/n^2} \|YX^\cT\|_F^2  + \frac{\gamma_t (1 - \gamma_t)}{1 + 1/n - 2/n^2}\frac{1}{n} \|X\|_F^2 \|Y\|_F^2\\ \label{eq:mom2-bound}
&\leq \gamma_t (1 - \gamma_t)\Big(\|YX^\cT\|_F^2 + \frac{1}{n} \|X\|_F^2 \|Y\|_F^2\Big).
\end{align}
Combining these bounds with the second condition in (\ref{eq:proj-mr}) gives
(\ref{eq:mr3-main}).  Having verified (\ref{eq:mr2-main}), (\ref{eq:mr1-main}),
and (\ref{eq:mr3-main}), we conclude by Theorem \ref{thm:mr-main} that
$W_t \overset{p} \to W_{\min}$ as desired.

We now obtain rates using Theorem \ref{thm:mr-quant}.  We aim to show that
if $\eta_t = \Theta(t^{-x})$ and $\gamma_t = 1 - \Theta(t^{-y})$ with $x, y > 0$,
$x + y < 1$, and $2x + y > 1$, then for any $\eps > 0$ we have that
\[
t^{\min\{y, \frac{1}{2}x\} - \eps}\|W_t - W_{\min}\|_F \overset{p} \to 0.
\]
We now check the hypotheses of Theorem \ref{thm:mr-quant}.  For (\ref{eq:alpha2}),
by Lemma \ref{lem:data-moments} and (\ref{eq:mom4-bound}), $Y_r = \Id - \frac{2\eta_r}{N} X_r X_r^\cT$
satisfies the hypotheses of Theorem \ref{thm:mat-prod} with
\[
a_r = 1 - \frac{2\eta_r}{N} \frac{\gamma_t (1 - \gamma_t)}{1 + 1/n - 2/n^2} \frac{\|X\|_F^2}{n} \qquad \text{and} \qquad
b_r^2 = \frac{8 \eta_r^2 \gamma_t (1 - \gamma_t) }{a_r^2 N^2} \Big(\|XX^\cT\|_F^2 + \frac{1}{n}\|X\|_F^4\Big).
\]
By Theorem \ref{thm:mat-prod} and the fact that $\eta_t = \Theta(t^{-x})$ and $\gamma_t = 1 - \Theta(t^{-y})$,
we find for some $C_1, C_2 > 0$ that
\begin{align*}
  \log \E\left[\left\|\prod_{r = s}^t (\Id - \frac{2\eta_r}{N} X_r X_r^\cT)\right\|^2_2\right]
  &\leq \sum_{r = s}^t b_r^2 + 2 \sum_{r = s}^t \log(1 - \frac{2 \eta_r}{N(1 + 1/n - 2/n^2)} \gamma_t (1 - \gamma_t) \frac{\|X\|_F^2}{n})\\
  &\leq C_1 - C_2 \int_s^{t + 1} r^{- x - y} dr.
\end{align*}
For (\ref{eq:beta1}), our previous computations show that 
\[
\|\Xi_t^*\|_F \leq \frac{|\gamma_t^{-1} - \gamma_{t + 1}^{-1}|}{1 + 1/n - 2/n^2} \|YX^\cT\|_F \|[XX^\cT]^+\|^2_F \|XX^\cT - \frac{1}{n}\|X\|_F^2\Id \|_F
= O(t^{-y - 1}).
\]
Finally, for (\ref{eq:beta2}), we find by (\ref{eq:mom4-bound}), (\ref{eq:mom2-bound}) and the fact that
$\|\E[W_t]\|_F = O(1)$ that 
\begin{align*}
\eta_t^2& \Tr\left[\Id \circ \Var\Big(\E[W_t] X_t X_t^\cT - Y_t X_t^\cT\Big)\right]
= \eta_t^2 \E\Big[\|\E[W_t](X_t X_t^\cT - \E[X_t X_t^\cT]) - (Y_t X_t^\cT - \E[Y_t X_t^\cT])\|_F^2\Big]\\
&\leq 2 \eta_t^2 \E\Big[\|\E[W_t](X_t X_t^\cT - \E[X_t X_t^\cT])\|_F^2 + \|Y_t X_t^\cT - \E[Y_t X_t^\cT]\|_F^2\Big]\\
&\leq 2 \eta_t^2 \Big(\|\E[W_t]\|_F^2 \E[\|X_t X_t^\cT - \E[X_t X_t^\cT]\|_F^2] + \|Y_t X_t^\cT - \E[Y_t X_t^\cT]\|_F^2\Big)\\
&= O(t^{-2x - y}).
\end{align*}
Noting finally that $\|W_t^* - W_{\text{min}}\|_F = O(1 - \gamma_t) = O(t^{-y})$, we apply Theorem \ref{thm:mr-quant}
with $\alpha = x + y$, $\beta_1 = y + 1$, and $\beta_2 = 2x + y$ to obtain the desired rate.
This concludes the proof of Theorem \ref{thm:proj-main}.\hfill $\square$

\subsection{Experimental validation} \label{sec:proj-exp}

To validate Theorem \ref{thm:proj-main}, we ran augmented GD with random projection
augmentation on $N = 100$ simulated datapoints.  Inputs were i.i.d. Gaussian
vectors in dimension $n = 400$, and outputs in dim $p = 1$ were generated by a random linear
map with i.i.d Gaussian coefficients drawn from $\cN(1, 1)$.  The learning rate followed a
 fixed polynomially decaying schedule
$\eta_t = \frac{0.005}{100} \cdot (\text{batch size}) \cdot (1 + \frac{t}{20})^{- 0.66}$.
Figure \ref{fig:proj-figure} shows MSE and $\|W_{t, \perp}\|_F$ along a single optimization trajectory with
different schedules for the dimension ratio $1 - \gamma_t$ used in random projection augmentation.
Code to generate this figure is provided in \url{supplement.zip} in the supplement. It
ran in 30 minutes on a standard laptop CPU.

\begin{figure}[ht!]
  \centering
  \setlength\tabcolsep{10pt} 
  \begin{tabular}{cc}
    \includegraphics[width=2.5in]{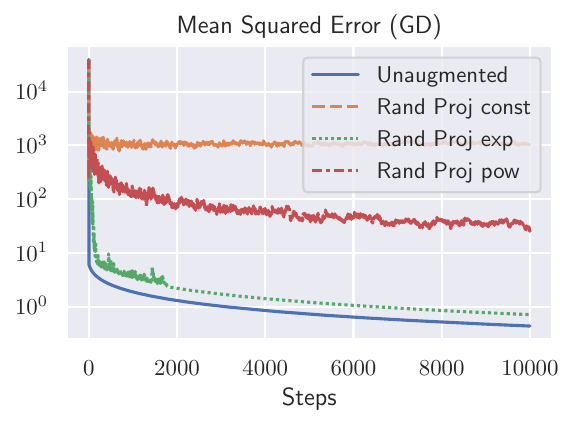} & \includegraphics[width=2.5in]{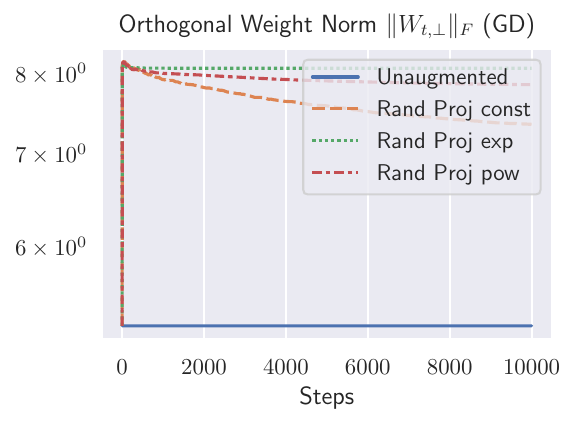}
  \end{tabular}
    \caption{MSE and $\|W_{t, \perp}\|_F$ for optimization trajectories of
    GD with random projection augmentation Jointly scheduling learning
    rate and noise variance to have polynomial decay is necessary
    for optimization to converge to the minimal norm solution $W_{\min}$.
    Rand Proj const, Rand Proj exp, and Rand Proj pow have random projection
    augmentation with $1 - \gamma_t = 0.1, 0.1 e^{-0.02t}, 0.1(1 + \frac{t}{50})^{- 0.33}$, respectively.
    Other experimental details are the same as those described in \S \ref{sec:proj-exp}.
  }
  \label{fig:proj-figure}
\end{figure}

\section{Analysis of SGD} \label{sec:sgd-analysis}

This section gives an analysis of vanilla SGD using our method.
Let us briefly recall the notation. As before, we consider overparameterized linear regression with loss
\[\mathcal L(W;\mathcal D) = \frac{1}{N}\norm{WX-Y}_F^2,\]
where the dataset $\mathcal D$ of size $N$ consists of data matrices $X,Y$ that each have $N$ columns $x_i\in \R^n,y_i\in \R^p$ with $n >N.$ We optimize $\mathcal L(W;\mathcal D)$ by augmented SGD either with or without additive Gaussian noise. In the former case, this means that at each time $t$ we replace $\mathcal D = (X,Y)$ by a random batch $\cB_t = (X_t,Y_t)$ given by a prescribed batch size $B_t = |\cB_t|$ in which each datapoint in $\cB_t$ is chosen uniformly with replacement
from $\cD$, and the resulting data matrices $X_t$ and $Y_t$ are scaled so that $\overline{\cL}_t(W) = \cL(W; \cD)$. Concretely, this means that for the normalizing factor $c_t := \sqrt{N / B_t}$ we have
\begin{equation}\label{eq:SGD-aug}
X_t = c_t X A_t \qquad \text{and} \qquad Y_t = c_t Y A_t,
\end{equation}
where $A_t \in \RR^{N \times B_t}$ has i.i.d. columns $A_{t, i}$ with a single non-zero entry equal to $1$ chosen uniformly at random. In this setting the minimum norm optimum for each $t$ are the same
and given by
\[
W_t^* = W_{\min} = Y X^\cT (X X^\cT)^+,
\]
which coincides with the minimum norm optimum for the unaugmented loss.

\begin{theorem} \label{thm:sgd}
  If the learning rate satisfies $\eta_t \to 0$ and
  \begin{equation} \label{eq:sgd-mr}
    \sum_{t = 0}^\infty \eta_t = \infty,
  \end{equation}
  then for any initialization $W_0$, we have $W_tQ_\parallel \overset{p} \to W_{\min}$. If further we
  have that $\eta_t = \Theta(t^{-x})$ with $0 < x < 1$, then for some $C > 0$ we have
  \[
  e^{C t^{1 - x}} \|W_t Q_\parallel - W_{\min}\|_F \overset{p} \to 0.
  \]
\end{theorem}

Theorem \ref{thm:sgd} recovers the exponential convergence rate for SGD in the
overparametrized settting, which has been previously studied through both empirical
and theoretical means \citep{ma2018power}.  Because $1 \leq B_t \leq N$
for all $t$, it does not affect the asymptotic results in Theorem \ref{thm:sgd}.
In practice, however, the number of optimization steps $t$ is often small enough
that $\frac{B_t}{N}$ is of order $t^{-\alpha}$ for some $\alpha > 0$, meaning the
choice of $B_t$ can affect rates in this non-asymptotic regime. Though we do not
attempt to push our generic analysis to this granularity, this is done in
\cite{ma2018power} to derive optimal batch sizes and learning rates in the
overparametrized setting.

\begin{proof}[Proof of Theorem \ref{thm:sgd}]
In order to apply Theorems \ref{thm:mr} and \ref{thm:mr-quant}, we begin by computing the moments
of $A_t$ as follows.  Recall the notation $\diag(M)$ from Appendix \ref{sec:notations}.

\begin{lemma} \label{lem:a-moments}
For any $Z \in \RR^{N \times N}$, we have that
\[
\E[A_t A_t^\cT] = \frac{B_t}{N} \Id_{N \times N} \qquad \text{and} \qquad
\E[A_t A_t^\cT Z A_t A_t^\cT] = \frac{B_t}{N} \diag(Z) + \frac{B_t (B_t - 1)}{N^2} Z. 
\]
\end{lemma}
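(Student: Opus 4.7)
The plan is to reduce both identities to straightforward expectation calculations over the i.i.d.\ columns of $A_t$. Write $A_t = [A_{t,1} \mid \cdots \mid A_{t,B_t}]$, so that each column equals $e_{J_i}$ for $J_i$ drawn uniformly at random from $\{1,\ldots,N\}$, independently across $i$. Then
\[
A_t A_t^\cT = \sum_{i=1}^{B_t} e_{J_i} e_{J_i}^\cT,
\]
and the two identities reduce to first- and second-moment computations of this sum.

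For the first identity, I would observe that $\E[e_{J_i} e_{J_i}^\cT] = \frac{1}{N}\sum_{k=1}^{N} e_k e_k^\cT = \frac{1}{N}\Id_{N\times N}$, and then sum over the $B_t$ independent columns by linearity to get $\frac{B_t}{N}\Id_{N\times N}$.

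For the second identity, I would expand
\[
A_t A_t^\cT Z A_t A_t^\cT = \sum_{i=1}^{B_t}\sum_{j=1}^{B_t} e_{J_i} e_{J_i}^\cT Z e_{J_j} e_{J_j}^\cT,
\]
and use the key identity $e_{J_i}^\cT Z e_{J_j} = Z_{J_i J_j}$ to rewrite each summand as the scalar $Z_{J_i J_j}$ times the rank-one matrix $e_{J_i} e_{J_j}^\cT$. I would then split the double sum by whether $i=j$ or $i\neq j$. On the diagonal $i=j$, $J_i$ is uniform on $\{1,\ldots,N\}$, so
\[
\E\bigl[Z_{J_i J_i}\, e_{J_i} e_{J_i}^\cT\bigr] = \frac{1}{N}\sum_{k=1}^N Z_{kk}\, e_k e_k^\cT = \frac{1}{N}\diag(Z),
\]
and summing over the $B_t$ diagonal indices yields $\frac{B_t}{N}\diag(Z)$. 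For $i\neq j$, independence of $J_i$ and $J_j$ gives
\[
\E\bigl[Z_{J_i J_j}\, e_{J_i} e_{J_j}^\cT\bigr] = \frac{1}{N^2}\sum_{k,\ell} Z_{k\ell}\, e_k e_\ell^\cT = \frac{1}{N^2} Z,
\]
and there are $B_t(B_t-1)$ such ordered pairs, yielding $\frac{B_t(B_t-1)}{N^2}Z$. Adding the two contributions gives the stated formula.

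There is no real obstacle here: the argument is a direct computation, and the only subtlety is keeping the case split $i=j$ versus $i\neq j$ clean so that the $\diag(Z)$ term is correctly separated out. I would structure the proof to make the two cases visually parallel so it is clear that the asymmetry between the two terms of the final formula comes entirely from the $i=j$ contribution picking only diagonal entries of $Z$ versus the $i\neq j$ contribution seeing all of $Z$.
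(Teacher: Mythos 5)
Your proof is correct and takes essentially the same approach as the paper's: both decompose $A_t A_t^\cT = \sum_{i} A_{t,i} A_{t,i}^\cT$, expand the quadruple product as a double sum, and split by $i=j$ versus $i\neq j$, using independence for the off-diagonal terms. The only cosmetic difference is that you sum over ordered pairs $i\neq j$ (giving $B_t(B_t-1)$ directly) while the paper sums over $i<j$ with a factor of $2$, and you make the scalar identity $e_{J_i}^\cT Z e_{J_j}=Z_{J_iJ_j}$ explicit.
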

\begin{proof}
We have that
\[
\E[A_t A_t^\cT] = \sum_{i = 1}^{B_t} \E[A_{i, t} A_{i, t}^\cT] = \frac{B_t}{N} \Id_{N \times N}. 
\]
Similarly, we find that
\begin{align*}
  \E[A_t A_t^\cT Z A_t A_t^\cT] &= \sum_{i, j = 1}^{B_t} \E[A_{i, t} A_{i, t}^\cT Z A_{j, t} A_{j, t}^\cT]\\
  &= \sum_{i = 1}^{B_t} \E[A_{i, t} A_{i, t}^\cT Z A_{i, t} A_{i, t}^\cT] + 2 \sum_{1 \leq i < j \leq B_t} \E[A_{i, t} A_{i, t}^\cT Z A_{j, t} A_{j, t}^\cT]\\
  &= \frac{B_t}{N} \diag(Z) + \frac{B_t (B_t - 1)}{N^2} Z,
\end{align*}
which completes the proof.
\end{proof}

Let us first check convergence in mean:
\[\E[W_{t}] Q_\parallel \to W_{\min}.\]
To see this, note that Lemma \ref{lem:a-moments} implies 
\[
\E[Y_t X_t^\cT] = Y X^\cT \qquad \E[X_t X_t^\cT] = XX^\cT,
\]
which yields that 
\begin{equation}\label{E:sgd-opt}
W_t^* = Y X^\cT [XX^\cT]^+ = W_{\min}
\end{equation}
for all $t$. We now prove convergence.  Since all $W_t^*$ are equal to $W_{\min}$, we find that $\Xi^*_t = 0$.
By (\ref{eq:exp-rec}) and Lemma \ref{lem:a-moments} we have
\[
\E[W_{t + 1}] - W_{\min} = (\E[W_t] - W_{\min}) \Big(\Id - \frac{2 \eta_t}{N} XX^\cT\Big),
\]
which implies since $\frac{2 \eta_t}{N} < \lambda_{\text{max}}(X X^\cT)^{-1}$ for large $t$
that for some $C > 0$ we have
\begin{multline} \label{eq:delta-bound}
  \|\E[W_t]Q_\parallel - W_{\min}\|_F \leq \|W_0 Q_\parallel - W_{\min}\|_F \prod_{s = 0}^{t - 1} \left\|Q_\parallel - \frac{2 \eta_s}{N} XX^\cT\right\|_2\\
  \leq C \|W_0 Q_\parallel - W_{\min}\|_F \exp\Big(- \sum_{s = 0}^{t - 1} \frac{2 \eta_s}{N} \lambda_{\text{min}, V_\parallel}(XX^\cT)\Big).
\end{multline}
From this we readily conclude using (\ref{eq:sgd-mr}) the desired convergence in mean $\E[W_{t}] Q_\parallel \to W_{\min}$.

Let us now prove that the variance tends to zero. By Proposition \ref{prop:mr-moments}, we find that
$Z_t = \E[(W_t - \E[W_t])^\cT (W_t - \E[W_t])]$ has two-sided decay of type $(\{A_t\}, \{C_t\})$
with 
\[A_t = \frac{2 \eta_t}{N} X_tX_t^\cT,\qquad C_t = \frac{4 \eta_t^2}{N^2} \left[\Id \circ \Var((\E[W_t] X_t  - Y_t) X_t^\cT)\right].\]
To understand the resulting rating of convergence, let us first obtain a bound on $\Tr(C_t)$. To do this, note that for any matrix $A$, we have
\[\tr\lrr{\Id \circ \Var[A]}= \tr\lrr{\Ee{A^\cT A}- \Ee{A}^\cT\Ee{A}}.\]
Moreover, using the definition \eqref{eq:SGD-aug} of the matrix $A_t$ and writing
\[M_t:=\Ee{W_t}X-Y,\]
 we find
\[\lrr{(\Ee{W_t} X_t  - Y_t) X_t^\cT}^\cT(\Ee{W_t} X_t  - Y_t) X_t^\cT = XA_tA_t^\cT M_t^\cT M_t A_tA_t^\cT X^\cT\]
as well as 
\[\Ee{\lrr{(\E[W_t] X_t  - Y_t) X_t^\cT}}^\cT\Ee{(\E[W_t] X_t  - Y_t) X_t^\cT} = X\Ee{A_tA_t^\cT} M_t^\cT M_t \Ee{A_tA_t^\cT} X^\cT.\]
Hence, using the expression from Lemma \ref{lem:a-moments} for the moments of $A_t$ and recalling the scaling factor $c_t=(N/B_t)^{1/2}$, we find 
\[\Tr(C_t)=\frac{4\eta_t^2}{B_t} \Tr\lrr{X\left\{\mathrm{diag}\lrr{M_t^\cT M_t} -\frac{1}{N}M_t^\cT M_t\right\}X^\cT}.\]
Next, writing 
\[\Delta_t := \E[W_t] - W_{\min}\]
and recalling \eqref{E:sgd-opt}, we see that
\[M_t = \Delta_t X.\]
Thus, applying the estimates (\ref{eq:delta-bound}) about exponential convergence of the mean,
we obtain
\begin{equation} \label{eq:sgd-c-bound}
  \Tr(C_t) \leq \frac{8 \eta_t^2}{B_t}\norm{\Delta_t Q_{||}}_2^2\norm{XX^T}_2^2
  \leq C\frac{8 \eta_t^2}{B_t}\norm{XX^T}_2^2\|\Delta_0 Q_\parallel\|_F^2 \exp\Big(- \sum_{s = 0}^{t - 1} \frac{4 \eta_s}{N} \lambda_{\text{min}, V_\parallel}(XX^\cT)\Big).
\end{equation}
Notice now that $Y_r = Q_\parallel - A_r$ satisfies the conditions of Theorem \ref{thm:mat-prod} with
$a_r = 1 - 2 \eta_r \frac{1}{N} \lambda_{\text{min}, V_\parallel}(XX^\cT)$ and
$b_r^2 = \frac{4\eta_r^2}{B_r a_r^2 N} \Tr\Big(X \diag(X^\cT X) X - \frac{1}{N} XX^\cT XX^\cT\Big)$.
By Theorem \ref{thm:mat-prod} we then obtain for any $t > s > 0$ that
\begin{equation} \label{eq:sgd-sq-norm-bound}
  \E\left[\left\|\prod_{r = s + 1}^t (Q_\parallel - A_r) \right\|_2^2 \right]
  \leq e^{\sum_{r = s + 1}^t b_r^2} \prod_{r = s + 1}^t \Big(1 - 2 \eta_r \frac{1}{N} \lambda_{\text{min}, V_\parallel}(XX^\cT)\Big)^2.
\end{equation}
By two-sided decay of $Z_t$, we find by (\ref{eq:sgd-c-bound}), (\ref{eq:sgd-sq-norm-bound}),
and (\ref{eq:two-side-bound}) that
\begin{multline} \label{eq:sec-moment-bound}
  \E[\|W_tQ_\parallel - \E[W_t]Q_\parallel\|^2_F] = \Tr(Q_\parallel Z_{t} Q_\parallel)\\
  \leq e^{- \frac{4}{N} \lambda_{\text{min}, V_\parallel}(XX^\cT) \sum_{s = 0}^{t - 1} \eta_s}
  \frac{\|XX^\cT\|_2^2}{N^2} \|\Delta_0Q_\parallel\|_F^2 C
  \sum_{s = 0}^{t - 1} \frac{8 \eta_s^2}{B_s/N} e^{\frac{4 \eta_s}{N} \lambda_{\text{min}, V_\parallel}(XX^\cT) + \sum_{r = s + 1}^t b_r^2}.
\end{multline}
Since $\eta_s \to 0$, we find that $\eta_s \frac{N}{B_s} e^{\frac{4 \eta_s}{N} \lambda_{\text{min}, V_\parallel}(XX^\cT)}$
is uniformly bounded and that $b_r^2 \leq \frac{4}{N} \lambda_{\text{min}, V_\parallel}(XX^\cT) \eta_r$
for sufficiently large $r$.  We therefore find that for some $C'>0$, 
\[
 \E[\|W_tQ_\parallel - \E[W_t]Q_\parallel\|_F^2] \leq C' \sum_{s = 0}^{t - 1} \eta_s e^{- \frac{4}{N} \lambda_{\text{min}, V_\parallel}(XX^\cT) \sum_{r = 0}^{s} \eta_r},
\]
hence  $\lim_{t \to \infty} \E[\|W_tQ_\parallel - \E[W_t]Q_\parallel\|_F^2] = 0$ by Lemma \ref{lem:sum-bound}.
Combined with the fact that $\E[W_t] Q_\parallel \to W_{\min}$, this implies that
$W_t Q_\parallel \overset{p} \to W_{\min}$.

To obtain a rate of convergence, observe that by (\ref{eq:delta-bound}) and the
fact that $\eta_t = \Theta(t^{-x})$, for some $C_1, C_2 > 0$ we have
\begin{equation} \label{eq:exp-bounds}
\|\E[W_{t}]Q_\parallel - W_{\min}\|_F \leq C_1 \exp\Big(- C_2 t^{1 - x}\Big).
\end{equation}
Similarly, by (\ref{eq:sec-moment-bound}) and the fact that $\frac{\eta_s}{B_s/N} < \infty$ uniformly,
for some $C_3, C_4, C_5 > 0$ we have
\[
\E[\|W_tQ_\parallel - \E[W_t]Q_\parallel\|_F^2] \leq C_3 \exp\Big(- C_4 t^{1 - x}\Big) t^{1 - x}
\]
We conclude by Chebyshev's inequality that for any $a > 0$ we have
\[
\PP\Big(\|W_t Q_\parallel - W_{\min}\|_F
\geq C_1 \exp\Big(- C_2 t^{1 - x}\Big) + a \cdot \sqrt{C_3} t^{\frac{1}{2} - \frac{x}{2}} e^{- C_4 t^{1 - x}/2}\Big)
\leq a^{-2}.
\]
Taking $a = t$, we conclude as desired that for some $C > 0$, we have
\[
e^{C t^{1 - x}} \|W_t Q_\parallel - W_{\min}\|_F \overset{p} \to 0.
\]
This completes the proof of Theorem \ref{thm:sgd}.
\end{proof}

\end{document}